\newcommand{\HLink}[2]{\hyperref[#2]{#1~\ref*{#2}}}
\newcommand{\appref}[1]{\HLink{Appendix}{#1}}
\newcommand{\secref}[1]{\HLink{Section}{#1}}
\newcommand{\propref}[1]{\HLink{Proposition}{#1}}
\newcommand{\figref}[1]{\HLink{Figure}{#1}}
\newcommand{\algref}[1]{\HLink{Algorithm}{#1}}
\newcommand{\tabref}[1]{\HLink{Table}{#1}}
\newcommand{\proposed}{\textsc{DiffeoCFM}\xspace}
\newcommand{\method}[1]{\textsc{#1}}
\newcommand{\dataset}[1]{\textsc{#1}}
\title{Riemannian Flow Matching for Brain Connectivity Matrices via Pullback Geometry}
\author{
  Antoine Collas$^*$,\,
  Ce Ju,\,
  Nicolas Salvy,\,
  Bertrand Thirion \\
  Inria, CEA, Université Paris-Saclay \\
  Palaiseau, France \\
  $^*$\texttt{antoine.collas@inria.fr}
}
\begin{document}

\maketitle

\begin{abstract}
    Generating realistic brain connectivity matrices is key to analyzing population heterogeneity in brain organization, understanding disease, and augmenting data in challenging classification problems.
    Functional connectivity matrices lie in constrained spaces—such as the set of symmetric positive definite or correlation matrices—that can be modeled as Riemannian manifolds.
    However, using Riemannian tools typically requires redefining core operations (geodesics, norms, integration), making generative modeling computationally inefficient.
    In this work, we propose \proposed, an approach that enables conditional flow matching (CFM) on matrix manifolds by exploiting pullback metrics induced by global diffeomorphisms on Euclidean spaces.
    We show that Riemannian CFM with such metrics is equivalent to applying standard CFM after data transformation.
    This equivalence allows efficient vector field learning, and fast sampling with standard ODE solvers.
    We instantiate \proposed with two different settings: the matrix logarithm for covariance matrices and the normalized Cholesky decomposition for correlation matrices.
    We evaluate \proposed on three large-scale fMRI datasets with more than $4600$ scans from $2800$ subjects (ADNI, ABIDE, OASIS‑3) and two EEG motor imagery datasets with over $30000$ trials from $26$ subjects (BNCI2014‑002 and BNCI2015‑001).
    It enables fast training and achieves state-of-the-art performance, all while preserving manifold constraints.
    \newline
    Code: \url{https://github.com/antoinecollas/DiffeoCFM}
\end{abstract}

\section{Introduction}
\label{sec:intro}

\paragraph{Brain imaging connectivity and Riemannian geometry}
Modern neuroimaging analyses map brain functional signals toward \emph{connectivity matrices}—covariance or correlation estimates between regions of interest or sensor channels~\cite{ju2025spd}.
These structured representations are used in many applications, such as motor imagery classification~\cite{barachant2011multiclass,lotte2018review, kobler2022spd, li2024spdim}, brain age prediction~\cite{david2019riemannian, engemann2022reusable, mellot_harmonizing_2023, mellot2024geodesic}, or disease diagnosis~\cite{dadi2019benchmarking}.
They are central to the analysis of signals from many neuroimaging modalities, such as functional magnetic resonance imaging (fMRI), electroencephalography (EEG), and magnetoencephalography (MEG).
Brain connectivity matrices are \emph{symmetric positive definite} (SPD) or lie in the open \emph{elliptope} of full-rank correlation matrices, and thus belong to smooth matrix manifolds—$\Spos$ and its submanifold $\corr$—defined respectively by
\begin{equation}
    \Spos = \left\{ \bSigma \in \R^{d \times d} \,\middle|\, \bSigma^\top = \bSigma, \bSigma \succ 0 \right\} \text{and } 
    \corr = \left\{ \bSigma \in \Spos \,\middle|\, \diag(\bSigma) = \mathbf{1} \right\}.
\end{equation}
Several Riemannian metrics have been proposed to analyse these data such as the affine-invariant metric~\cite{skovgaard1984riemannian, Bouchard2024}, the log-Euclidean metric~\cite{arsigny2007geometric}, or the Euclidean-Cholesky metric~\cite{thanwerdas2022riemannian}.
They enable the definition of Riemannian operations such as geodesics, exponential maps, and parallel transport, which extend standard Euclidean operations to the manifold.
Building on these manifolds, a wide range of machine learning algorithms have been designed for classification~\cite{barachant2011multiclass}, regression~\cite{david2019riemannian, engemann2022reusable, mellot2024geodesic}, or dimension reduction~\cite{fletcher2004principal}.
%


\paragraph{Deep generative models on manifolds}
Deep generative modeling has rapidly advanced with the success of generative adversarial networks (GANs)\cite{goodfellow2020generative}, variational autoencoders (VAEs)\cite{kingma2014auto}, autoregressive models such as PixelRNN~\cite{van2016pixel}, normalizing flows~\cite{dinh2017density, kingma2018glow}, large-scale autoregressive models~\cite{brown2020language}, and more recently, diffusion models~\cite{ho2020denoising,song2021scorebased} and flow matching~\cite{liu2023flow,lipman2023flow,albergo2023building,lipman2024flow}.
These last two methods learn to synthesize data by estimating continuous-time stochastic (diffusion) or deterministic (flow) dynamics that interpolate between a source and a target distribution.
Diffusion models do so by reversing a noise injection process, while flow matching aligns a learned time-dependent vector field to the velocity field of a straight-line path.
Both paradigms offer good scalability and state-of-the-art results on diverse domains, from natural images~\cite{esser2024scaling}, to speech~\cite{le2023voicebox}, and protein structure generation~\cite{huguetsequence}.
More recent efforts have extended generative modeling to Riemannian manifolds.
Riemannian score-based generative modeling~\cite{de2022riemannian} and Riemannian flow matching~\cite{chen2024flow} provide general formulations for sampling on manifolds.
The latter learns time-dependent vector fields on the tangent bundle that match the velocity of geodesic paths between source and target distributions.
These approaches offer principled tools for generating data with geometric constraints, by treating the data domain as a manifold.
These novel frameworks have already been applied to several applications such as materials discovery~\cite{miller2024flowmm}, robotics~\cite{ding2024fast}, or climate science~\cite{chen2024flow}.

\paragraph{Contributions}  
In this work, we address a novel and challenging issue in neuroimaging: generating realistic brain connectivity data from actual human neuroimaging data. This challenge stems from the unique structure of brain connectivity data, which is represented by SPDs or correlation matrices that lie on non-Euclidean manifolds.
Moreover, neuroimaging datasets typically have limited sample sizes, making realistic data generation particularly valuable.
%
%
We propose a novel Riemannian flow matching method, \proposed, based on pullback geometry, defined by a global diffeomorphism $\phi: \M \to E$, where $E$ is Euclidean space. 
This method is an efficient framework that \emph{guarantees manifold-constrained outputs by construction while avoiding computationally expensive operations specific to SPD or correlation manifolds.}
We instantiate \proposed with two diffeomorphisms tailored to different neuroimaging data: the matrix logarithm for SPD matrices and the normalized Cholesky decomposition for correlation matrices.  
Finally, we evaluate \proposed on three large-scale fMRI datasets (ADNI, ABIDE, and OASIS-3; over $4600$ scans from $2800$ subjects) and two EEG motor imagery datasets (BNCI2014-002 and BNCI2015-001; $30000$ trials from $26$ subjects), demonstrating that \proposed is capable of generating realistic, neurophysiologically meaningful samples, as validated by multiple statistical metrics.


\paragraph{Notations}  
$\M$ is a smooth manifold with tangent space $T_x\M$ and Riemannian norm $\|\cdot\|_x$.  
We write $\gamma$ for geodesics, and $\dot{\gamma}(t) \triangleq \frac{d}{dt} \gamma(t)$ for curve's speed.  
Let $\phi : \M \to E$ be a global diffeomorphism to Euclidean space $E$, with differential $\D\phi(x) : T_x\M \mapsto E$ and inverse $(\D\phi(x))^{-1}$.  
The pushforward of a distribution $p$ on $\M$ is $\phi_\# p$, defined via $\int f\, d(\phi_\# p) = \int (f \circ \phi)\, dp$ for any $f$ continuous on $E$.
We denote $x \mid y \sim p(\cdot \mid y)$ for conditional sampling with label $y \in \mathcal{Y}$.  
Let $\Sym$ be the space of symmetric $d \times d$ matrices, $\Spos$ the SPD cone, and $\corr$ the set of correlation matrices.
Let $\lt_d^1$ be the set of lower-triangular matrices with unit diagonal.  
We define $\vecl: \Sym \mapsto \mathbb{R}^{d(d-1)/2}$ as the operator extracting strictly lower-triangular entries, and $\veclt$ for the full lower-triangular part (diagonal included), with $\sqrt{2}$ scaling off-diagonal terms. 

\section{Background}
\label{sec:background}

\paragraph{Pullback Manifolds with Euclidean Spaces}
Let $\M$ be a smooth manifold, $E$ a Euclidean space, and $\phi: \M \to E$ a global diffeomorphism (a smooth bijection with a smooth inverse).
The diffeomorphism $\phi$ induces a Riemannian metric on $\M$ by pulling back the Euclidean metric $g_E$ on $E$
\begin{equation}
    (\phi^*g_E)_x(\xi, \eta) \triangleq g_E\left( \D\phi(x)[\xi], \D\phi(x)[\eta] \right), \quad \xi, \eta \in T_x\M\, .
\end{equation}
This metric induces a Riemannian norm on the tangent space $T_x\M$ at each point $x \in \M$: $\Vert \xi \Vert_x = \sqrt{(\phi^*g_E)_x(\xi, \xi)}$.
The pair $(\M, \phi^*g_E)$ is then called a \emph{pullback manifold}, and $\phi^*g_E$ is the \emph{pullback metric} of $g_E$ by $\phi$.
The pullback manifold is geodesically complete and admits globally unique geodesics~\cite[Chap.~7]{thanwerdas2022riemannian}.
Moreover, many Riemannian operations reduce to simple computations in $E$.
Given two points $x_0, x_1 \in \M$, the geodesic $\gamma : [0, 1] \to \M$ connecting $x_0$ to $x_1$ and its associated riemannian distance are given by
\begin{equation}
    \gamma(t) = \phi^{-1} \left( (1 - t)\phi(x_0) + t \phi(x_1) \right) \text{ and } d_\M(x_0, x_1) = \| \phi(x_0) - \phi(x_1) \|_{E}\, 
\end{equation}
i.e., the pullbacks of the Euclidean straight line and distance in $E$ joining $\phi(x_0)$ and $\phi(x_1)$.
The Fréchet mean~\cite{grove1973conjugate} of a set of points $\{x^{(n)}\}_{n=1}^N \subset \M$ with respect to the Riemannian distance is
\begin{equation}
    \label{eq:frechet_mean}
    \bar{x} \triangleq \argmin_{x \in \M} \sum_{n=1}^N d_\M(x, x^{(n)})^2 = \phi^{-1} \left( \frac{1}{N} \sum_{n=1}^N \phi(x^{(n)}) \right)\, .
\end{equation}

\paragraph{Riemannian CFM}
CFM~\cite{liu2023flow,lipman2023flow,albergo2023building,lipman2024flow} was recently extended to Riemannian manifolds~\cite{chen2024flow}, providing a principled framework for learning time-dependent vector fields that transport samples between probability distributions defined on such spaces.
Given a manifold $\M$, a vector field $u_\theta^\M : [0,1] \times \M \times \mathcal{Y} \to T\M$ is trained to match the velocity of geodesics connecting samples from source and target distributions. The Riemannian CFM loss is defined as
\begin{equation}
    \label{eq:loss_Riemannian_CFM}
    \loss(\theta) \triangleq \E_{t,\, y,\, x_0 \mid y,\, x_1 \mid y} 
    \left\Vert u_\theta^\M(t, \gamma(t), y) - \dot{\gamma}(t) \right\Vert_{\gamma(t)}^2\, ,
\end{equation}
where $ y \in \mathcal{Y} $ is a condition variable (such as a disease status), and $ x_0 | y \sim p(\cdot | y) $, $ x_1 | y \sim q(\cdot | y) $ and $t \sim \mathcal{U}([0,1])$.
Hence, this loss is computationally intensive, as it requires evaluating geodesics $\gamma(t)$ between $x_0$ and $x_1$, their derivatives $\dot{\gamma}(t)$, and Riemannian norms $\Vert \cdot \Vert_{\gamma(t)}$.
Once trained, new samples on $\M$ are generated by solving the Riemannian ODE
\begin{equation}
    \label{eq:ODE_Riemannian_CFM}
    \dot x(t) = u_\theta^\M(t, x(t), y), \quad x(0) = x_0 \sim p(\cdot \mid y)
\end{equation}
and returning $x(1)$ as a sample from the learned approximation of $q(\cdot \mid y)$.

\begin{figure}
  \centering
  \includegraphics[width=\linewidth]{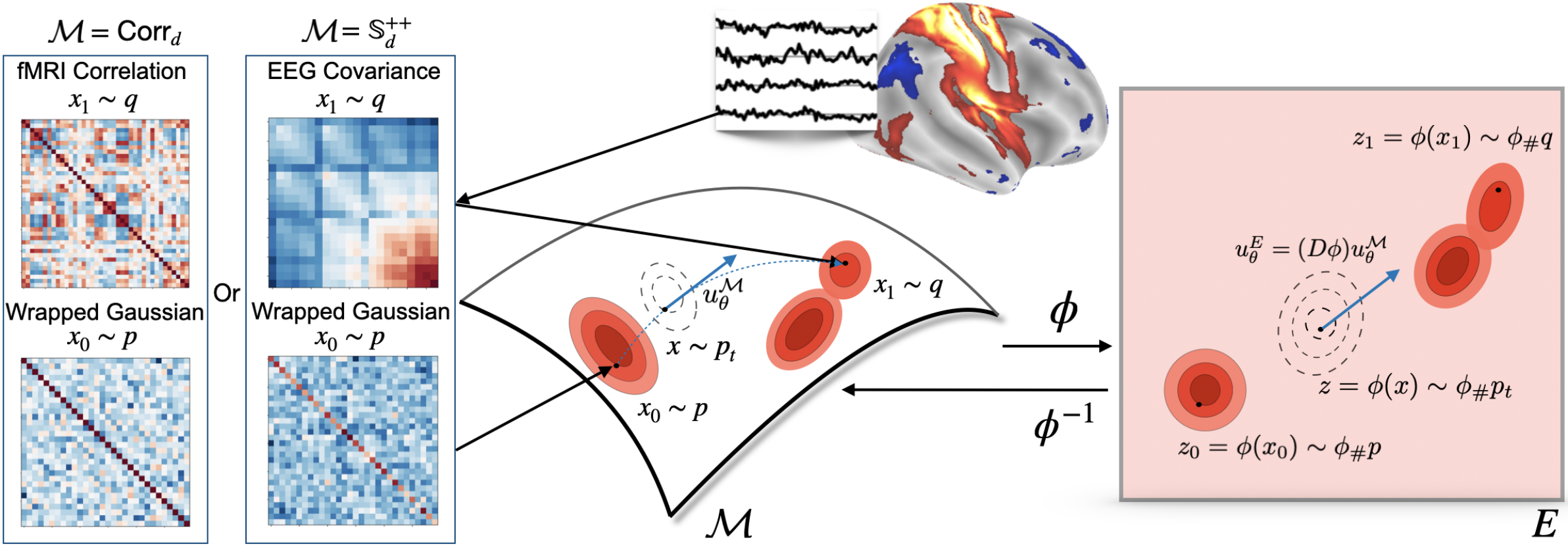}
    \caption{
        \textbf{Overview of \proposed.}  
        \proposed is a principled framework for \emph{deep generative modeling} on matrix manifolds.  
        It reformulates Riemannian conditional flow matching (CFM) on a pullback manifold $(\M, \phi^* g_E)$ as \emph{conventional CFM in Euclidean space} $E$, via a global diffeomorphism $\phi \colon \M \to E$.  
        The reformulation preserves geometry in two ways:  
        \emph{(i)} the learned Euclidean vector field $u_\theta^E$ satisfies~\eqref{eq:pullback_vector_field}, ensuring that training $u_\theta^E$ is equivalent to training $u_\theta^\M$;  
        \emph{(ii)} the flow trajectories obey $\phi(x(t)) = z(t)$,  
        so that integrating in $E$ and pulling back via $\phi^{-1}$ yields the same samples as integrating directly on $\M$.  
        This allows both training and sampling to be carried out efficiently in $E$, while remaining equivalent to operating on $\M$.  
        On the left, fMRI correlation and EEG spatial covariance matrices lie on $\M = \corr$ and $\M = \Spos$, respectively.  
        These matrices are mapped to $E$ through $\phi$, a time-dependent vector field $u_\theta^E$ is trained in $E$, and integration is performed in $E$ before mapping back via $\phi^{-1}$ to yield connectivity manifold constrained matrices.
    }
    \label{fig:concept_fig}
\end{figure}

\section{\proposed: Conditional Flow Matching on Pullback Manifolds}
\label{sec:diffeocfm}

Pullback manifolds provide a natural setting for using Riemannian CFM~\cite{chen2024flow} in practical generative modeling tasks.
Indeed, when $\M$ is equipped with a pullback metric $\phi^* g_E$ induced by a global diffeomorphism $\phi: \M \to E$, both training and sampling can be performed entirely in the Euclidean space $E$.
We prove that this equivalence is \emph{exact}: the learned vector field in $E$ corresponds to one on $\M$, and the ODE solutions in $E$ map to those on $\M$ via $\phi$.
This result motivates \proposed, a conditional flow matching framework that performs all computations in $E$, avoiding costly geometric operations—such as computing geodesics, Riemannian norms, or manifold integration—while guaranteeing manifold-constrained outputs.
An overview of the method is shown in \figref{fig:concept_fig}.

\subsection{Training and sampling with a diffeomorphism}
\label{sec:overview_diffeocfm}

\paragraph{Training}
Rather than learning a vector field $u_\theta^\M$ directly on the manifold $\M$, \proposed trains its Euclidean counterpart $u_\theta^E$ via the pullback:
\begin{equation}
    \label{eq:pullback_vector_field}
    u_\theta^E(t, z, y) \triangleq \D\phi(\phi^{-1}(z)) \left( u_\theta^\M(t, \phi^{-1}(z), y) \right).
\end{equation}
Indeed, in this case, the loss function~\eqref{eq:loss_Riemannian_CFM} can be expressed in terms of the Euclidean vector field $ u_\theta^E $ as shown in the following proposition.

\begin{proposition}[Riemannian CFM loss function on pullback manifolds]
    \label{thm:equiv_losses}
    The Riemannian CFM loss~\eqref{eq:loss_Riemannian_CFM} can be re-expressed in terms of the Euclidean vector field $ u_\theta^E $~\eqref{eq:pullback_vector_field} as
    \begin{equation*}
        \loss(\theta) = \E_{t,\, y,\, z_0 \mid y,\, z_1 \mid y} 
        \left\Vert u_\theta^E \left(t, (1 - t) z_0 + t z_1, y \right) - (z_1 - z_0) \right\Vert_E^2 \, ,
    \end{equation*}
    where $z_0 | y \sim \phi_\# p(\cdot | y)$ and $z_1 | y \sim \phi_\# q(\cdot | y)$.
\end{proposition}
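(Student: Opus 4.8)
The plan is to show that the integrand of the Riemannian loss~\eqref{eq:loss_Riemannian_CFM} equals the integrand of the claimed Euclidean loss pointwise, after the substitution $z_0 = \phi(x_0)$ and $z_1 = \phi(x_1)$, and then to transport this change of variables through to the expectation using the defining property of the pushforward. The single structural fact I would lean on throughout is that, by the very definition of the pullback metric, the differential $\D\phi(x)$ is a linear isometry from $(T_x\M, \|\cdot\|_x)$ onto $(E, \|\cdot\|_E)$; that is, $\|\xi\|_x = \|\D\phi(x)[\xi]\|_E$ for every $\xi \in T_x\M$. This turns every Riemannian norm appearing in the loss into a Euclidean one as soon as the vector is pushed through $\D\phi$.

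Concretely, I would fix $y$, $x_0$, $x_1$, $t$ and write $z_i = \phi(x_i)$ and $z(t) = (1-t)z_0 + t z_1$. By the closed form of the geodesic on a pullback manifold recalled in \secref{sec:background}, $\gamma(t) = \phi^{-1}(z(t))$, hence $\phi(\gamma(t)) = z(t)$. Differentiating in $t$ and using the inverse function theorem, namely $\D\phi^{-1}(z(t)) = (\D\phi(\gamma(t)))^{-1}$, the geodesic speed is $\dot\gamma(t) = (\D\phi(\gamma(t)))^{-1}[z_1 - z_0]$. I then apply the isometry identity at the point $\gamma(t)$ to the tangent vector $u_\theta^\M(t, \gamma(t), y) - \dot\gamma(t)$, which moves the whole integrand into $E$:
\[
\|u_\theta^\M(t,\gamma(t),y) - \dot\gamma(t)\|_{\gamma(t)}^2 = \|\D\phi(\gamma(t))[u_\theta^\M(t,\gamma(t),y)] - \D\phi(\gamma(t))[\dot\gamma(t)]\|_E^2 .
\]
The first term is exactly $u_\theta^E(t, z(t), y)$ by the definition~\eqref{eq:pullback_vector_field} of the pullback vector field (recall $\phi^{-1}(z(t)) = \gamma(t)$), while the second collapses to $z_1 - z_0$ since $\D\phi(\gamma(t))$ cancels its inverse. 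This yields the pointwise integrand $\|u_\theta^E(t, (1-t)z_0 + t z_1, y) - (z_1-z_0)\|_E^2$.

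It then remains to rewrite the expectation. Since $x_0 \mid y \sim p(\cdot \mid y)$ and $z_0 = \phi(x_0)$, the law of $z_0$ given $y$ is the pushforward $\phi_\# p(\cdot \mid y)$ by the defining property of the pushforward recalled in the notations, and likewise $z_1 \mid y \sim \phi_\# q(\cdot \mid y)$; the laws of $t$ and $y$ are untouched. Substituting the pointwise identity and relabeling the integration variables gives the stated form. The only real subtlety—the step I would treat most carefully—is the pairing of the isometry identity $\|\xi\|_x = \|\D\phi(x)[\xi]\|_E$ with the inverse-function-theorem identity $\D\phi^{-1}(z) = (\D\phi(\phi^{-1}(z)))^{-1}$, since these are what let the Riemannian norm and the geodesic speed simplify simultaneously; everything else is direct substitution. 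No regularity beyond $\phi$ being a global diffeomorphism (so that $\D\phi(x)$ is a bounded linear isomorphism at every $x$) is required.
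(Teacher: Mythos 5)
Your proposal is correct and follows essentially the same route as the paper's own proof: both use the closed-form pullback geodesic $\gamma(t)=\phi^{-1}((1-t)z_0+tz_1)$, compute $\dot\gamma(t)$ via the chain rule together with $\D\phi^{-1}(z)=(\D\phi(\phi^{-1}(z)))^{-1}$, exploit the isometry $\Vert\xi\Vert_x=\Vert\D\phi(x)[\xi]\Vert_E$ to convert the Riemannian integrand into the Euclidean one, and finish by transporting the expectation through the pushforward. The only cosmetic difference is that you invoke the isometry directly on the difference vector while the paper first applies $\D\phi(\gamma(t))$ and then takes norms, which is the same computation in a different order.
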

It should be noted that this new loss function is much simpler to compute than the original Riemannian CFM one, as it does not require computing geodesics, their derivatives, or Riemannian norms.

\paragraph{Sampling} 
\proposed generates new samples by solving the ODE
\begin{equation}
    \label{eq:euc_ode}
    \dot z(t) = u_\theta^E\left(t, z(t), y\right), \quad
    z(0) = z_0 \sim \phi_\# p(\cdot \mid y)\, .
\end{equation}
Despite this simple form, the procedure is fully Riemannian: the generated trajectory corresponds exactly to a manifold-valued solution under $\phi^{-1}$.
Indeed, the following proposition states the equivalence of the solutions of the ODEs~\eqref{eq:ODE_Riemannian_CFM} and~\eqref{eq:euc_ode}.
\begin{proposition}[Equivalence of ODE solutions]
    \label{thm:ode_equiv}
    The solution $x(t)$ to~\eqref{eq:ODE_Riemannian_CFM} satisfies
    \begin{equation*}
        x(t) = \phi^{-1}(z(t)) \quad \text{for all } t \in [0,1]\, ,
    \end{equation*}
    where $z(t)$ is the solution of the ODE~\eqref{eq:euc_ode} with initial condition $z_0 = \phi(x_0)$.
\end{proposition}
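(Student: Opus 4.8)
The plan is to verify that the pulled-back Euclidean trajectory solves the Riemannian ODE \eqref{eq:ODE_Riemannian_CFM} with the matching initial condition, and then invoke uniqueness of solutions. Concretely, I would introduce the candidate curve $\tilde{x}(t) \triangleq \phi^{-1}(z(t))$, where $z$ solves the Euclidean ODE \eqref{eq:euc_ode} with $z(0) = \phi(x_0)$, and show that $\tilde{x}$ satisfies \eqref{eq:ODE_Riemannian_CFM} exactly. Since $x$ and $\tilde{x}$ then solve the same initial value problem, they must coincide, which is precisely the claim.

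First I would differentiate $\tilde{x}(t) = \phi^{-1}(z(t))$ via the chain rule, giving $\dot{\tilde{x}}(t) = \D(\phi^{-1})(z(t))[\dot z(t)]$. Substituting the Euclidean ODE \eqref{eq:euc_ode} together with the definition \eqref{eq:pullback_vector_field} of $u_\theta^E$, and using $\tilde{x}(t) = \phi^{-1}(z(t))$, this becomes
\begin{equation*}
    \dot{\tilde{x}}(t) = \D(\phi^{-1})(z(t))\Big[\D\phi(\tilde{x}(t))\big(u_\theta^\M(t, \tilde{x}(t), y)\big)\Big].
\end{equation*}

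The key step is the cancellation of the two differentials. Differentiating the identity $\phi \circ \phi^{-1} = \mathrm{id}_E$ at $z(t)$ yields $\D\phi(\tilde{x}(t)) \circ \D(\phi^{-1})(z(t)) = \mathrm{id}_E$, so that $\D(\phi^{-1})(z(t)) = \big(\D\phi(\tilde{x}(t))\big)^{-1}$ as mutually inverse linear maps between $E$ and $T_{\tilde{x}(t)}\M$. Applying this inverse to the bracketed expression collapses the composition and leaves $\dot{\tilde{x}}(t) = u_\theta^\M(t, \tilde{x}(t), y)$, i.e.\ exactly \eqref{eq:ODE_Riemannian_CFM}. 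The initial condition also matches, since $\tilde{x}(0) = \phi^{-1}(z_0) = \phi^{-1}(\phi(x_0)) = x_0$.

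Finally I would appeal to uniqueness of the solution of the initial value problem \eqref{eq:ODE_Riemannian_CFM} to conclude $x(t) = \tilde{x}(t) = \phi^{-1}(z(t))$ for all $t \in [0,1]$. I do not expect a genuine obstacle; the only points requiring care are the differential-of-the-inverse identity (the infinitesimal form of the inverse function theorem, which relies on $\phi$ being a \emph{global} diffeomorphism so the identity holds along the whole trajectory) and the regularity needed for uniqueness—namely that $u_\theta^\M$ is locally Lipschitz in its state argument. The latter holds because $\phi$ is smooth and $u_\theta^E$ is a smooth network, so the pullback field in \eqref{eq:pullback_vector_field} inherits local Lipschitz continuity and the Picard–Lindelöf hypotheses are met.
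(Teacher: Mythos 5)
Your proposal is correct and follows essentially the same route as the paper's proof: construct the candidate curve $\phi^{-1}(z(t))$, differentiate via the chain rule, cancel the differentials using the identity obtained from differentiating $\phi\circ\phi^{-1}=\mathrm{id}_E$ (the paper uses the equivalent identity $\D\phi^{-1}(\phi(x))\circ\D\phi(x)=\mathrm{Id}_{T_x\M}$), and verify the initial condition. Your only addition is the explicit appeal to Picard--Lindel\"of uniqueness with a Lipschitz justification, which the paper leaves implicit but which is needed to identify the candidate with \emph{the} solution, so this is a small gain in rigor rather than a different argument.
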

The previous result establishes that the Riemannian and ODEs define equivalent flows through the diffeomorphism $\phi$.
In practice, these ODEs are solved numerically using explicit Runge--Kutta integrators. 
The next proposition shows that the equivalence also holds at the discrete level: applying the same Runge--Kutta scheme in $E$ or on $\M$ yields iterates related by $\phi$.
\begin{proposition}[Equivalence of Runge--Kutta iterates]
    \label{prop:rk_equiv}
    Let $\{x_\ell\}$ be the iterates produced on $\M$ by an explicit Riemannian Runge--Kutta scheme applied to the ODE~\eqref{eq:ODE_Riemannian_CFM}.
    Then, the iterates are
    \begin{equation*}
        x_\ell = \phi^{-1}\left(z_\ell\right) \quad \text{for all } \ell \in \mathbb{N}\, ,
    \end{equation*}    
    where $\{z_\ell\}$ are the iterates obtained by applying the same scheme (same coefficients and step size) to the ODE~\eqref{eq:euc_ode} with initial condition $z_0 = \phi(x_0)$.
\end{proposition}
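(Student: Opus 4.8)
The plan is to exploit the fact that $\phi$ is a Riemannian isometry from $(\M, \phi^* g_E)$ onto Euclidean $E$, so that every manifold operation appearing in a Riemannian Runge--Kutta (RK) step is conjugate under $\phi$ to its trivial Euclidean counterpart. First I would record the two closed-form operations that such a scheme uses on a pullback manifold. Since geodesics are $\gamma(t) = \phi^{-1}((1-t)\phi(x_0) + t\phi(x_1))$, the exponential map is $\exp_x(\xi) = \phi^{-1}(\phi(x) + \D\phi(x)[\xi])$; and because $\phi$ is an isometry, the Levi--Civita connection of $\phi^* g_E$ is the pullback of the flat connection on $E$, whose parallel transport is the identity, so that $P_{x \to y}(\xi) = (\D\phi(y))^{-1}(\D\phi(x)[\xi])$. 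Equivalently, $\phi \circ \exp_x = (\,\cdot\, + \phi(x)) \circ \D\phi(x)$ and $\D\phi(y) \circ P_{x\to y} = \D\phi(x)$, mirroring the fact that in $E$ the exponential is vector addition and transport is trivial.

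Next I would fix the formulation of the explicit Riemannian RK step on $\M$: with Butcher data $(a_{ij}, b_i, c_i)$ and step $h$, the stage points are $Y_i = \exp_{x_\ell}(h \sum_{j<i} a_{ij} \tilde K_j)$, the stage slopes $K_i = u_\theta^\M(t_\ell + c_i h, Y_i, y) \in T_{Y_i}\M$ are transported to the base point as $\tilde K_i = P_{Y_i \to x_\ell}(K_i) \in T_{x_\ell}\M$, and the update is $x_{\ell+1} = \exp_{x_\ell}(h \sum_i b_i \tilde K_i)$. The same scheme applied to $\dot z = u_\theta^E$ in $E$ reads $\hat Y_i = z_\ell + h\sum_{j<i} a_{ij}\hat K_j$, $\hat K_i = u_\theta^E(t_\ell + c_i h, \hat Y_i, y)$, and $z_{\ell+1} = z_\ell + h\sum_i b_i \hat K_i$.

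Then I would run a double induction. Assuming $z_\ell = \phi(x_\ell)$ (outer induction on $\ell$, with base case $z_0 = \phi(x_0)$), I would show by an inner induction on the stage index $i$ that $\phi(Y_i) = \hat Y_i$ and $\D\phi(x_\ell)[\tilde K_i] = \hat K_i$. Applying $\phi$ to the stage-point formula and using the exponential identity gives $\phi(Y_i) = \phi(x_\ell) + h\sum_{j<i} a_{ij}\,\D\phi(x_\ell)[\tilde K_j]$, which equals $\hat Y_i$ by the inner hypothesis. For the slopes, the transport identity yields $\D\phi(x_\ell)[\tilde K_i] = \D\phi(Y_i)[K_i]$, and the definition of the pullback vector field~\eqref{eq:pullback_vector_field} identifies this with $u_\theta^E(t_\ell + c_i h, \phi(Y_i), y) = \hat K_i$. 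Applying $\phi$ to the update and using the exponential identity once more gives $\phi(x_{\ell+1}) = z_\ell + h\sum_i b_i \hat K_i = z_{\ell+1}$, closing the outer induction and hence $x_\ell = \phi^{-1}(z_\ell)$.

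The verifications above are immediate once the closed forms are in place; the only genuine subtlety is that, unlike Euclidean RK, a manifold RK step is not canonically defined—it requires a choice of how to move stage slopes between tangent spaces. The crux is therefore to make precise the sense in which ``the same scheme'' is applied on $\M$ and in $E$: the natural reading is that every base-point-plus-tangent operation is realized by the exponential map and every inter-stage combination of slopes is mediated by parallel transport. Because $\phi$ is an isometry, both operations commute with $\phi$ exactly, so the geometry introduces no discretization mismatch, which is precisely what forces the two families of iterates to coincide under $\phi^{-1}$. I would finally remark that this is the step-wise analogue of the continuous statement in \propref{thm:ode_equiv}, with which it is fully consistent.
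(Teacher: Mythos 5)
Your proof is correct and rests on exactly the same mechanism as the paper's: the closed forms $\exp_x(\xi)=\phi^{-1}\left(\phi(x)+\D\phi(x)[\xi]\right)$ and $\PT_{Y\to x}(\xi)=\left(\D\phi(x)\right)^{-1}\left(\D\phi(Y)[\xi]\right)$ on the pullback manifold, combined with the defining relation~\eqref{eq:pullback_vector_field}, so that every operation in a Riemannian Runge--Kutta step commutes with $\phi$, and an induction on the iterates. The genuine difference is generality: the paper writes out the four stages of RK4 explicitly, verifying $\phi(x_A)=z_\ell+\frac{h}{2}k_1^E$ and so on stage by stage, and merely remarks that ``similar proofs can be done'' for Euler and midpoint; you instead handle an arbitrary explicit Butcher tableau $(a_{ij},b_i,c_i)$ via a double induction (outer on the step index $\ell$, inner on the stage index $i$, establishing $\phi(Y_i)=\hat Y_i$ and $\D\phi(x_\ell)[\tilde K_i]=\hat K_i$). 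Your version therefore proves the proposition as actually stated---for \emph{any} explicit scheme, including the embedded Runge--Kutta pairs underlying the dopri5 solver used in the experiments---where the paper's computation covers RK4 and delegates the rest to analogy. Your closing observation is also the right one and is only implicit in the paper: a manifold RK step is not canonically defined, and ``the same scheme'' must be read as realizing base-point-plus-tangent updates via $\exp_{x_\ell}$ and inter-stage slope combinations via parallel transport back to $T_{x_\ell}\M$; this is precisely the convention the paper fixes in its list of ``key relationships'' before the stage computations, and it is what makes the conjugation by $\phi$ exact at the discrete level, consistent with the continuous statement of \propref{thm:ode_equiv}.
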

Overall, the training and sampling algorithms for \proposed are summarized in \algref{alg:train_diffeocfm} and \ref{alg:sampling_diffeocfm}, respectively.

\begin{figure}[H]
\centering
\begin{minipage}[t]{0.505\textwidth}
    \vspace{0pt}  
    \small
    \begin{algorithm}[H]
        \KwInput{step size $h$; samplers $\pi_\mathcal{Y}$, $\phi_\# p$, $\phi_\# q$}
        \KwOutput{Trained parameters $\theta^\star$}
        Initialize $\theta$\;
        \While{not converged}{
            Sample $y \sim \pi_{\mathcal{Y}}$, $t \sim \mathcal{U}([0,1])$\;
            Sample $z_0 \sim \phi_\# p(\cdot \mid y)$, $z_1 \sim \phi_\# q(\cdot \mid y)$\;
            $\loss \leftarrow \|u_\theta^E(t, (1 - t)z_0 + t z_1, y) - (z_1 - z_0)\|_E^2$\;
            $\theta \leftarrow \text{optimizer-step}(\loss)$\;
        }
        \caption{\proposed: Training}
        \label{alg:train_diffeocfm}
    \end{algorithm}
\end{minipage}
\hfill
\begin{minipage}[t]{0.485\textwidth}
    \vspace{0pt}
    \small
    \begin{algorithm}[H]
        \KwInput{label $y$; steps $L$; step size $h$; trained $\theta^\star$}
        \KwOutput{Generated sample $x$}
        Sample $z_0 \sim \phi_\# p(\cdot \mid y)$\;
        \For{$\ell=0$ \KwTo $L-1$}{
            $z_{\ell+1} \leftarrow \text{Runge-Kutta-step}(u_{\theta^\star}^{E}, z_\ell, y, h)$\;
        }
        $x \leftarrow \phi^{-1}(z_L)$\;
        \vspace{20pt}
        \caption{\proposed: Sampling}
        \label{alg:sampling_diffeocfm}
    \end{algorithm}
\end{minipage}
\vspace{-0.5em}
\end{figure}

\subsection{Diffeomorphic Embeddings for Generative Modeling of Brain Connectivity Matrices}
\label{sec:diffeomorphisms}

To generate brain connectivity matrices, we map $\Spos$ and $\corr$ to Euclidean spaces via global diffeomorphisms: the matrix logarithm for SPD matrices and the normalized Cholesky map for correlation matrices.
These maps define pullback metrics on the tangent spaces $T_\bSigma \Spos = \Sym$ and $T_\bSigma \corr = \{ \bXi \in \Sym \mid \diag(\bXi) = 0 \}$, of respective dimensions $d(d+1)/2$ and $d(d-1)/2$. This allows \proposed to perform efficient, geometry-aware generation for both matrix types.
Note that we selected these two diffeomorphisms for their simplicity and ease of implementation. However, other choices are possible. For correlation matrices, alternative parameterizations are discussed in~\cite[Chapter 7]{thanwerdas2022riemannian}. For SPD matrices, one can use the Cholesky factor with a logarithm applied to the diagonal, leading to the log-Cholesky metric~\cite{lin2019riemannian}, which also defines a global diffeomorphism.

\paragraph{Covariance matrices: Log–Euclidean metric}
On $\Spos$, we define the global diffeomorphism $\phi_\Spos : \Spos \mapsto \R^{d(d+1)/2}$ by composing the matrix logarithm with the vectorization map $\veclt$:
\begin{equation}
    \label{eq:diffeo_spd}
    \phi_\Spos(\bSigma) = \veclt(\log(\bSigma)) \quad \text{and} \quad \phi_\Spos^{-1}(\bEta) = \exp(\veclt^{-1}(\bEta)),
\end{equation}
where $\log$ and $\exp$ denote the matrix logarithm and exponential, respectively.
This mapping induces the \emph{Log–Euclidean metric} on $\Spos$ by pulling back the standard Euclidean inner product from $\R^{d(d+1)/2}$: $g_{\bSigma, \Spos}(\bXi, \bEta) = \tr(\D\log(\bSigma)[\bXi] \D\log(\bSigma)[\bEta])$,
as introduced in~\citet{arsigny2007geometric}.

\paragraph{Correlation matrices: Euclidean–Cholesky metric}
On $\corr$, the matrix logarithm is no longer a diffeomorphism. 
More generally, defining a log-based diffeomorphism is nontrivial—for example, the Riemannian logarithm associated with the affine-invariant metric does not admit a closed-form expression in this setting (see \appref{app:affine_invariant_corr}).
Instead, we use the normalized Cholesky map, a global diffeomorphism onto $\lt_d^1$, the space of lower-triangular matrices with unit diagonal:
\begin{equation}
    \nchol(\bSigma) = \diag(\chol(\bSigma))^{-1}\chol(\bSigma),
\end{equation}
where $\chol(\bSigma)$ is the unique Cholesky factor with positive diagonal.
Its inverse is
\begin{equation}
    \nchol^{-1}(\bL) = \bD^{-1/2}\bL\bL^\top\bD^{-1/2}, \quad \text{with} \, \bD = \diag(\bL\bL^\top) \, .
\end{equation}
Then, we define the diffeomorphism $\phi_\corr : \corr \mapsto \R^{d(d-1)/2}$ and its inverse as
\begin{equation}
    \label{eq:diffeo_corr}
    \phi_\corr(\bSigma) = \vecl(\nchol(\bSigma)) \quad \text{and} \quad \phi_\corr^{-1}(\bEta) = \nchol^{-1}(\vecl^{-1}(\bEta))\,.
\end{equation}
This map induces the \emph{Euclidean–Cholesky metric}, a Riemannian metric obtained by pulling back the Euclidean metric from the vector space $\R^{d(d-1)/2}$~\cite{thanwerdas2022riemannian}: $g_{\bSigma, \corr}(\bXi, \bEta) = \D\phi(\bSigma)[\bXi]^\top \D\phi(\bSigma)[\bEta]$.

\paragraph{Label, source, and target distributions}
\label{sec:data_dists}
To train \proposed on labeled brain connectivity data, we define class-conditional source and target distributions in the Euclidean space $E$ induced by the diffeomorphism $\phi$.
Given a dataset $\{(x^{(n)}, y^{(n)})\}_{n=1}^N$ of manifold-valued matrices, we map each sample to $E$ via $z^{(n)} = \phi(x^{(n)})$.
For each class $y$, we fit a Gaussian distribution to the embedded samples $\{z^{(n)} : y^{(n)} = y\}$ to define the class-conditional source distribution $\phi_\# p(\cdot \mid y)$.
The target distribution $\phi_\# q(\cdot \mid y)$ is defined as the empirical distribution over the same class-$y$ samples.

\subsection{Related work}

Denoising Diffusion Probabilistic Models (DDPMs)~\cite{ho2020denoising} and CFM~\cite{lipman2023flow, albergo2023building} have emerged as robust, state-of-the-art generative models in Euclidean spaces.
Several extensions have been proposed to handle data that lie on Riemannian manifolds.
These include Riemannian Score-Based Generative Modeling~\cite{de2022riemannian}, SPD-DDPM~\cite{li2024spd}, and Riemannian CFM~\cite{chen2024flow}.
These methods tailor the loss function and ODE/SDE solvers to the geometry of a specific manifold.
However, this geometric fidelity comes at a cost: manifold-specific operations such as Riemannian gradients, exponential/logarithm maps, or parallel transport must be implemented to compute the loss function and the integration on the manifold.
For instance, SPD-DDPM requires a specialized neural architecture, an SPDNet, which is computationally expensive and significantly slower to train than Euclidean counterparts; see \figref{fig:wrap_training_time} in Appendix for a comparison.
References~\cite{jo2023generative} and~\cite{kapusniak2024metric} explore more general settings by learning bridge matches on arbitrary manifolds or data-driven Riemannian metrics, whereas our approach focuses on Riemannian geometries defined via known pullback diffeomorphisms.
In \cite{falorsi2019reparameterizing}, the authors leverage reparameterisation with normalising flows to learn probability densities on Lie groups (non-Euclidean spaces), and thus can be seen as an early precursor to our approach.

In contrast, CorrGAN~\cite{marti2020corrgan} offers a pragmatic approach to generating correlation matrices by training and sampling entirely in Euclidean space using a GAN.
Geometric constraints, such as positive definiteness and unit diagonal, are enforced via post-processing.
While the method is simple and fast, the post-processing step can change the generated data in unwanted ways and reduce their quality.

The proposed method combines the simplicity of Euclidean training with the rigor of Riemannian geometry by using a diffeomorphism $\phi$ to embed structured matrices, apply standard CFM, and map samples back.
To the best of our knowledge, it is the only method that enables generation of both SPD and correlation matrices within a unified framework.
A detailed comparison of baseline methods—highlighting their assumptions, strengths, and limitations—is provided in Appendix~\secref{app:comparison}.

\section{Empirical benchmarks}
\label{sec:experimental_setup}
These benchmarks were designed to evaluate whether the generated data \emph{(i.)} match the test distribution and \emph{(ii.)} enable classifiers trained on them to generalize to real data. This evaluation uses two human brain imaging modalities: \emph{three fMRI datasets} and \emph{two EEG datasets}.

\subsection{Metrics}
\label{subsec:metrics}

The metrics used are method-agnostic; that is, they are computed solely from the generated samples. They fall into two categories: (i) \emph{quality metrics}, which assess how well the generated data approximate the real data distribution; and (ii) \emph{classification accuracy score (CAS) metrics}, which evaluate the usefulness of generated data for training classifiers.

\paragraph{Quality Metrics}
We assess how closely the generated samples align with the real data distribution using the $\alpha$-precision and $\beta$-recall metrics introduced by \cite{alaa2022faithful}.
We compute these metrics using the \texttt{EvaGeM} library\footnote{\url{https://github.com/nicolassalvy/EvaGeM}}.
In contrast to \cite{alaa2022faithful}, which uses a Deep neural network, \texttt{EvaGeM} employs a One-Class SVM, providing a more stable and hyperparameter-robust estimator across datasets.
These metrics quantify the fidelity (how realistic the generated samples are) and the diversity (how well they span the true data distribution).
We also report the harmonic mean of the two, denoted $\alpha,\beta$-F1.

\paragraph{Classification accuracy score metrics}
We follow the CAS protocol~\cite{ravuri2019classification}, training a classifier on generated samples and evaluating it on real test data.
Specifically, we assess classification utility using a logistic regression with \texttt{liblinear} solver, balances class weights and a 5-fold cross-validation to select the inverse regularization strength $C$ from the grid $\{10^{-4}, 10^{-3}, \dotsc, 10^4\}$.
High scores indicate that the generated data preserves task-relevant information.
On fMRI datasets, the task is disease classification (control vs. patient), while for EEG, it is a two-class motor imagery problem in a brain–computer interface setting.
We report ROC-AUC and F1 scores.

\subsection{Datasets}
\label{sec:datasets}

The experiments include both fMRI and EEG datasets; additional details are provided in~\appref{app:datasets_preprocessing}.

\paragraph{fMRI datasets}
We use three publicly available resting-state fMRI datasets. The ABIDE dataset~\cite{abide} consists of 900 subjects (one scan each), including both neurotypical and autistic individuals with a mean age of 17 years, collected across 19 international sites. The ADNI dataset~\cite{adni} comprises 1,900 scans from 900 older adults (mean age 74), covering normal ageing, mild cognitive impairment, and Alzheimer’s disease. The OASIS-3 dataset~\cite{oasis} includes 1,000 subjects and 1,800 longitudinal sessions collected over 10 years, targeting healthy ageing and neurodegenerative conditions, with a mean participant age of 71 years.
We $z$-score the time series and then compute \emph{correlation matrices} ($\corr$) using the OAS estimator~\cite{dadi2019benchmarking}.
We report mean and standard deviations computed across 10 random train-test splits with subject-level grouping, ensuring that scans from the same subject do not appear in both training and test sets.

\paragraph{EEG datasets}
We use two publicly available EEG motor imagery datasets from the BCI competition. The BNCI2014-002 dataset~\cite{steyrl2016random} includes $13$ subjects performing right-hand and feet imagery, recorded with 15 channels over $1$ session, with $80$ trials per class. The BNCI2015-001 dataset~\cite{faller2012autocalibration} comprises $12$ subjects, $13$ channels over $2$ or $3$ sessions, and $100$ trials per class for the same motor imagery tasks. To process the raw data, we compute \emph{covariance matrices} ($\Spos$) using the OAS estimator~\cite{chen2010shrinkage}, following standard practices~\cite{lotte2018review}.
We report performance on a leave-one-session-out protocol on BNCI2014-002 and on cross-session experiments on BNCI2015-001.
The reported standard deviations are computed over 5 inner splits and averaged over sessions.

\subsection{Baselines}
\label{sec:baselines}

We present the baselines used for comparison with \proposed, including an oracle baseline (\method{Real Data}) that treats test data as if it were generated samples.

\paragraph{\method{Real Data}}
This oracle baseline treats the test set as if it were generated data when computing metrics. It provides an upper bound on the achievable performance, both for quality and CAS metrics, showing the best any generative model could hope to match.

\paragraph{\method{DiffeoGauss}} 
Given a diffeomorphic embedding $\phi: \M \mapsto E$, we model each class-conditional distribution $q(\cdot \mid y)$ as the push-forward of a Gaussian $\mathcal{N}(\mu_y, \Sigma_y)$ fitted to the embedded training data $z^{(n)} = \phi(x^{(n)})$.
Samples are drawn in $E$ and mapped back to $\M$ via $\phi^{-1}$, yielding a wrapped Gaussian distribution on the manifold~\cite{pennec2006intrinsic, surrel2025wrapped}.

\paragraph{\method{TriangDDPM} and \method{TriangCFM}}  
These baselines apply generative models to the lower-triangular part of SPD or correlation matrices, a common heuristic in manifold modeling~\citep{marti2020corrgan}.
For fMRI, they use the strictly lower-triangular entries ($\phi : \corr \mapsto \R^{d(d-1)/2}$); for EEG, all lower-triangular entries ($\phi : \Spos \mapsto \R^{d(d+1)/2}$).
\method{TriangDDPM} uses a DDPM~\cite{albergo2023building}, while \method{TriangCFM} trains a vector field using the standard CFM loss.
Since $\phi$ is not a diffeomorphism, generated matrices that do not lie on the manifold are projected back onto it.
In particular, generated matrices are not necessarily positive definite, so we apply a projection to ensure all eigenvalues are at least $\epsilon > 0$ with  $\bSigma_{\text{proj}} = (1 - \alpha)\bSigma + \alpha \bI$; see \appref{app:projection} for more details.
These methods trade geometric fidelity for simplicity, relying on post hoc projections to enforce constraints.

\paragraph{\method{RiemCFM}}
This method applies Riemannian CFM~\cite{chen2024flow} directly on the SPD manifold using the affine-invariant metric~\cite{skovgaard1984riemannian}.
The target conditional vector field is computed analytically from the Riemannian logarithm and exponential maps.
Unlike \method{TriangDDPM}/\method{TriangCFM}, \method{RiemCFM} preserves the intrinsic geometry of $\Spos$ throughout training and sampling, yielding valid SPD matrices at every time step without post-hoc corrections.
However, it requires computing geodesics, their derivatives and Riemannian norms under the affine-invariant metric, making it substantially more computationally expensive than all other presented methods.
The reference implementation\footnote{\url{https://github.com/facebookresearch/riemannian-fm}} focuses exclusively on the SPD manifold under the affine-invariant metric, and does not provide a corresponding construction for correlation matrices.

\paragraph{\proposed\ (proposed)}
We apply the log–Euclidean map $\phi_\Spos$~\eqref{eq:diffeo_spd} for EEG and the normalized Cholesky map $\phi_\corr$~\eqref{eq:diffeo_corr} for fMRI in \algref{alg:train_diffeocfm} and \ref{alg:sampling_diffeocfm}.
These diffeomorphisms allow Euclidean training with CFM while ensuring manifold-valid samples without post-processing.

\paragraph{Deep learning and training/sampling setups}
\method{TriangDDPM}, \method{TriangCFM} and \proposed employ a two-layer MLP with $512$ hidden units, trained using AdamW~\cite{loshchilovdecoupled} with a learning rate of $10^{-3}$ and batch size $64$.
Training runs for $200$ epochs on fMRI and $2000$ epochs on EEG.
\method{RiemCFM} has a 6-layer MLP with $512$ hidden units trained with AdamW (learning rate of $10^{-4}$), as recommended in~\cite{chen2024flow}.
These four methods use the dopri5 method from the \texttt{torchdiffeq}~\cite{torchdiffeq} library for time integration.\\
All experiments were run within 10 hours on a single Nvidia A40 GPU with a 32-cores cpu.

\section{Results}
\label{sec:results}

We report both quantitative results and a neurophysiological plausibility study.
Quantitative comparisons are summarized in \tabref{tab:results}, with additional analysis of projection effects in \tabref{tab:delta_triangcfm}.
Neurophysiological relevance is assessed in \figref{fig:qualitative}, which shows class-conditional fMRI connectomes via Fréchet means and topographic maps of EEG Common Spatial Patterns (CSP) filters.
To complement \tabref{tab:results}, we provide a visual summary of our findings in \figref{fig:f1_vs_time} ( \appref{app:f1_vs_time}).
Together, these results show that \proposed produces realistic, class-conditional samples that preserve key features of brain connectivity.
For a more detailed discussion of the baselines, please refer to \appref{app:comparison}.

\begin{table}[htbp]
    \centering
    \caption{
        \textbf{Performance of generative models on 3 fMRI and 2 EEG datasets, evaluated with quality and Classification Accuracy Score (CAS) metrics.}
        Quality metrics ($\alpha$-precision, $\beta$-recall, and $\alpha,\beta$-F1) assess alignment with the real distribution.
        CAS metrics (ROC-AUC and F1) evaluate downstream performance: a classifier is trained on generated data to predict \emph{disease status} (fMRI) or \emph{motor imagery class} (EEG), and tested on held-out real samples.
        \textcolor{gray}{\textit{Real Data}} rows use real samples to compare training and test distributions, serving as empirical upper bounds.
        The proposed method is denoted \scalebox{0.85}{\colorbox{blue!10}{\strut \proposed}}.
        $\text{mean}\,\pm\,\text{std}$ are reported.
        \textbf{Bold} values denote the best method and any methods that are not significantly worse than it (one-sided paired Wilcoxon signed-rank test, $\alpha=0.05$).
    }
    \label{tab:results}
    \setlength{\tabcolsep}{0.5pt}
    \scriptsize%
    \begin{adjustbox}{width=\linewidth}
    \centering \begin{tabular}{ccccccccc}
    \toprule
     &  & \multicolumn{3}{c}{Quality Metrics} & \multicolumn{2}{c}{CAS Metrics} & \multicolumn{2}{c}{Time (s.)} \\
    \cmidrule(lr){8-9}
    \cmidrule(lr){6-7}
    \cmidrule(lr){3-5}
     &  & $\alpha$-Precision $\uparrow$ & $\beta$-Recall $\uparrow$ & $\alpha$,$\beta$-F1 $\uparrow$ & ROC-AUC $\uparrow$ & F1 $\uparrow$ & Training $\downarrow$ & Sampling $\downarrow$ \\
    Dataset & Method &  &  &  &  &  &  &  \\
    \midrule
    \multirow[c]{5}{*}{\rotatebox{90}{\shortstack{\dataset{ABIDE}}}} & \textcolor{gray}{\textit{Real Data}} & \textcolor{gray}{0.80\,\scalebox{0.7}{± 0.08}} & \textcolor{gray}{0.79\,\scalebox{0.7}{± 0.08}} & \textcolor{gray}{0.79\,\scalebox{0.7}{± 0.03}} & \textcolor{gray}{0.67\,\scalebox{0.7}{± 0.06}} & \textcolor{gray}{0.59\,\scalebox{0.7}{± 0.07}} & \textcolor{gray}{N/A} & \textcolor{gray}{N/A} \\
     & \method{DiffeoGauss} & 0.56\,\scalebox{0.7}{± 0.06} & 0.29\,\scalebox{0.7}{± 0.06} & 0.38\,\scalebox{0.7}{± 0.06} & \textbf{0.66\,\scalebox{0.7}{± 0.04}} & \textbf{0.53\,\scalebox{0.7}{± 0.06}} & \textbf{0.07\,\scalebox{0.7}{± 0.03}} & \textbf{0.06\,\scalebox{0.7}{± 0.00}} \\
     & \method{TriangDDPM} & 0.04\,\scalebox{0.7}{± 0.02} & 0.00\,\scalebox{0.7}{± 0.00} & 0.00\,\scalebox{0.7}{± 0.00} & 0.53\,\scalebox{0.7}{± 0.06} & 0.47\,\scalebox{0.7}{± 0.12} & 33.80\,\scalebox{0.7}{± 1.19} & 0.37\,\scalebox{0.7}{± 0.05} \\
     & \method{TriangCFM} & 0.04\,\scalebox{0.7}{± 0.02} & 0.00\,\scalebox{0.7}{± 0.00} & 0.00\,\scalebox{0.7}{± 0.00} & 0.52\,\scalebox{0.7}{± 0.05} & 0.40\,\scalebox{0.7}{± 0.18} & 48.78\,\scalebox{0.7}{± 1.27} & 0.79\,\scalebox{0.7}{± 0.78} \\
    \rowcolor{blue!10}  & \proposed & \textbf{0.77\,\scalebox{0.7}{± 0.09}} & \textbf{0.48\,\scalebox{0.7}{± 0.07}} & \textbf{0.59\,\scalebox{0.7}{± 0.08}} & \textbf{0.64\,\scalebox{0.7}{± 0.06}} & \textbf{0.58\,\scalebox{0.7}{± 0.07}} & 32.78\,\scalebox{0.7}{± 0.96} & 0.40\,\scalebox{0.7}{± 0.04} \\
    \midrule
    \multirow[c]{5}{*}{\rotatebox{90}{\shortstack{\dataset{ADNI}}}} & \textcolor{gray}{\textit{Real Data}} & \textcolor{gray}{0.91\,\scalebox{0.7}{± 0.03}} & \textcolor{gray}{0.85\,\scalebox{0.7}{± 0.06}} & \textcolor{gray}{0.88\,\scalebox{0.7}{± 0.03}} & \textcolor{gray}{0.62\,\scalebox{0.7}{± 0.05}} & \textcolor{gray}{0.62\,\scalebox{0.7}{± 0.05}} & \textcolor{gray}{N/A} & \textcolor{gray}{N/A} \\
     & \method{DiffeoGauss} & 0.02\,\scalebox{0.7}{± 0.01} & 0.51\,\scalebox{0.7}{± 0.08} & 0.04\,\scalebox{0.7}{± 0.02} & \textbf{0.60\,\scalebox{0.7}{± 0.05}} & 0.29\,\scalebox{0.7}{± 0.13} & \textbf{0.14\,\scalebox{0.7}{± 0.01}} & \textbf{0.18\,\scalebox{0.7}{± 0.01}} \\
     & \method{TriangDDPM} & 0.02\,\scalebox{0.7}{± 0.00} & 0.00\,\scalebox{0.7}{± 0.00} & 0.00\,\scalebox{0.7}{± 0.00} & 0.53\,\scalebox{0.7}{± 0.05} & 0.18\,\scalebox{0.7}{± 0.11} & 90.03\,\scalebox{0.7}{± 2.01} & 0.62\,\scalebox{0.7}{± 0.08} \\
     & \method{TriangCFM} & 0.02\,\scalebox{0.7}{± 0.00} & 0.00\,\scalebox{0.7}{± 0.01} & 0.01\,\scalebox{0.7}{± 0.01} & 0.56\,\scalebox{0.7}{± 0.04} & 0.34\,\scalebox{0.7}{± 0.13} & 87.37\,\scalebox{0.7}{± 2.13} & 0.63\,\scalebox{0.7}{± 0.09} \\
    \rowcolor{blue!10}  & \proposed & \textbf{0.62\,\scalebox{0.7}{± 0.11}} & \textbf{0.77\,\scalebox{0.7}{± 0.02}} & \textbf{0.68\,\scalebox{0.7}{± 0.06}} & \textbf{0.63\,\scalebox{0.7}{± 0.04}} & \textbf{0.47\,\scalebox{0.7}{± 0.10}} & 88.01\,\scalebox{0.7}{± 2.90} & 0.69\,\scalebox{0.7}{± 0.11} \\
    \midrule
    \multirow[c]{5}{*}{\rotatebox{90}{\shortstack{\dataset{OASIS-3}}}} & \textcolor{gray}{\textit{Real Data}} & \textcolor{gray}{0.88\,\scalebox{0.7}{± 0.04}} & \textcolor{gray}{0.87\,\scalebox{0.7}{± 0.03}} & \textcolor{gray}{0.88\,\scalebox{0.7}{± 0.02}} & \textcolor{gray}{0.73\,\scalebox{0.7}{± 0.05}} & \textcolor{gray}{0.63\,\scalebox{0.7}{± 0.06}} & \textcolor{gray}{N/A} & \textcolor{gray}{N/A} \\
     & \method{DiffeoGauss} & 0.51\,\scalebox{0.7}{± 0.04} & 0.30\,\scalebox{0.7}{± 0.04} & 0.38\,\scalebox{0.7}{± 0.04} & \textbf{0.70\,\scalebox{0.7}{± 0.05}} & 0.41\,\scalebox{0.7}{± 0.07} & \textbf{0.10\,\scalebox{0.7}{± 0.01}} & \textbf{0.13\,\scalebox{0.7}{± 0.00}} \\
     & \method{TriangDDPM} & 0.03\,\scalebox{0.7}{± 0.01} & 0.00\,\scalebox{0.7}{± 0.00} & 0.00\,\scalebox{0.7}{± 0.00} & 0.54\,\scalebox{0.7}{± 0.06} & 0.41\,\scalebox{0.7}{± 0.14} & 70.39\,\scalebox{0.7}{± 1.99} & 0.50\,\scalebox{0.7}{± 0.06} \\
     & \method{TriangCFM} & 0.06\,\scalebox{0.7}{± 0.01} & 0.00\,\scalebox{0.7}{± 0.00} & 0.00\,\scalebox{0.7}{± 0.00} & 0.52\,\scalebox{0.7}{± 0.06} & 0.41\,\scalebox{0.7}{± 0.14} & 67.92\,\scalebox{0.7}{± 2.31} & 0.52\,\scalebox{0.7}{± 0.07} \\
    \rowcolor{blue!10}  & \proposed & \textbf{0.60\,\scalebox{0.7}{± 0.05}} & \textbf{0.35\,\scalebox{0.7}{± 0.04}} & \textbf{0.44\,\scalebox{0.7}{± 0.04}} & \textbf{0.67\,\scalebox{0.7}{± 0.06}} & \textbf{0.53\,\scalebox{0.7}{± 0.07}} & 67.83\,\scalebox{0.7}{± 1.83} & 0.57\,\scalebox{0.7}{± 0.05} \\
    \midrule
    \multirow[c]{6}{*}{\rotatebox{90}{\shortstack{\dataset{BNCI}\\ \dataset{2014-002}}}} & \textcolor{gray}{\textit{Real Data}} & \textcolor{gray}{0.70\,\scalebox{0.7}{± 0.05}} & \textcolor{gray}{0.60\,\scalebox{0.7}{± 0.05}} & \textcolor{gray}{0.64\,\scalebox{0.7}{± 0.03}} & \textcolor{gray}{0.83\,\scalebox{0.7}{± 0.01}} & \textcolor{gray}{0.75\,\scalebox{0.7}{± 0.02}} & \textcolor{gray}{N/A} & \textcolor{gray}{N/A} \\
     & \method{DiffeoGauss} & 0.46\,\scalebox{0.7}{± 0.04} & \textbf{0.77\,\scalebox{0.7}{± 0.02}} & 0.57\,\scalebox{0.7}{± 0.03} & 0.80\,\scalebox{0.7}{± 0.02} & \textbf{0.73\,\scalebox{0.7}{± 0.02}} & \textbf{0.06\,\scalebox{0.7}{± 0.01}} & \textbf{0.08\,\scalebox{0.7}{± 0.01}} \\
     & \method{TriangDDPM} & 0.43\,\scalebox{0.7}{± 0.04} & 0.10\,\scalebox{0.7}{± 0.02} & 0.16\,\scalebox{0.7}{± 0.03} & 0.52\,\scalebox{0.7}{± 0.03} & 0.20\,\scalebox{0.7}{± 0.15} & 257.88\,\scalebox{0.7}{± 0.14} & 0.30\,\scalebox{0.7}{± 0.05} \\
     & \method{TriangCFM} & 0.48\,\scalebox{0.7}{± 0.05} & 0.22\,\scalebox{0.7}{± 0.03} & 0.30\,\scalebox{0.7}{± 0.03} & 0.55\,\scalebox{0.7}{± 0.03} & 0.24\,\scalebox{0.7}{± 0.11} & 251.78\,\scalebox{0.7}{± 0.85} & 0.35\,\scalebox{0.7}{± 0.09} \\
     & \method{RiemCFM} & \textbf{0.67\,\scalebox{0.7}{± 0.07}} & 0.62\,\scalebox{0.7}{± 0.06} & \textbf{0.63\,\scalebox{0.7}{± 0.03}} & \textbf{0.81\,\scalebox{0.7}{± 0.02}} & 0.72\,\scalebox{0.7}{± 0.02} & 1983.58\,\scalebox{0.7}{± 0.97} & 5.28\,\scalebox{0.7}{± 0.58} \\
    \rowcolor{blue!10}  & \proposed & 0.62\,\scalebox{0.7}{± 0.04} & 0.63\,\scalebox{0.7}{± 0.04} & 0.62\,\scalebox{0.7}{± 0.02} & \textbf{0.81\,\scalebox{0.7}{± 0.02}} & \textbf{0.74\,\scalebox{0.7}{± 0.02}} & 253.04\,\scalebox{0.7}{± 0.33} & 0.59\,\scalebox{0.7}{± 0.08} \\
    \midrule
    \multirow[c]{6}{*}{\rotatebox{90}{\shortstack{\dataset{BNCI}\\ \dataset{2015-001}}}} & \textcolor{gray}{\textit{Real Data}} & \textcolor{gray}{0.89\,\scalebox{0.7}{± 0.01}} & \textcolor{gray}{0.89\,\scalebox{0.7}{± 0.01}} & \textcolor{gray}{0.89\,\scalebox{0.7}{± 0.00}} & \textcolor{gray}{0.73\,\scalebox{0.7}{± 0.01}} & \textcolor{gray}{0.67\,\scalebox{0.7}{± 0.01}} & \textcolor{gray}{N/A} & \textcolor{gray}{N/A} \\
     & \method{DiffeoGauss} & 0.84\,\scalebox{0.7}{± 0.01} & \textbf{0.90\,\scalebox{0.7}{± 0.01}} & 0.86\,\scalebox{0.7}{± 0.01} & \textbf{0.73\,\scalebox{0.7}{± 0.01}} & \textbf{0.68\,\scalebox{0.7}{± 0.01}} & \textbf{0.07\,\scalebox{0.7}{± 0.01}} & \textbf{0.16\,\scalebox{0.7}{± 0.01}} \\
     & \method{TriangDDPM} & 0.73\,\scalebox{0.7}{± 0.03} & 0.55\,\scalebox{0.7}{± 0.03} & 0.63\,\scalebox{0.7}{± 0.02} & 0.60\,\scalebox{0.7}{± 0.02} & 0.59\,\scalebox{0.7}{± 0.13} & 319.94\,\scalebox{0.7}{± 5.88} & 0.37\,\scalebox{0.7}{± 0.07} \\
     & \method{TriangCFM} & 0.79\,\scalebox{0.7}{± 0.03} & 0.73\,\scalebox{0.7}{± 0.03} & 0.76\,\scalebox{0.7}{± 0.02} & 0.61\,\scalebox{0.7}{± 0.02} & 0.59\,\scalebox{0.7}{± 0.07} & 313.22\,\scalebox{0.7}{± 2.16} & 0.38\,\scalebox{0.7}{± 0.08} \\
     & \method{RiemCFM} & \textbf{0.93\,\scalebox{0.7}{± 0.04}} & 0.84\,\scalebox{0.7}{± 0.05} & 0.88\,\scalebox{0.7}{± 0.01} & \textbf{0.73\,\scalebox{0.7}{± 0.01}} & 0.66\,\scalebox{0.7}{± 0.02} & 2753.93\,\scalebox{0.7}{± 0.31} & 11.02\,\scalebox{0.7}{± 0.59} \\
    \rowcolor{blue!10}  & \proposed & \textbf{0.92\,\scalebox{0.7}{± 0.01}} & 0.86\,\scalebox{0.7}{± 0.02} & \textbf{0.89\,\scalebox{0.7}{± 0.01}} & \textbf{0.73\,\scalebox{0.7}{± 0.01}} & 0.65\,\scalebox{0.7}{± 0.01} & 319.83\,\scalebox{0.7}{± 0.33} & 1.02\,\scalebox{0.7}{± 0.08} \\
    \bottomrule
    \end{tabular}
    
    \end{adjustbox}
\end{table}

\subsection{Quantitative Study}

\paragraph{Quality Metrics}

As shown in \tabref{tab:results}, \proposed consistently matches or outperforms all baseline generative models across datasets in terms of $\alpha,\beta$-F1, establishing itself as the most robust method for aligning with the true data distribution.
It also achieves the highest $\alpha$-precision and $\beta$-recall across the three fMRI datasets.
On EEG datasets, \method{DiffeoGauss} achieves  higher $\beta$-recall but at the cost of much lower $\alpha$-precision, leading to weaker overall $\alpha,\beta$-F1 scores.
In contrast, \method{TriangDDPM} and \method{TriangCFM} perform poorly on all quality metrics after projection, due to a substantial degradation in sample quality.
As detailed in \tabref{tab:delta_triangcfm}, projecting onto the manifold—$\corr$ for fMRI and $\Spos$ for EEG—introduces a significant performance drop.
This is because \method{TriangDDPM} and \method{TriangCFM} generate matrices that visually resemble realistic connectivity patterns but contain negative eigenvalues, making them invalid.
The projection step corrects these matrices but distorts their structure, leading to sharp decreases in $\alpha,\beta$-F1—up to $-0.74$ on \dataset{ADNI} and $-0.76$ on \dataset{OASIS-3}—rendering \method{TriangCFM} impractical for use.
This degradation is also visible in \figref{fig:fMRI_projection}, where post-projection alter the structure of fMRI connectomes (see \appref{app:projection}).
Compared to \method{RiemCFM}, \proposed delivers similar performance on the two EEG datasets, while training $8\times$ faster and sampling $10\times$ faster.
Finally, on EEG datasets, \proposed nearly matches the performance of \method{Real Data} in terms of $\alpha,\beta$-F1, suggesting high sample realism.

\begin{wraptable}{r}{0.45\linewidth}
    \centering
    \vspace{-1em}
    \caption{
        \textbf{
            Impact of projection onto the manifolds $\Spos$ and $\corr$: performance difference $\delta =$ \method{TriangCFM} $-$ \method{TriangCFM} without projections.
        }
        Negative $\Delta$ indicate degraded sample quality after projection onto the manifold.
        It enforces geometric constraints but severely reduces sample fidelity.
    }
    \label{tab:delta_triangcfm}
    \setlength{\tabcolsep}{2pt}
    \scriptsize%
    \renewcommand{\arraystretch}{0.9}
    \begin{adjustbox}{width=\linewidth}
        \centering \begin{tabular}{ccccc}
        \toprule
         Dataset & $\alpha$-precision & $\beta$-recall & $\alpha$,$\beta$-F1 \\
        \midrule
        \dataset{ABIDE} & -0.34 & -0.69 & -0.50 \\
        \dataset{ADNI} & -0.63 & -0.74 & -0.69 \\
        \dataset{OASIS-3} & -0.52 & -0.76 & -0.64 \\
        \dataset{BNCI2014-002} & +0.13 & -0.56 & -0.19 \\
        \dataset{BNCI2015-001} & +0.00 & -0.19 & -0.09 \\
        \bottomrule
        \end{tabular}
    \end{adjustbox}
    \vspace{-1em}
\end{wraptable}

\paragraph{CAS Metrics}
For the CAS metrics, which evaluate downstream predictive performance using ROC-AUC and F1 scores, \proposed consistently achieves strong results, often approaching the performance of \method{Real Data}. It obtains the highest ROC-AUC and F1 scores across all fMRI and EEG datasets. While \method{DiffeoGauss} remains competitive in terms of ROC-AUC, \proposed substantially outperforms it on F1 scores, with absolute gains of +0.05, +0.18, and +0.12 on \dataset{ABIDE}, \dataset{ADNI}, and \dataset{OASIS-3}, respectively.
\method{TriangDDPM} and \method{TriangCFM} perform poorly across both CAS metrics on all datasets. This underperformance reflects the effect of the projection step, which alters generated matrices in ways detrimental to downstream classification.
It is important to note that the classification pipeline requires inputs to be valid SPD matrices (i.e., elements of $\Spos$).
As a result, CAS metrics cannot be computed for \method{TriangDDPM} and \method{TriangCFM} without projection.
For this reason, their performance without projection is not reported in \tabref{tab:delta_triangcfm}.

\paragraph{Plotting of Generated Samples in Real Data Neighborhoods}
To qualitatively assess fidelity, we show~\figref{fig:G2R_ADNI_CN}–\ref{fig:G2R_OASIS3_nonCN} (fMRI) and~\figref{fig:G2R_2014_hand}–\ref{fig:G2R_2015_feet} (EEG) in~\appref{app:generated_samples_in_real_data_neighborhoods} the generated samples closest to real ones in Frobenius distance. 
fMRI results are grouped by control (CN) and patient (non-CN); EEG by motor imagery class.
\proposed reliably populates the neighborhood of real data, capturing class-conditional structure.
We also show \method{TriangCFM} samples \emph{before projection}, which appear realistic but are not SPD.

\subsection{Neurophysiological Plausibility Study}
\label{sec:Neurophysiological_Plausibility_Study}

\paragraph{fMRI Connectome Plotting}
\figref{fig:fmri-means} shows group-level functional connectomes from the ADNI dataset, computed as the Fréchet mean~\eqref{eq:frechet_mean} of correlation matrices conditioned on disease status. For comparison, \figref{fig:fMRI_ABIDE_OASIS3} in \appref{app:fMRI} presents the corresponding group-level connectomes derived from the ABIDE and OASIS3 datasets.
The Fréchet mean is defined with respect to $\phi_\corr$~\eqref{eq:diffeo_corr}, the diffeomorphism used for the generation.
For each class (CN and non-CN), we compare real connectomes (from held-out test subjects) to those generated by \proposed.
In both cases, non-CN subjects exhibit reduced connectivity across hemispheres and between frontal and posterior regions—patterns commonly associated with mild cognitive impairment and Alzheimer’s disease~\cite{dennis2014functional}.

\paragraph{EEG Topographic Map}
\figref{fig:eeg-csp} presents the group-level topographies of the first CSP filter across all subjects, derived from EEG generated by \proposed, alongside those from real EEG recordings in the BNCI2015-001 dataset. Subject-level topographies of the first CSP filter from the same dataset are further detailed in \figref{fig:EEG_sub1-6} (Subjects 1–6) and \figref{fig:EEG_sub7-12} (Subjects 7–12) in \appref{app:EEG}, providing a detailed comparison across individuals.
We visualize CSP spatial filters in the $\alpha$ (8–12 Hz) and $\beta$ (13–30 Hz) bands that distinguish imagined right-hand from feet movements. The filters trained on real and on generated data concentrate on the same contralateral sensorimotor regions, mirroring the close CAS scores between \method{Real data} and \proposed in Table 1. This confirms that the generative model preserves the physiologically relevant information.

\begin{figure}
  \centering

  \begin{subfigure}[t]{\linewidth}
      \centering
      \setlength{\tabcolsep}{1pt}
      \begin{tabular}{cc}
         \includegraphics[width=0.49\linewidth]{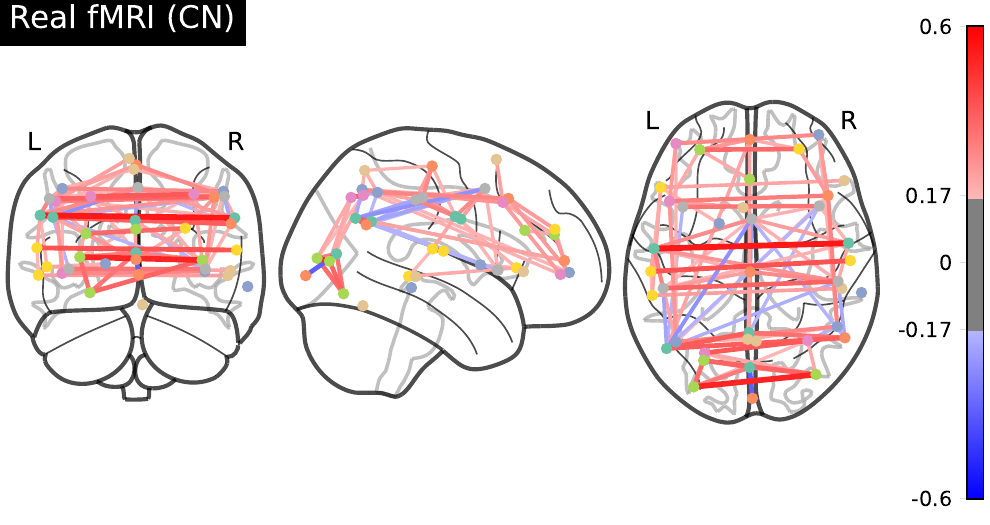} &
         \includegraphics[width=0.49\linewidth]{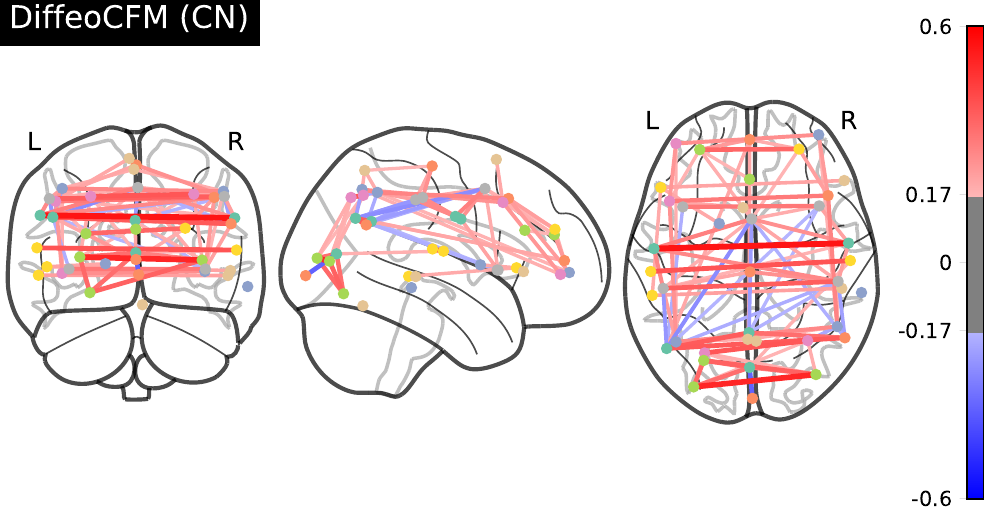} \\
         \includegraphics[width=0.49\linewidth]{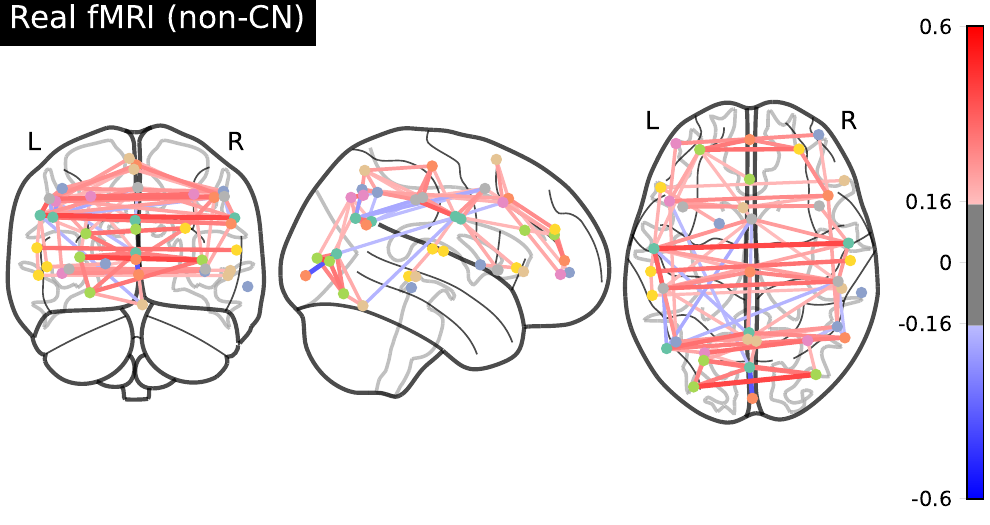} &
         \includegraphics[width=0.49\linewidth]{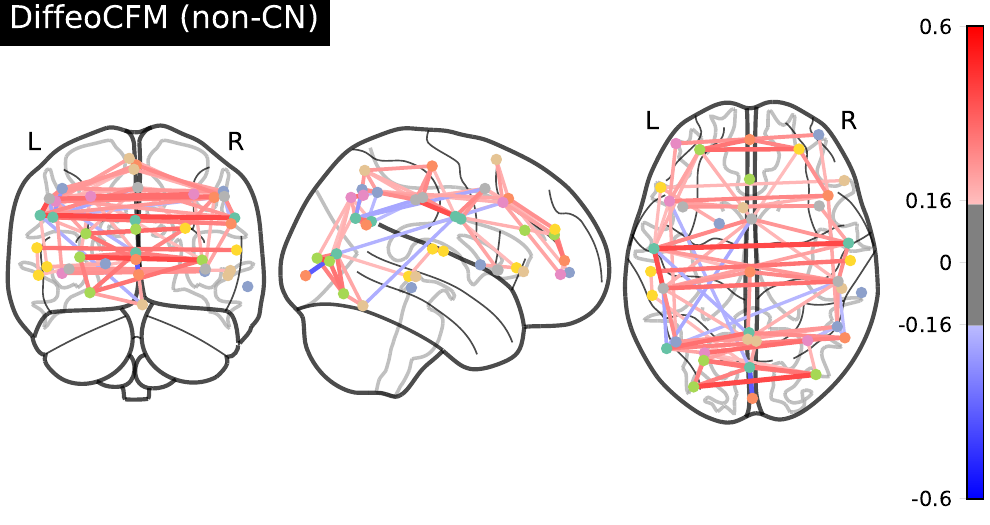} \\
      \end{tabular}
      \caption{
        \textbf{Class-conditional fMRI functional connectome plotting using the Fréchet mean (ADNI).}
        Each panel displays class-conditional fMRI functional connectomes using the Fréchet mean~\eqref{eq:frechet_mean} of correlation matrices computed with respect to the generation diffeomorphism $\phi_\corr$~\eqref{eq:diffeo_corr}.
        Left: real data from held-out test subjects; right: samples generated by \proposed.
        The comparison illustrates both the fidelity of generated connectomes and the disease-specific connectivity structure preserved by the model.
        }
      \label{fig:fmri-means}
  \end{subfigure}

  \vspace{1.2em}

  \begin{subfigure}[t]{\linewidth}
      \centering
      \includegraphics[width=0.85\linewidth]{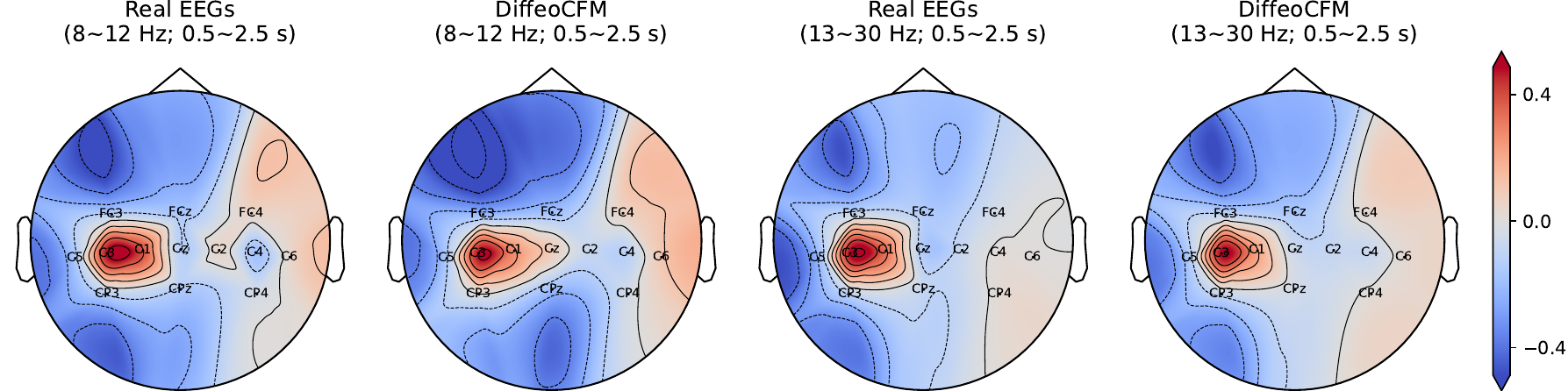}
      \caption{
        \textbf{Group-level topographic map using the first CSP's spatial filter derived from real EEGs (BNCI2015-001) and generated data by \proposed}
        Each map shows the first CSP's spatial filter across $12$ subjects in the $\alpha$ ($8-12$\,Hz) and $\beta$ ($13-30$\,Hz) frequency bands during the first $2$ seconds following stimulus onset.
        Filters from \proposed at the group-level closely resemble real EEGs, preserving discriminative patterns of motor imagery classification.
      }
      \label{fig:eeg-csp}
  \end{subfigure}

  \caption{Neurophysiological Plausibility Study of \proposed.}
  \vspace{-1em}
  \label{fig:qualitative}
\end{figure}

\newpage

\section{Conclusions, Limitations, and Future Works}
\label{sec:conclusion}
We introduced \proposed, an efficient framework for generating brain connectivity matrices. By reformulating Riemannian flow matching through global diffeomorphisms, \proposed enabled fast training and sampling while ensuring manifold-constrained outputs by construction. Applied to fMRI and EEG data, it outperformed existing baselines with neurophysiologically plausible samples.

Nonetheless, several limitations remain.
First, \proposed relies on a global diffeomorphism to Euclidean space, which exists for SPD and correlation matrices but not for all manifolds (e.g., Stiefel).
Second, higher parcellation granularity makes the problem intrinsically hard: manifold dimension grows quadratically with region count, and the sample complexity grows exponentially~\cite{oko2023diffusion}.
Third, how alternative connectivity definitions (e.g., partial correlation or graphical-Lasso precision) affect generation quality remains an open question, since the choice of estimator directly defines the ground truth.
Fourth, common evaluation metrics like $\alpha$-precision and $\beta$-recall are geometry-agnostic and may miss neurophysiological structure.

Despite these challenges, generative modeling remains a promising direction for neuroimaging and BCI research.
For instance, sharing trained generative models, rather than raw data, can facilitate multi-site collaboration with privacy guarantees~\cite{chen2023algorithmic}.

\newpage
\section*{Acknowledgment}

This work was supported by grant ANR-22-PESN-0012 under the France 2030 program, managed by the Agence Nationale de la Recherche (ANR).
CJ was supported by DATAIA Convergence Institute as part of the "Programme d'Investissement d'Avenir", (ANR-17-CONV-0003) operated by Inria.
This work was performed using HPC resources from GENCI–IDRIS (Grant 2025-AD011016067).

Numerical computation was enabled by the scientific Python ecosystem: \texttt{Matplotlib}~\cite{matplotlib}, \texttt{Scikit-learn}~\cite{scikit-learn}, \texttt{Numpy}~\cite{numpy}, \texttt{Scipy}~\cite{scipy}, \texttt{PyTorch}~\cite{pytorch}, \texttt{fMRIprep}~\cite{fmriprep}, \texttt{Nilearn}~\cite{nilearn}, \texttt{joblib}~\cite{joblib}, \texttt{PyRiemann}~\cite{pyriemann}, \texttt{torchcfm}~\cite{tong2024improving}, \texttt{torchdiffeq}~\cite{torchdiffeq}, \texttt{pandas}~\cite{reback2020pandas}, and \texttt{moabb}~\cite{Jayaram_2018}.

\bibliographystyle{plainnat}
\bibliography{references}


\newpage
\appendix
\setcounter{proposition}{0}

\section{Pullback Manifolds with Euclidean Spaces}
\secref{sec:background} briefly introduced pullback manifolds with Euclidean spaces.
We here go more into details to then prove the different propositions of the paper.
Let $\M$ be a smooth manifold, $E$ a Euclidean space, and $\phi: \M \to E$ a global diffeomorphism (a smooth bijection with a smooth inverse).
The diffeomorphism $\phi$ induces a Riemannian metric on $\M$ by pulling back the Euclidean metric $g_E$ on $E$
\begin{equation}
    (\phi^*g_E)_x(\xi, \eta) \triangleq g_E\left( \D\phi(x)[\xi], \D\phi(x)[\eta] \right), \quad \xi, \eta \in T_x\M\, .
\end{equation}
This metric induces a Riemannian norm on the tangent space $T_x\M$ at each point $x \in \M$: $\Vert \xi \Vert_x = \sqrt{(\phi^*g_E)_x(\xi, \xi)}$.
The pair $(\M, \phi^*g_E)$ is then called a \emph{pullback manifold}, and $\phi^*g_E$ is the \emph{pullback metric} of $g_E$ by $\phi$.
The pullback manifold is geodesically complete and admits globally unique geodesics~\cite[Chap.~7]{thanwerdas2022riemannian}.
Moreover, many Riemannian operations—distances, geodesics, exponential and logarithmic maps, parallel transport, Fréchet means—reduce to simple computations in $E$.
Given two points $x, y \in \M$, the geodesic $\gamma : [0, 1] \to \M$ connecting $x$ to $y$ and its associated Riemannian distance are given by
\begin{equation}
    \gamma(t) = \phi^{-1} \left( (1 - t)\phi(x) + t \phi(y) \right) \quad \text{and} \quad d_\M(x, y) = \| \phi(x) - \phi(y) \|_{E}\,
\end{equation}
i.e., the pullbacks of the Euclidean straight line and distance in $E$ joining $\phi(x)$ and $\phi(y)$.
This structure also defines expressions for the exponential and logarithmic maps at any point $x \in \M$.
The exponential map $\exp_x : T_x\M \to \M$ and the logarithmic map $\log_x : \M \to T_x\M$ are
\begin{equation}
    \exp_x(\xi) = \phi^{-1} \left( \phi(x) + \D\phi(x)[\xi] \right) \quad \text{and} \quad \log_x(y) = \left(\D\phi(x)\right)^{-1} \left( \phi(y) - \phi(x) \right)\, .
\end{equation}
When $\D\phi(x)$ is not available in closed form, it can be computed via automatic differentiation using libraries such as JAX~\cite{jax2018github} or PyTorch~\cite{paszke2019pytorch}.
Its inverse, $\left(\D\phi(x)\right)^{-1}$, can be obtained from the differential of $\phi^{-1}$ using the identity $(\D\phi(x))^{-1} \circ \D\phi^{-1}(\phi(x)) = \mathrm{Id}$.
The parallel transport of a tangent vector $\xi \in T_x\M$ along the geodesic $\gamma$ is the pullback of the Euclidean parallel transport from $\phi(x)$ to $\phi(y)$, which is given by
\begin{equation}
    \PT_{x \to y}(\xi) = \left(\D\phi(y)\right)^{-1} \left( \D \phi(x)(\xi) \right)\, .
\end{equation}
The Fréchet mean~\cite{grove1973conjugate} of a set of points $\{x^{(n)}\}_{n=1}^N \subset \M$ with respect to the Riemannian distance is
\begin{equation}
    \bar{x} \triangleq \argmin_{x \in \M} \sum_{n=1}^N d_\M(x, x^{(n)})^2 = \phi^{-1} \left( \frac{1}{N} \sum_{n=1}^N \phi(x^{(n)}) \right)\, .
\end{equation}

\section{Riemannian Conditional Flow Matching}
This section provides a concise overview of Flow Matching, Conditional Flow Matching, and their extension to Riemannian manifolds, complementing \secref{sec:background}.
For clarity, we omit conditioning on variables $y$ such as disease status, age, or sex; the derivations extend naturally to the conditional case.

\subsection{Flow Matching (Intractable Objective)}
Flow Matching (FM)~\cite{liu2023flow,lipman2023flow,albergo2023building,lipman2024flow} aims to learn a time-dependent vector field $u_{\theta}(t, x)$ that transports samples from a simple source (prior) distribution $p(x)$ (e.g., a standard Gaussian $\mathcal{N}(0, I)$) at $t=0$ to a target data distribution $q(x)$ at $t=1$.
This transformation is governed by an ordinary differential equation (ODE):
\begin{equation}
    \dot{x}(t) = u(t, x(t)), \quad x(0) \sim p_0(x)
\end{equation}

\paragraph{Training}
Flow Matching trains a neural network $u_{\theta}(t, x)$ to approximate a true time-dependent vector field $u(t, x)$ that transports a source distribution $p(x)$ to a target $q(x)$ along an evolving density path $(p_t)_{t \in [0,1]}$.
The objective is:
\begin{equation}
    \loss_\text{FM}(\theta) \triangleq \E_{t \sim \mathcal{U}([0,1]),\, x \sim p_t(x)} \| u_{\theta}(t, x) - u(t, x) \|_2^2 \,.
\end{equation}
This formulation is generally intractable, as both $p_t(x)$ and $u(t,x)$ are unknown.
Indeed, the time evolution of $p_t(x)$ is governed by the continuity equation:
\begin{equation}
    \label{eq:continuity}
    \frac{\partial p_t(x)}{\partial t} + \operatorname{div} \left( p_t(x) \, u(t, x) \right) = 0 \, ,
\end{equation}
which expresses conservation of mass along the flow.

\textbf{Sampling}
To generate a sample $x_1$ from a learned model $u_{\theta}(t,x)$, one draws an initial sample $x_0 \sim p_0(x)$ and then solves the learned ODE from $t=0$ to $t=1$ using a numerical solver such as a Euler or a Runge-Kutta scheme: 
\begin{equation}
    \dot{x}(t) = u_{\theta}(t, x(t)) \quad \Rightarrow \quad x_1 = x_0 + \int_0^1 u_{\theta}(t, x(t)) dt\, .
\end{equation}

\subsection{Conditional Flow Matching (CFM)}

\paragraph{Training}
Conditional Flow Matching (CFM) makes the training of flow models tractable by defining explicit conditional paths and vector fields.
We still consider a source distribution $p$ and a target data distribution $q$.
A common choice for the path connecting $x_0$ to $x_1$ is a linear interpolation:
$$ x_t(x_0, x_1) = (1-t)x_0 + tx_1 $$
The corresponding target conditional vector field is $u(t, x_t | x_0, x_1) = x_1 - x_0$.
The CFM training loss for a neural network $u_{\theta}(t, x)$ is:
\begin{equation}
    \loss_\text{CFM}(\theta) \triangleq \E_{t \sim \mathcal{U}([0,1]), x_0 \sim p(x_0), x_1 \sim q(x_1)} \| u_{\theta}(t, (1-t)x_0 + tx_1) - (x_1 - x_0) \|_2^2 \, .
\end{equation}
Here, $p(x_0)$ is typically a simple noise distribution (e.g., $\mathcal{N}(0, I)$) and $q(x_1)$ is the empirical data distribution.
Minimizing this CFM loss has been shown to be equivalent to minimizing the original intractable FM loss under certain conditions~\cite{lipman2024flow}.
Other conditional path definitions can also be employed.

\textbf{Sampling}
Sampling is performed by drawing $x_0 \sim p(x_0)$ and integrating the learned vector field $u_{\theta}(t, x(t))$ from $t=0$ to $t=1$:
$$ \dot{x}(t) = u_{\theta}(t, x(t)) $$
The solution $x(1)$ is then a sample from the learned approximation of $q(x_1)$.

\subsection{Riemannian Conditional Flow Matching (RCFM)}
CFM was recently extended to Riemannian manifolds~\cite{chen2024flow}, providing a principled framework for learning time-dependent vector fields that transport samples between probability distributions defined on such spaces.

\paragraph{Training}
Given a manifold $\M$, a vector field $u_\theta^\M : [0,1] \times \M \to T\M$ (where $T\M$ denotes the tangent bundle of $\M$) is trained to match the velocity of geodesics $\gamma(t)$ connecting samples from a source distribution $p(x_0)$ on $\M$ and a target distribution $q(x_1)$ on $\M$. The Riemannian CFM loss is defined as
\begin{equation}
    \loss(\theta) \triangleq \E_{t \sim \mathcal{U}([0,1]),\, x_0 \sim p(x_0),\, x_1 \sim q(x_1)}
    \left\Vert u_\theta^\M(t, \gamma(t)) - \dot{\gamma}(t) \right\Vert_{\gamma(t)}^2\, ,
\end{equation}
where $\gamma(t)$ is the geodesic connecting $x_0$ to $x_1$ such that $\gamma(0)=x_0$ and $\gamma(1)=x_1$, and $\dot{\gamma}(t)$ is its time derivative (velocity vector) which lies in $T_{\gamma(t)}\M$.
It should be noted that this loss extends $\loss_\text{CFM}$.
Indeed, for $\M = \R^d$, then we get $\loss=\loss_\text{CFM}$.

\paragraph{Sampling}
Once trained, new samples on $\M$ are generated by solving the Riemannian ODE
\begin{equation}
    \dot{x}(t) = u_\theta^\M(t, x(t)), \quad x(0) = x_0 \sim p(x_0)
\end{equation}
and returning $x(1)$ as a sample from the learned approximation of $q(x_1)$. Numerical solution of this ODE typically involves manifold operations like the exponential map.

\section{Proof of \propref{thm:equiv_losses}: Riemannian CFM loss function on pullback manifolds}
\label{app:proof_equiv_losses}

\begin{proposition}[Riemannian CFM loss function on pullback manifolds]
    The Riemannian CFM loss~\eqref{eq:loss_Riemannian_CFM} can be re-expressed in terms of the Euclidean vector field $ u_\theta^E $~\eqref{eq:pullback_vector_field} as
    \begin{equation*}
        \loss(\theta) = \E_{t,\, y,\, z_0 \mid y,\, z_1 \mid y} 
        \left\Vert u_\theta^E \left(t, (1 - t) z_0 + t z_1, y \right) - (z_1 - z_0) \right\Vert_E^2 \, ,
    \end{equation*}
    where $z_0 | y \sim \phi_\# p(\cdot | y)$ and $z_1 | y \sim \phi_\# q(\cdot | y)$.
\end{proposition}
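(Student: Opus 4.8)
The plan is to carry out a single change of variables through $\phi$ that simultaneously reparameterizes the expectation and collapses the pullback norm $\Vert\cdot\Vert_{\gamma(t)}$ into the ordinary Euclidean norm $\Vert\cdot\Vert_E$. First I would set $z_0 \triangleq \phi(x_0)$ and $z_1 \triangleq \phi(x_1)$. By the definition of the pushforward, sampling $x_0\mid y\sim p(\cdot\mid y)$ and mapping through $\phi$ is exactly sampling $z_0\mid y\sim\phi_\# p(\cdot\mid y)$, and likewise $z_1\mid y\sim\phi_\# q(\cdot\mid y)$; this disposes of the outer expectation and lets me argue pointwise on the integrand for fixed $t,y,x_0,x_1$.

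Next I would use the explicit geodesic formula for pullback manifolds, $\gamma(t)=\phi^{-1}\bigl((1-t)\phi(x_0)+t\phi(x_1)\bigr)=\phi^{-1}(z(t))$ with $z(t)\triangleq(1-t)z_0+tz_1$. Differentiating in $t$ and using $\dot z(t)=z_1-z_0$ gives $\dot\gamma(t)=\bigl(\D\phi(\gamma(t))\bigr)^{-1}[z_1-z_0]$, where I use that the differential of $\phi^{-1}$ at $z(t)$ equals $(\D\phi(\gamma(t)))^{-1}$ by the inverse function theorem.

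The decisive step is the norm identity. By definition of the pullback metric, for any $\xi\in T_{\gamma(t)}\M$ one has $\Vert\xi\Vert_{\gamma(t)}^2=(\phi^*g_E)_{\gamma(t)}(\xi,\xi)=\bigl\Vert\D\phi(\gamma(t))[\xi]\bigr\Vert_E^2$. Applying this with $\xi=u_\theta^\M(t,\gamma(t),y)-\dot\gamma(t)$ and using linearity of $\D\phi(\gamma(t))$ splits the integrand into two pieces. The first, $\D\phi(\gamma(t))\bigl[u_\theta^\M(t,\gamma(t),y)\bigr]$, is exactly $u_\theta^E(t,z(t),y)$ by definition~\eqref{eq:pullback_vector_field}, since $\gamma(t)=\phi^{-1}(z(t))$. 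The second, $\D\phi(\gamma(t))[\dot\gamma(t)]$, collapses to $z_1-z_0$ because $\D\phi(\gamma(t))\circ(\D\phi(\gamma(t)))^{-1}=\mathrm{Id}$. Substituting $z(t)=(1-t)z_0+tz_1$ and taking expectations then yields the claimed Euclidean loss.

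I expect the only delicate bookkeeping to be keeping the base points of the differentials consistent: $\D\phi$ must be evaluated at $\gamma(t)=\phi^{-1}(z(t))$ throughout, and the cancellation in the second term relies on this base point matching the one appearing in $\dot\gamma(t)$. Once the identity $\D\phi(\gamma(t))\circ(\D\phi(\gamma(t)))^{-1}=\mathrm{Id}$ is applied at the correct point, no further computation of geodesics, Christoffel symbols, or metric coefficients is needed, which is precisely the computational simplification the proposition is meant to capture.
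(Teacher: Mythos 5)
Your proposal is correct and follows essentially the same route as the paper's own proof: change of variables through $\phi$ via the pushforward, the explicit pullback geodesic $\gamma(t)=\phi^{-1}((1-t)z_0+tz_1)$ with velocity obtained by the chain rule, the isometry $\Vert\xi\Vert_{\gamma(t)}=\Vert\D\phi(\gamma(t))[\xi]\Vert_E$, and the cancellation $\D\phi(\gamma(t))\circ(\D\phi(\gamma(t)))^{-1}=\mathrm{Id}$ to reduce the transported velocity to $z_1-z_0$. The only cosmetic difference is that you write $\dot\gamma(t)=(\D\phi(\gamma(t)))^{-1}[z_1-z_0]$ directly via the inverse function theorem, whereas the paper writes $\dot\gamma(t)=\D\phi^{-1}(z_t)(z_1-z_0)$ and invokes the identity $\D\phi(\phi^{-1}(z))\circ\D\phi^{-1}(z)=\mathrm{Id}_E$; these are the same step.
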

\begin{proof}
Fix a label $y$, draw $x_0\sim p(\cdot\mid y)$ and $x_1\sim q(\cdot\mid y)$, and set
$z_0=\phi(x_0)$, $z_1=\phi(x_1)$.
For any $t\in[0,1]$ let 
$z_t \triangleq (1-t)z_0+t z_1$
and 
$\gamma(t) = \phi^{-1}(z_t)$.

\textbf{Geodesic velocity}
Since $z_t$ is a straight line in $E$, the chain rule gives
\begin{equation*}
    \dot\gamma(t)=\D\phi^{-1}(z_t)(z_1-z_0)\,.
\end{equation*}

\textbf{Pull-back of the vector field}
By definition of $u_\theta^{E}$ in~\eqref{eq:pullback_vector_field},
\begin{equation*}
    u_\theta^{E}(t,z_t,y)=
        \D\phi(\gamma(t))\,\left(u_\theta^{\M}(t,\gamma(t),y)\right).
\end{equation*}

\textbf{Norm preservation}
Because the metric on $\M$ is the pull-back of $g_E$, we have
$\|\xi\|_{\gamma(t)}=\|\,\D\phi(\gamma(t))\,(\xi)\|_{E}$
for every $\xi\in T_{\gamma(t)}\M$.  
Applying $\D\phi(\gamma(t))$ to the difference
$u_\theta^{\M}(t,\gamma(t),y)-\dot\gamma(t)$ yields
\begin{equation*}
    \begin{aligned}
        \D\phi(\gamma(t)) \left(u_\theta^{\M}(t,\gamma(t),y)-\dot\gamma(t)\right)
        &= \D \phi(\gamma(t))\left(u_\theta^{\M}(t,\gamma(t),y)\right) - \D \phi(\gamma(t)) (\dot\gamma(t)) \\
        &= u_\theta^{E}(t,z_t,y) - \D \phi(\phi^{-1}(z_t)) \left(\D\phi^{-1}(z_t)(z_1-z_0)\right)\, .
    \end{aligned}
\end{equation*}
Furthermore, since $(\phi \circ \phi^{-1})(z) = z$, we have $\D \phi(\phi^{-1}(z)) \circ \D \phi^{-1}(z) = \text{Id}_E$.
qHence, we get
\begin{equation*}
    \begin{aligned}
        \D\phi(\gamma(t)) \left(u_\theta^{\M}(t,\gamma(t),y)-\dot\gamma(t)\right)
        = u_\theta^{E}(t,z_t,y) - (z_1-z_0)\, .
    \end{aligned}
\end{equation*}
Taking squared Euclidean norms on both sides gives
\begin{equation*}
    \bigl\|u_\theta^{\M}(t,\gamma(t),y)-\dot\gamma(t)\bigr\|_{\gamma(t)}^{2}
    =
    \bigl\|u_\theta^{E}(t,z_t,y)-(z_1-z_0)\bigr\|_{E}^{2}.
\end{equation*}

\textbf{Expectation}
Finally, averaging over 
$t\sim\mathcal U[0,1]$, $y\sim\pi_{\mathcal Y}$,
$x_0\sim p(\cdot\mid y)$, and $x_1\sim q(\cdot\mid y)$
gives the desired equality of loss functions,
proving \propref{thm:equiv_losses}.
\end{proof}

\section{Proof of \propref{thm:ode_equiv}: Equivalence of ODE Solutions}

\begin{proposition}[Equivalence of ODE solutions]
    The solution $x(t)$ to~\eqref{eq:ODE_Riemannian_CFM} satisfies
    \begin{equation*}
        x(t) = \phi^{-1}(z(t)) \quad \text{for all } t \in [0,1]\, ,
    \end{equation*}
    where $z(t)$ is the solution of the ODE with $u_\theta^E$~\eqref{eq:pullback_vector_field} and initial condition $z_0 = \phi(x_0)$.
\end{proposition}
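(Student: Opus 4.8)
The plan is to transport the manifold-valued solution into Euclidean space via $\phi$ and show that the resulting curve satisfies the Euclidean ODE~\eqref{eq:euc_ode} with the matching initial condition; uniqueness of ODE solutions then closes the argument. Concretely, let $x(t)$ be the solution of the Riemannian ODE~\eqref{eq:ODE_Riemannian_CFM} and define the curve $w(t) \triangleq \phi(x(t))$ in $E$. Its initial value is $w(0) = \phi(x_0) = z_0$, which already matches the initial condition of~\eqref{eq:euc_ode}.

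Next I would differentiate $w$ using the chain rule, $\dot w(t) = \D\phi(x(t))[\dot x(t)]$. Substituting $\dot x(t) = u_\theta^\M(t, x(t), y)$ from~\eqref{eq:ODE_Riemannian_CFM} and rewriting $x(t) = \phi^{-1}(w(t))$ (valid since $\phi$ is a bijection) gives $\dot w(t) = \D\phi(\phi^{-1}(w(t)))[u_\theta^\M(t, \phi^{-1}(w(t)), y)]$. The right-hand side is, by the very definition of the pullback vector field~\eqref{eq:pullback_vector_field}, exactly $u_\theta^E(t, w(t), y)$. Hence $w$ satisfies the Euclidean ODE $\dot w(t) = u_\theta^E(t, w(t), y)$ with $w(0) = z_0$, i.e.\ the same initial value problem that defines $z$.

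Finally, both $w$ and $z$ solve the same initial value problem in $E$. Since $\phi$ is a smooth diffeomorphism and $u_\theta^\M$ is smooth (a neural network), the composed field $u_\theta^E$ is smooth, hence locally Lipschitz in its spatial argument; the Picard--Lindelöf theorem then guarantees a unique solution, so $w(t) = z(t)$ for all $t \in [0,1]$. Applying $\phi^{-1}$ yields $x(t) = \phi^{-1}(z(t))$, as claimed.

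The step I expect to require the most care is the regularity and uniqueness argument: the conclusion rests on the Euclidean initial value problem having a \emph{unique} solution, which in turn relies on $u_\theta^E$ being (locally) Lipschitz. This is immediate when the network $u_\theta^\M$ and the diffeomorphism $\phi$ are smooth, but it is precisely the hypothesis that must be made explicit, since otherwise agreement of the two trajectories' velocities and initial conditions would not force agreement of the trajectories themselves.
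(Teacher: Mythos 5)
Your proof is correct, but it runs in the opposite direction from the paper's. The paper starts from the Euclidean solution $z(t)$, defines the candidate $x(t)\triangleq\phi^{-1}(z(t))$, and verifies via the chain rule, the definition~\eqref{eq:pullback_vector_field}, and the identity $\D\phi^{-1}(\phi(x))\circ\D\phi(x)=\mathrm{Id}_{T_x\M}$ that this candidate solves the Riemannian ODE~\eqref{eq:ODE_Riemannian_CFM} with the right initial condition, and stops there. You instead push the manifold solution forward, setting $w(t)=\phi(x(t))$, show that $w$ solves the Euclidean initial value problem~\eqref{eq:euc_ode}, and close with Picard--Lindelöf. The core computation is the exact mirror image in both cases (chain rule plus the inverse-differential identity), but the direction changes where uniqueness must be invoked: the paper's direction, read strictly, exhibits $\phi^{-1}(z(t))$ as \emph{a} solution of the Riemannian ODE and tacitly identifies it with \emph{the} solution $x(t)$, which implicitly requires uniqueness on $\M$; your direction needs uniqueness only in $E$, where local Lipschitzness of the smooth field $u_\theta^E$ makes Picard--Lindelöf apply verbatim. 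Making that regularity hypothesis explicit is a genuine gain in completeness over the paper's argument, at the cost of assuming smoothness of $u_\theta^\M$ (harmless here, since it is a neural network composed with a diffeomorphism); the paper's direction buys a shorter verification but leaves the identification step unstated.
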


\begin{proof}
    Let $y\in\mathcal Y$, $x_0\in\M$ and $z_0\triangleq\phi\left(x_0\right)$.
    Let the Euclidean trajectory $z:[0,1]\to E$ be a solution of
    \begin{equation*}
        \dot z(t)=u_\theta^{E}\left(t,z(t),y\right),\qquad z(0)=z_0 .
    \end{equation*}
    Then, we define the candidate solution
    \begin{equation*}
        x(t)\triangleq\phi^{-1}\left(z(t)\right)\qquad(0\le t\le 1).
    \end{equation*}
    By the chain rule,
    \begin{equation*}
      \dot x(t)=\D\phi^{-1}\left(z(t)\right)\left(\dot z(t)\right)
               =\D\phi^{-1}\left(z(t)\right)\left(u_\theta^{E}\left(t,z(t),y\right)\right).
    \end{equation*}
    For every $z=\phi\left(x\right)$, we have~\eqref{eq:pullback_vector_field}
    \begin{equation*}
        u_\theta^{E}\left(t,z,y\right)=\D\phi\left(x\right)
        \left(u_\theta^{\M}\left(t,x,y\right)\right).
    \end{equation*}
    Applying this with $x=x(t)$ and $z=z(t)$,
    \begin{equation*}
        \dot x(t)=\D\phi^{-1}\left(\phi\left(x(t)\right)\right)\left(\D\phi\left(x(t)\right) \left(u_\theta^{\M}\left(t,x(t),y\right)\right)\right).
    \end{equation*}
    
    Using the identity,
    $\D\phi^{-1}\left(\phi(x)\right)\circ\D\phi(x)=\mathrm{Id}_{T_x\M}$, we get
    \begin{equation*}
        \dot x(t)=u_\theta^{\M}\left(t,x(t),y\right)\, .
    \end{equation*}
    Finally, we check the initial condition,
    \begin{equation*}
        x(0)=\phi^{-1}\left(z_0\right)=\phi^{-1}\left(\phi\left(x_0\right)\right)=x_0 .
    \end{equation*}
    Consequently $x(t)=\phi^{-1}\left(z(t)\right)$ is a solution on $\M$ of~\eqref{eq:ODE_Riemannian_CFM}.
\end{proof}

\section{Proof of \propref{prop:rk_equiv}: Equivalence of Runge--Kutta iterates}

\begin{proposition}[Equivalence of Runge--Kutta iterates]
    Let $\{x_\ell\}$ be the iterates produced on $\M$ by an explicit Riemannian Runge--Kutta scheme applied to the ODE~\eqref{eq:ODE_Riemannian_CFM}.
    Then, the iterates are
    \begin{equation*}
        x_\ell = \phi^{-1}\left(z_\ell\right) \quad \text{for all } \ell \in \mathbb{N}\, ,
    \end{equation*}    
    where $\{z_\ell\}$ are the iterates obtained by applying the same scheme (same coefficients and step size) to the ODE with vector field $u_\theta^E$~\eqref{eq:pullback_vector_field} and initial condition $z_0 = \phi(x_0)$.
\end{proposition}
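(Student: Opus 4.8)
The plan is to exploit the structural fact that $\phi$ is a Riemannian \emph{isometry} from $(\M, \phi^* g_E)$ onto the flat space $(E, g_E)$, together with the observation that any explicit Riemannian Runge--Kutta scheme is assembled entirely from metric-intrinsic operations—evaluations of the vector field, exponential maps, and parallel transports—combined with linear combinations of tangent vectors taken within a single tangent space. Because an isometry commutes with all of these operations, pushing the manifold iteration through $\phi$ (and its differential $\D\phi$) should reproduce exactly the Euclidean iteration, in which the intrinsic operations collapse to vector addition and the identity transport, i.e. the ordinary Euclidean scheme generating $\{z_\ell\}$.

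Concretely, I would first record the three intertwining identities valid on a pullback manifold: the vector-field relation $\D\phi(x)\bigl[u_\theta^\M(t,x,y)\bigr] = u_\theta^E(t,\phi(x),y)$, which is just~\eqref{eq:pullback_vector_field} rearranged; the exponential-map relation $\phi(\exp_x(\xi)) = \phi(x) + \D\phi(x)[\xi]$; and the parallel-transport relation $\D\phi(y)\circ \PT_{x\to y} = \D\phi(x)$. The latter two follow from the closed-form expressions for the exponential map and parallel transport on a pullback manifold. Each identity states that the corresponding operation on $\M$, read through $\phi$ or $\D\phi$, becomes the matching flat operation in $E$: $\exp$ becomes addition, parallel transport becomes the identity, and $u_\theta^\M$ becomes $u_\theta^E$.

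The argument then proceeds by induction on $\ell$. The base case is immediate since $z_0 = \phi(x_0)$. For the inductive step, assuming $\phi(x_\ell) = z_\ell$, I would run through the $s$ stages of the scheme in order. For each stage I would apply $\D\phi$ at the relevant base point to the manifold stage slope, using the vector-field identity to turn it into the Euclidean slope, the parallel-transport identity to discharge the transport of previously computed slopes into a common tangent space, and the exponential-map identity to turn the geodesic stage update into the affine update $z_\ell + h\sum_{j<i} a_{ij} k_j$. Since $\D\phi(x_\ell)$ is linear, it carries a linear combination of transported slopes on $\M$ to the matching linear combination in $E$. Applying the same reasoning to the final combination $x_{\ell+1} = \exp_{x_\ell}\bigl(h\sum_i b_i k_i\bigr)$ yields $\phi(x_{\ell+1}) = z_{\ell+1}$, closing the induction.

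The main obstacle is notational rather than conceptual: one must fix a precise formulation of the explicit Riemannian scheme—in particular, the tangent space in which each stage slope is expressed and to which it is parallel-transported before being combined—and verify that $\phi$ intertwines that specific bookkeeping. Once the three intertwining identities above are in place, every elementary step commutes with $\phi$, and the induction becomes a routine stage-by-stage verification; the only care needed is to ensure that all tangent-vector additions occur within a single tangent space, so that the linearity of $\D\phi$ at that base point applies cleanly.
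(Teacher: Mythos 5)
Your proposal is correct and takes essentially the same approach as the paper's own proof: both rest on the same three intertwining identities (the pullback vector-field relation~\eqref{eq:pullback_vector_field}, the closed-form exponential map $\exp_x(\xi)=\phi^{-1}(\phi(x)+\D\phi(x)[\xi])$, and the closed-form parallel transport on the pullback manifold) and proceed by induction on $\ell$, pushing each stage slope, each intermediate point, and the final update through $\phi$ and the linear map $\D\phi(x_\ell)$. The only difference is presentational—the paper writes out the stage-by-stage verification explicitly for RK4 (remarking that RK1/RK2 are analogous), whereas you phrase the identical argument for a general explicit $s$-stage Butcher tableau, which is a mild and harmless generalization.
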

\begin{proof}
We prove this for the fourth-order Runge-Kutta method (RK4).
Similar proofs can be done for the Euler (RK1) and midpoint (RK2) schemes. 
The ODEs are:
\begin{equation*}
    \begin{aligned}
        \dot{z}(t) &= u_\theta^E(t, z(t), y) \quad \text{in Euclidean space } E \\
        \dot{x}(t) &= u_\theta^\M(t, x(t), y) \quad \text{on manifold } \M
    \end{aligned}
\end{equation*}
The key relationships are:
\begin{itemize}
    \item Vector field relationship: $u_\theta^E(t, z, y) = \D\phi(\phi^{-1}(z)) (u_\theta^\M(t, \phi^{-1}(z), y))$.
    \item Exponential map on $\M$: $\exp_x(\xi) = \phi^{-1}(\phi(x) + \D\phi(x)(\xi))$ for $x \in \M, \xi \in T_x\M$.
    \item Parallel transport on $\M$ from $y \in \M$ to $x \in \M$: $\PT_{y \to x}(\eta) = (\D\phi(x))^{-1} (\D\phi(y)(\eta))$ for $\eta \in T_y\M$.
\end{itemize}
We use induction. Assume $x_\ell = \phi^{-1}(z_\ell)$ for some $\ell$. We show $x_{\ell+1} = \phi^{-1}(z_{\ell+1})$.

\textbf{RK4 scheme in $E$:}
Given $z_\ell$ at time $t_\ell$, and step size $h$:
\begin{align*}
    k_1^E &= u_\theta^E(t_\ell, z_\ell, y) \\
    k_2^E &= u_\theta^E(t_\ell + \frac{h}{2}, z_\ell + \frac{h}{2} k_1^E, y) \\
    k_3^E &= u_\theta^E(t_\ell + \frac{h}{2}, z_\ell + \frac{h}{2} k_2^E, y) \\
    k_4^E &= u_\theta^E(t_\ell + h, z_\ell + h k_3^E, y) \\
    z_{\ell+1} &= z_\ell + \frac{h}{6} (k_1^E + 2k_2^E + 2k_3^E + k_4^E)
\end{align*}

\textbf{RK4 scheme on $\M$:}
A Riemannian RK4 method involves evaluating $u_\theta^\M$ at intermediate points, transporting the resulting tangent vectors to $T_{x_\ell}\M$, combining them, and then using the exponential map.
By hypothesis, $\phi(x_\ell) = z_\ell$.

\begin{enumerate}
    \item \textbf{First stage ($k_1^\M$):}
    We have
    \begin{equation*}
    k_1^\M = u_\theta^\M(t_\ell, x_\ell, y) \in T_{x_\ell}\M.
    \end{equation*}
    So,
    \begin{equation*}
    k_1^\M = (\D\phi(x_\ell))^{-1} (u_\theta^E(t_\ell, \phi(x_\ell), y)) = (\D\phi(x_\ell))^{-1} (k_1^E).
    \end{equation*}
    The intermediate point is
    \begin{equation*}
    x_A = \exp_{x_\ell}\left(\frac{h}{2} k_1^\M\right).
    \end{equation*}
    Substituting the expressions, we get
    \begin{align*}
    x_A &= \phi^{-1}\left(\phi(x_\ell) + \D\phi(x_\ell)\left(\frac{h}{2} k_1^\M\right)\right) \\
    &= \phi^{-1}\left(z_\ell + \D\phi(x_\ell)\left(\frac{h}{2} (\D\phi(x_\ell))^{-1}(k_1^E)\right)\right) \\
    &= \phi^{-1}\left(z_\ell + \frac{h}{2} k_1^E\right).
    \end{align*}
    Thus,
    \begin{equation*}
    \phi(x_A) = z_\ell + \frac{h}{2} k_1^E.
    \end{equation*}
    \item \textbf{Second stage ($k_2^\M$):}
    \begin{equation*}
    k_2^\M = u_\theta^\M(t_\ell + \frac{h}{2}, x_A, y) \in T_{x_A}\M
    \end{equation*}
    \begin{equation*}
    k_2^\M = (\D\phi(x_A))^{-1} (u_\theta^E(t_\ell + \frac{h}{2}, \phi(x_A), y)) = (\D\phi(x_A))^{-1} (k_2^E)
    \end{equation*}
    Transport $k_2^\M$ to $T_{x_\ell}\M$:
    \begin{equation*}
    k_{2,\text{tp}}^\M = \PT_{x_A \to x_\ell}(k_2^\M) = (\D\phi(x_\ell))^{-1} (\D\phi(x_A)(k_2^\M)) = (\D\phi(x_\ell))^{-1}(k_2^E)
    \end{equation*}
    The intermediate point
    \begin{equation*}
    x_B = \exp_{x_\ell}(\frac{h}{2} k_{2,\text{tp}}^\M)
    \end{equation*}
    \begin{equation*}
    \begin{aligned}
    x_B &= \phi^{-1}\left(\phi(x_\ell) + \D\phi(x_\ell)(\frac{h}{2} k_{2,\text{tp}}^\M)\right) \\
    &= \phi^{-1}\left(z_\ell + \D\phi(x_\ell)(\frac{h}{2} (\D\phi(x_\ell))^{-1}(k_2^E))\right) \\
    &= \phi^{-1}\left(z_\ell + \frac{h}{2} k_2^E\right)
    \end{aligned}
    \end{equation*}
    Thus,
    \begin{equation*}
    \phi(x_B) = z_\ell + \frac{h}{2} k_2^E
    \end{equation*}
\item \textbf{Third stage ($k_3^\M$):}
    \begin{equation*}
    k_3^\M = u_\theta^\M(t_\ell + \frac{h}{2}, x_B, y) \in T_{x_B}\M
    \end{equation*}
    \begin{equation*}
    k_3^\M = (\D\phi(x_B))^{-1} (u_\theta^E(t_\ell + \frac{h}{2}, \phi(x_B), y)) = (\D\phi(x_B))^{-1} (k_3^E)
    \end{equation*}
    Transport $k_3^\M$ to $T_{x_\ell}\M$:
    \begin{equation*}
    k_{3,\text{tp}}^\M = \PT_{x_B \to x_\ell}(k_3^\M) = (\D\phi(x_\ell))^{-1} (\D\phi(x_B)(k_3^\M)) = (\D\phi(x_\ell))^{-1}(k_3^E)
    \end{equation*}
    The intermediate point
    \begin{equation*}
    x_C = \exp_{x_\ell}(h k_{3,\text{tp}}^\M)
    \end{equation*}
    \begin{equation*}
    \begin{aligned}
    x_C &= \phi^{-1}\left(\phi(x_\ell) + \D\phi(x_\ell)(h k_{3,\text{tp}}^\M)\right) \\
    &= \phi^{-1}\left(z_\ell + \D\phi(x_\ell)(h (\D\phi(x_\ell))^{-1}(k_3^E))\right) \\
    &= \phi^{-1}\left(z_\ell + h k_3^E\right)
    \end{aligned}
    \end{equation*}
    Thus,
    \begin{equation*}
    \phi(x_C) = z_\ell + h k_3^E
    \end{equation*}
\item \textbf{Fourth stage ($k_4^\M$):}
    \begin{equation*}
    k_4^\M = u_\theta^\M(t_\ell + h, x_C, y) \in T_{x_C}\M
    \end{equation*}
    \begin{equation*}
    k_4^\M = (\D\phi(x_C))^{-1} (u_\theta^E(t_\ell + h, \phi(x_C), y)) = (\D\phi(x_C))^{-1} (k_4^E)
    \end{equation*}
    Transport $k_4^\M$ to $T_{x_\ell}\M$:
    \begin{equation*}
    k_{4,\text{tp}}^\M = \PT_{x_C \to x_\ell}(k_4^\M) = (\D\phi(x_\ell))^{-1} (\D\phi(x_C)(k_4^\M)) = (\D\phi(x_\ell))^{-1}(k_4^E)
    \end{equation*}
    \item \textbf{Final update on $\M$:}
    The combined tangent vector in $T_{x_\ell}\M$ is:
    \begin{align*}
    &\Delta x_{\text{tangent}} = \frac{h}{6} (k_1^\M + 2k_{2,\text{tp}}^\M + 2k_{3,\text{tp}}^\M + k_{4,\text{tp}}^\M) \\
    &= \frac{h}{6} \left( (\D\phi(x_\ell))^{-1}(k_1^E) + 2(\D\phi(x_\ell))^{-1}(k_2^E) + 2(\D\phi(x_\ell))^{-1}(k_3^E) + (\D\phi(x_\ell))^{-1}(k_4^E) \right) \\
    &= (\D\phi(x_\ell))^{-1} \left( \frac{h}{6} (k_1^E + 2k_2^E + 2k_3^E + k_4^E) \right)
    \end{align*}
    Then, $x_{\ell+1} = \exp_{x_\ell}(\Delta x_{\text{tangent}})$.
    \begin{align*}
    x_{\ell+1} &= \phi^{-1}\left(\phi(x_\ell) + \D\phi(x_\ell)(\Delta x_{\text{tangent}})\right) \\
    &= \phi^{-1}\left(\phi(x_\ell) + \D\phi(x_\ell) \left( (\D\phi(x_\ell))^{-1} \left( \frac{h}{6} (k_1^E + 2k_2^E + 2k_3^E + k_4^E) \right) \right) \right) \\
    &= \phi^{-1}\left(z_\ell + \frac{h}{6} (k_1^E + 2k_2^E + 2k_3^E + k_4^E) \right) \\
    &= \phi^{-1}(z_{\ell+1})
    \end{align*}
\end{enumerate}
The base case $x_0 = \phi^{-1}(z_0)$ is given by the problem setup ($z_0 = \phi(x_0)$).
Therefore, by induction, $x_\ell = \phi^{-1}(z_\ell)$ for all $\ell \in \mathbb{N}$ when the RK4 scheme is applied as described.
\end{proof}

\section{The affine-invariant metric for correlation matrices}
\label{app:affine_invariant_corr}

The set of correlation matrices $\corr$ can be viewed as a quotient manifold of symmetric positive definite matrices $\Spos$ by the action of positive diagonal matrices $\Dpos$~\cite{david2019riemannian,thanwerdas2022riemannian}:
\begin{equation}
    \corr = \Spos / \Dpos\,,
\end{equation}
where two matrices $\bSigma, \bSigma' \in \Spos$ are equivalent if there exists $\bD \in \Dpos$ such that $\bSigma' = \bD\bSigma\bD$.

This quotient structure naturally induces a Riemannian metric on $\corr$ from the affine-invariant metric on $\Spos$, defined by
\begin{equation}
    \langle \bXi, \bEta \rangle_{\bSigma}^{\Spos} = \tr(\bSigma^{-1}\bXi\bSigma^{-1}\bEta).
\end{equation}

The canonical projection (submersion) $\pi: \Spos \to \corr$ is given by diagonal normalization:
\begin{equation}
    \pi(\bSigma) = \diag(\bSigma)^{-1/2}\bSigma\diag(\bSigma)^{-1/2}.
\end{equation}

The resulting geometry on $\corr$ requires solving an optimization problem to compute the Riemannian logarithm:
\begin{equation}
    \log_\bA^{\corr}(\bB) = \D\pi(\bA)\left(\log_\bA^{\Spos}(\bD\bB\bD)\right),
\end{equation}
where
\begin{equation}
    \bD = \argmin_{\bD \in \Dpos} d_{\Spos}\left(\bA, \bD\bB\bD\right)\,.
\end{equation}

Since no closed-form solution for $\bD$ is known, the logarithmic map on $(\corr, \langle \cdot,\cdot \rangle^{\corr})$ cannot be expressed analytically (contrary to $\log_\bA^\Spos$).

\newpage

\section{Datasets and Preprocessing}
\label{app:datasets_preprocessing}

\paragraph{Resting‐state fMRI datasets and preprocessing}
We used three publicly available resting-state fMRI datasets—ABIDE~\cite{abide}, ADNI~\cite{adni}, and OASIS-3~\cite{oasis}—spanning a wide age range and neurological conditions.  
ABIDE includes $900$ subjects (one scan each; mean age $17$), both neurotypical and autistic, from $19$ international sites.  
ADNI comprises $1900$ scans from $900$ older adults (mean age $74$), covering normal ageing, mild cognitive impairment, and Alzheimer's disease.  
OASIS-3 includes $1000$ participants and $1800$ longitudinal sessions collected over $10$ years (mean age $71$), targeting healthy and pathological ageing.

Preprocessing was performed with \texttt{fMRIPrep}~\cite{fmriprep}, which applies bias-field correction, skull stripping, tissue segmentation, nonlinear normalization to MNI space, motion correction, and confound estimation.  
Regional time series were extracted with \texttt{Nilearn}~\cite{nilearn} using the MSDL atlas~\cite{varoquaux2011multi}, followed by nuisance regression (including motion, CSF/WM signals, and global signal).  
Time series were $z$-scored and screened for minimum length, numerical anomalies (e.g., zeros or extreme values), and the conditioning of their covariance matrices.

\paragraph{EEG datasets}
Sessions with missing or non-standard protocol (e.g., 2C in BNCI2015-001) were discarded.
We further filtered time series whose covariance matrices have high values (max entry $> 10^4$) or statistical outliers based on Mahalanobis distance from the group mean.
This retained the top 90\% most consistent trials for downstream modeling.

\textbf{BNCI2014-002}: The BNCI2014-002\footnote{https://neurotechx.github.io/moabb/generated/moabb.datasets.BNCI2014\_002.html} dataset, provided through the BNCI Horizon 2020 initiative, includes recordings from 13 subjects engaged in motor imagery tasks. Participants were instructed to imagine movements of either their right hand or both feet for a sustained duration of 5 seconds, guided by visual cues. EEG signals were captured using an amplifier and active Ag/AgCl electrodes at a sampling rate of 512 Hz, with a total of 15 electrode channel applied to each subject. The experimental protocol comprised eight sessions per participant, each including 80 trials for hand and foot imagery, summing to 160 trials. EEG epochs were extracted from 3.0 to 8.0 seconds relative to cue onset, aligning with the motor imagery window.

\textbf{BNCI2015-001}: The BNCI2015-001\footnote{https://neurotechx.github.io/moabb/generated/moabb.datasets.BNCI2015\_001.html} dataset, released as part of the BNCI Horizon 2020 project, contains EEG recordings from 12 individuals who performed motor imagery tasks involving either the right hand or both feet. EEG data were acquired at a 512 Hz sampling rate, preprocessed with a 0.5–100 Hz bandpass filter and a 50 Hz notch filter. Each trial spanned 5 seconds, beginning at 3.0 seconds after cue onset and ending at 8.0 seconds, corresponding to the motor imagery phase. The dataset provides EEG data from 13 electrode channels arranged in the following order: FC3, FCz, FC4, C5, C3, C1, Cz, C2, C4, C6, CP3, CPz, and CP4. For most participants (Subjects 1–8), recordings were conducted over two sessions on consecutive days. In contrast, Subjects 9–12 completed three sessions. Each session consisted of 100 trials per class, resulting in 200 trials per session per subject.

\newpage
\section{Comparison of Baseline Methods: Assumptions, Strengths, and Limitations}
\label{app:comparison}

In this section, we summarize the key assumptions, strengths, and limitations of the baseline methods discussed throughout the paper, based on our experimental findings.

\begin{itemize}
    \item \method{DiffeoGauss}: This is a simple yet practical generative approach that operates directly on SPD or correlation manifolds. While it yields competitive results in terms of the CAS metrics, it generally underperforms compared to deep generative models in terms of sample quality, see \tabref{tab:results}.
    \item \method{TriangDDPM} or \method{TriangCFM}: They are deep generative models that achieve strong performance across both CAS and quality-based metrics. However, their outputs are limited to triangular entities, requiring a post-processing projection step to obtain valid positive definite matrices. Unfortunately, this step alters the eigenvalue spectrum significantly, which degrades the quality and structure of the generated matrices (see \tabref{tab:delta_triangcfm} and \appref{app:projection}).
    \item \method{SPD-DDPM}: This work extends DDPM to the space of SPD manifolds under the affine-invariant Riemannian metric. However, in our experiments, we found that its implementation repeatedly relies on sampling from Gaussian distributions on SPD manifolds, as well as backpropagating through matrix operations in SPDNet. Both components are computationally intensive in theory and practice, making the method prohibitively slow. In fact, for a number of training samples above 64, we were unable to obtain results within a reasonable time frame (see \figref{fig:wrap_training_time}). However, in the fMRI experiments, the data have at least 1000 samples.
    \item \method{Riemannian CFM}: This is a foundational contribution in the family of flow matching methods, providing a general theoretical framework for generative modeling on Riemannian manifolds. However, its construction is geometry-specific and requires a dedicated implementation for each manifold. Furthermore, on SPD matrices, we observe that it is roughly $8\times$ slower to train and $10\times$ slower to sample from than \method{DiffeoCFM} (see \tabref{tab:results}), due to the repeated computation of affine-invariant geodesics, their derivatives and Riemannian norms.
    \item \method{\proposed}: The proposed approach introduces a novel Riemannian CFM framework using the pullback geometry, enabling direct data generation on SPD and correlation manifolds, particularly helpful in neuroscience and neuroengineering tasks. It trains and sampels efficiently (see \figref{fig:wrap_training_time} and \tabref{tab:results}), achieves strong performance on both quality and CAS metrics (see \tabref{tab:results}), and consistently demonstrates neurophysiological plausibility across two distinct evaluation protocols in both fMRI and EEG (see \secref{sec:Neurophysiological_Plausibility_Study}).
\end{itemize}

\newpage
\section{Summary Figure}
\label{app:f1_vs_time}

While \tabref{tab:results} provides a comprehensive numerical evaluation of all methods, a visual representation can offer clearer insights into the practical trade-offs between model performance and computational efficiency. To this end, \figref{fig:f1_vs_time} plots a unified performance metric,the Average F1 Score, against both training and sampling times. This allows for a direct comparison of the performance-cost profile of each method for both fMRI and EEG data.

\begin{figure}[h]
    \centering
    \includegraphics[width=\linewidth]{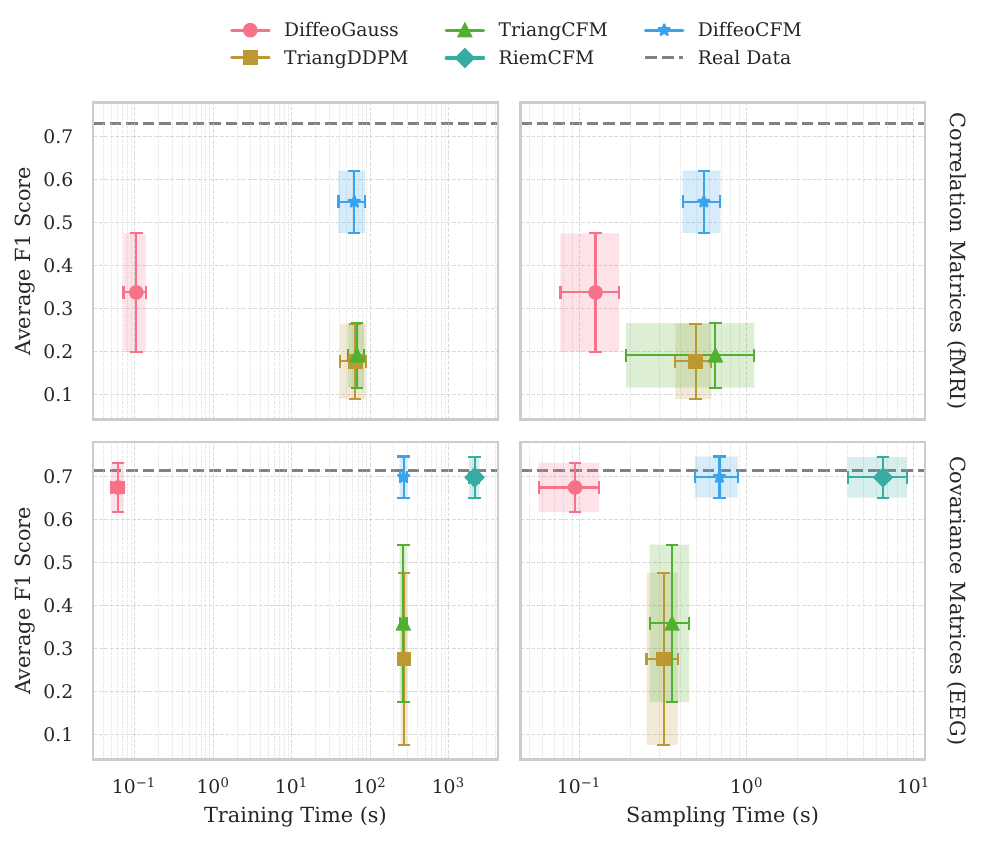}
    \caption{
        \textbf{Trade-off between generative performance and computational cost for fMRI (top) and EEG (bottom) data.}
        The figure plots the Average F1 Score against Training Time (left) and Sampling Time (right). 
        The Average F1 Score is the mean of the quality metric ($\alpha,\beta$-F1) and the CAS F1-score from \tabref{tab:results}.
        Each point marks the mean performance across all splits and datasets for a given modality, with error bars and shaded regions indicating the standard deviation.
        The dashed gray line represents the \textit{Real Data} baseline, which serves as an empirical upper bound.
        Time is shown on a logarithmic scale.
    }
    \label{fig:f1_vs_time}
\end{figure}

\section{Scalability and Computational cost of \method{SPD–DDPM}~\cite{li2024spd} vs \proposed}

\begin{figure}[H]
    \centering
    \includegraphics[width=0.9\linewidth]{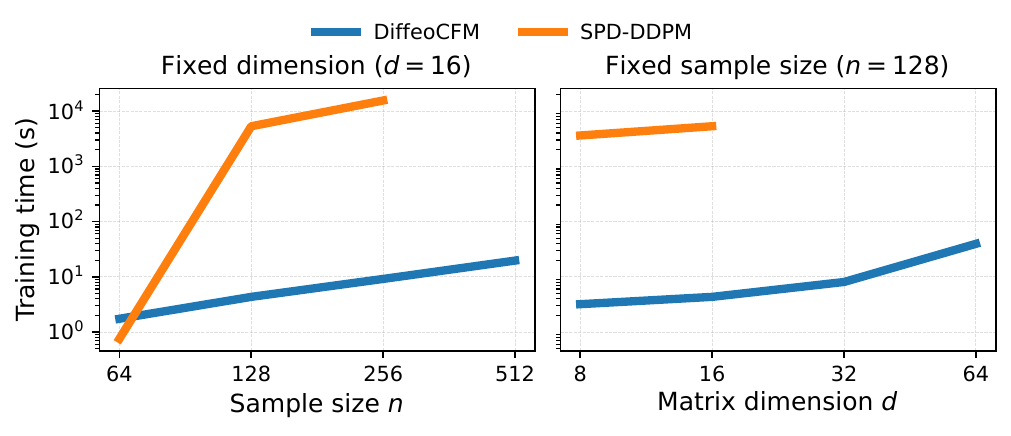}
    \caption{
        \textbf{Training efficiency on $\Spos$ (simulated data).}
        Log-scale training time over $1000$ epochs for generating SPD matrices.
        We vary either the number of samples $n$ or the matrix dimension $d$, keeping the other fixed.
        \proposed leverages the diffeomorphism $\phi_\Spos$~\eqref{eq:diffeo_spd} and is at least $1000\times$ faster than \method{SPD-DDPM}~\cite{li2024spd}.
        Importantly, \emph{\proposed can also generate natively correlation matrices ($\corr$), and it performs with same training time as on $\Spos$.}
    }
    \label{fig:wrap_training_time}
\end{figure}

\section{Ablation Study}
\label{sec:ablation}

We conduct an ablation study to analyze the sensitivity of our proposed \proposed{} model to its neural network architecture. The experiments are performed on the \dataset{ABIDE} dataset using the Cholesky diffeomorphism. We investigate two key architectural hyperparameters: the width (size of hidden layers) and the depth (number of hidden layers).

First, we fix the network depth to a single hidden layer and vary its width from 8 to 1024, as detailed in \tabref{tab:ablation_width}. Second, we fix the layer width to 512 neurons and vary the depth from one to four hidden layers, with results presented in \tabref{tab:ablation_depth}. For both experiments, performance is evaluated using the $\alpha,\beta$-F1 score, which is the harmonic mean of $\alpha$-precision and $\beta$-recall. We also report the average training and sampling times over 10 splits.

The results from \tabref{tab:ablation_width} indicate that performance generally improves with network width, peaking at 256 and 512 hidden units. A wider network of 1024 units shows a drop in performance.
Regarding network depth, \tabref{tab:ablation_depth} clearly shows that a single hidden layer achieves the best results. Deeper models exhibit a consistent decline in performance. Based on these findings, a single hidden layer with a width of 512 was chosen for our main experiments, as it offers the best performance without unnecessary complexity.

\begin{table}[h!]
    \centering
    \caption{
        Ablation study on the \textbf{width} of the hidden layer for the \proposed{} model on the \dataset{ABIDE} dataset. The network depth is fixed to one hidden layer. We report the mean and standard deviation of the $\alpha,\beta$-F1 score, training time, and sampling time across 10 splits.
        \textbf{Best} values are in bold.
    }
    \label{tab:ablation_width}
    \begin{tabular}{lrlll}
    \toprule
    Hidden dim. & $\alpha,\beta$-F1 & Training time (s.) & Sampling time (s.) \\
    \midrule
    8 & 0.47 ± 0.06 & 29.07 ± 1.07 & 0.35 ± 0.01 \\
    16 & 0.47 ± 0.09 & \textbf{27.46 ± 1.24} & 0.29 ± 0.03 \\
    32 & 0.54 ± 0.05 & 28.36 ± 1.26 & \textbf{0.26 ± 0.02} \\
    64 & 0.57 ± 0.07 & 29.69 ± 1.31 & 0.30 ± 0.03 \\
    128 & 0.60 ± 0.05 & 31.01 ± 1.32 & 0.35 ± 0.04 \\
    256 & \textbf{0.63 ± 0.07} & 31.39 ± 0.96 & 0.32 ± 0.04 \\
    512 & \textbf{0.63 ± 0.06} & 31.36 ± 1.14 & 0.31 ± 0.02 \\
    1024 & 0.49 ± 0.12 & 32.34 ± 1.17 & 0.36 ± 0.05 \\
    \bottomrule
    \end{tabular}
\end{table}

\begin{table}[h!]
    \centering
    \caption{Ablation study on the \textbf{depth} of the neural network for the \proposed{} model on the \dataset{ABIDE} dataset. The width of all hidden layers is fixed to 512. We report the mean and standard deviation of the $\alpha,\beta$-F1 score, training time, and sampling time across 10 splits. \textbf{Best} values are in bold.}
    \label{tab:ablation_depth}
    \begin{tabular}{lrlll}
    \toprule
    Depth & $\alpha,\beta$-F1 & Training time (s.) & Sampling time (s.) \\
    \midrule
    1 & \textbf{0.62 ± 0.05} & 33.06 ± 1.05 & 0.38 ± 0.03 \\
    2 & 0.52 ± 0.09 & \textbf{32.93 ± 1.00} & \textbf{0.36 ± 0.02} \\
    3 & 0.46 ± 0.05 & 34.36 ± 1.00 & 0.46 ± 0.04 \\
    4 & 0.50 ± 0.08 & 35.68 ± 1.01 & 0.48 ± 0.08 \\
    \bottomrule
    \end{tabular}
\end{table}

\section{Constraint Satisfaction ($\Spos$ and $\corr$)}

\tabref{tab:fmri_fraction_constraints} reports the fraction of generated samples that satisfy structural matrix constraints across datasets.
Both \method{DiffeoGauss} and \method{DiffeoCFM} systematically generate matrices that are symmetric, positive definite, and (for correlation matrices) have unit diagonal, achieving a perfect $1.00$ score on all datasets.
In contrast, \method{TriangCFM} without projection produces matrices that are symmetric and have unit diagonal by construction but fail to ensure positive definiteness.
This issue is particularly evident in fMRI datasets, where none of the generated samples satisfy the SPD constraint, and persists to a lesser extent on EEG datasets (e.g., $29\%$ validity on BNCI2015-001).
These results highlight the importance of geometry-aware methods like \proposed, which inherently respect the manifold structure of connectivity matrices without relying on post hoc projections.
        
\begin{table}[H]
    \centering
    \caption{
        \textbf{Fraction of generated samples satisfying matrix constraints.}
        We report the fraction of samples satisfying symmetry, positive definiteness, and unit diagonal constraints across datasets.
    }
    \label{tab:fmri_fraction_constraints}
    \begin{tabular}{ccccc}
    \toprule
     &  & Sym. & Pos. def. & Unit diag. \\
    Dataset & Method &  &  &  \\
    \midrule
    \multirow[c]{3}{*}{\small{\rotatebox{90}{\shortstack{ABIDE}}}} & \method{DiffeoGauss} & \textbf{1.00 ± 0.00} & \textbf{1.00 ± 0.00} & \textbf{1.00 ± 0.00} \\
     & \method{TriangCFM} (no proj.) & \textbf{1.00 ± 0.00} & 0.00 ± 0.00 & \textbf{1.00 ± 0.00} \\
     & DiffeoCFM & \textbf{1.00 ± 0.00} & \textbf{1.00 ± 0.00} & \textbf{1.00 ± 0.00} \\
    \midrule
    \multirow[c]{3}{*}{\small{\rotatebox{90}{\shortstack{ADNI}}}} & \method{DiffeoGauss} & \textbf{1.00 ± 0.00} & \textbf{1.00 ± 0.00} & \textbf{1.00 ± 0.00} \\
     & \method{TriangCFM} (no proj.) & \textbf{1.00 ± 0.00} & 0.00 ± 0.00 & \textbf{1.00 ± 0.00} \\
     & DiffeoCFM & \textbf{1.00 ± 0.00} & \textbf{1.00 ± 0.00} & \textbf{1.00 ± 0.00} \\
    \midrule
    \multirow[c]{3}{*}{\small{\rotatebox{90}{\shortstack{OASIS-3}}}} & \method{DiffeoGauss} & \textbf{1.00 ± 0.00} & \textbf{1.00 ± 0.00} & \textbf{1.00 ± 0.00} \\
     & \method{TriangCFM} (no proj.) & \textbf{1.00 ± 0.00} & 0.00 ± 0.00 & \textbf{1.00 ± 0.00} \\
     & DiffeoCFM & \textbf{1.00 ± 0.00} & \textbf{1.00 ± 0.00} & \textbf{1.00 ± 0.00} \\
    \midrule
    \multirow[c]{3}{*}{\rotatebox{90}{\small{\shortstack{BNCI\\ 2014-002}}}} & \method{DiffeoGauss} & \textbf{1.00 ± 0.00} & \textbf{1.00 ± 0.00} \\
     & \method{TriangCFM} (no proj.) & \textbf{1.00 ± 0.00} & 0.00 ± 0.00 \\
     & DiffeoCFM & \textbf{1.00 ± 0.00} & \textbf{1.00 ± 0.00} \\
    \midrule
    \multirow[c]{3}{*}{\rotatebox{90}{\small{\shortstack{BNCI\\ 2015-001}}}} & \method{DiffeoGauss} & \textbf{1.00 ± 0.00} & \textbf{1.00 ± 0.00} \\
     & \method{TriangCFM} (no proj.) & \textbf{1.00 ± 0.00} & 0.29 ± 0.02 \\
     & DiffeoCFM & \textbf{1.00 ± 0.00} & \textbf{1.00 ± 0.00} \\
    \bottomrule
    \end{tabular}
\end{table}

\section{Projection}
\label{app:projection}

Since \method{TriangDDPM} and \method{TriangCFM} operate in Euclidean spaces, the generated matrices are not guaranteed to be positive definite.
To address this, we apply a projection to ensure all eigenvalues are at least $\epsilon > 0$. Let $\lambda_{\text{min}}(\bSigma)$ denote the smallest eigenvalue of $\bSigma$.
If $\lambda_{\text{min}}(\bSigma) < \epsilon$, the matrix is projected using an affine transformation:
\begin{equation}
    \bSigma_{\text{proj}} = (1 - \alpha) \bSigma + \alpha \bI
\end{equation}
where $\bI$ is the identity matrix.
The parameter $\alpha$ is chosen such that the smallest eigenvalue of $\bSigma_{\text{proj}}$ becomes exactly $\epsilon$.
Given that the eigenvalues of $\bSigma_{\text{proj}}$ are $\lambda_i(\bSigma_{\text{proj}}) = (1-\alpha)\lambda_i(\bSigma) + \alpha$, setting the new minimum eigenvalue to $\epsilon$ yields $\epsilon = (1-\alpha)\lambda_{\text{min}}(\bSigma) + \alpha$. Solving for $\alpha$, we get:
\begin{equation}
    \alpha = \frac{\epsilon - \lambda_{\text{min}}(\bSigma)}{1 - \lambda_{\text{min}}(\bSigma)}\, .
\end{equation}
This formulation guarantees that if $\lambda_{\text{min}}(\bSigma) < \epsilon$, the new minimum eigenvalue is $\epsilon$.
Typically, $\epsilon$ is a small positive value (e.g., $10^{-8}$), ensuring $\epsilon \le 1$.
In this regime, and given $\lambda_{\text{min}}(\bSigma) < \epsilon$, it follows that $0 < \alpha \le 1$, making the projection an interpolation. An important property of this projection is that if the original matrix $\bSigma$ has ones on its diagonal ($\text{diag}(\bSigma)=\mathbf{1}$), this property is preserved in $\bSigma_{\text{proj}}$, as $\text{diag}(\bSigma_{\text{proj}}) = (1-\alpha)\mathbf{1} + \alpha\mathbf{1} = \mathbf{1}$.

We randomly select 5 sample images from each generated dataset produced by the \method{TriangCFM} method across the 5 datasets. These images are not positive definite matrices, as they contain negative eigenvalues. We apply a projection algorithm to convert these matrices into valid correlation matrices, and both the original and projected versions are shown in \figref{fig:fMRI_projection} and \figref{fig:EEG_projection}. In each image, we annotate the maximum and minimum eigenvalues of the corresponding matrix. As can be seen, after the projection algorithm, all negative eigenvalues are eliminated, and the matrices become valid positive definite matrices.
\emph{Unfortunately, although this step is necessary for \method{TriangDDPM} and \method{TriangCFM}, it alters the spectrum of the matrices, which is usually crucial in the downstream classification tasks. The advantage of the proposed \method{\proposed} lies in the fact that it does not require this step, and thus naturally preserves the matrix spectrum.}

\begin{figure}
  \centering
  \begin{subfigure}[t]{\linewidth}
      \centering
      \setlength{\tabcolsep}{1pt}
         \includegraphics[width=\linewidth]{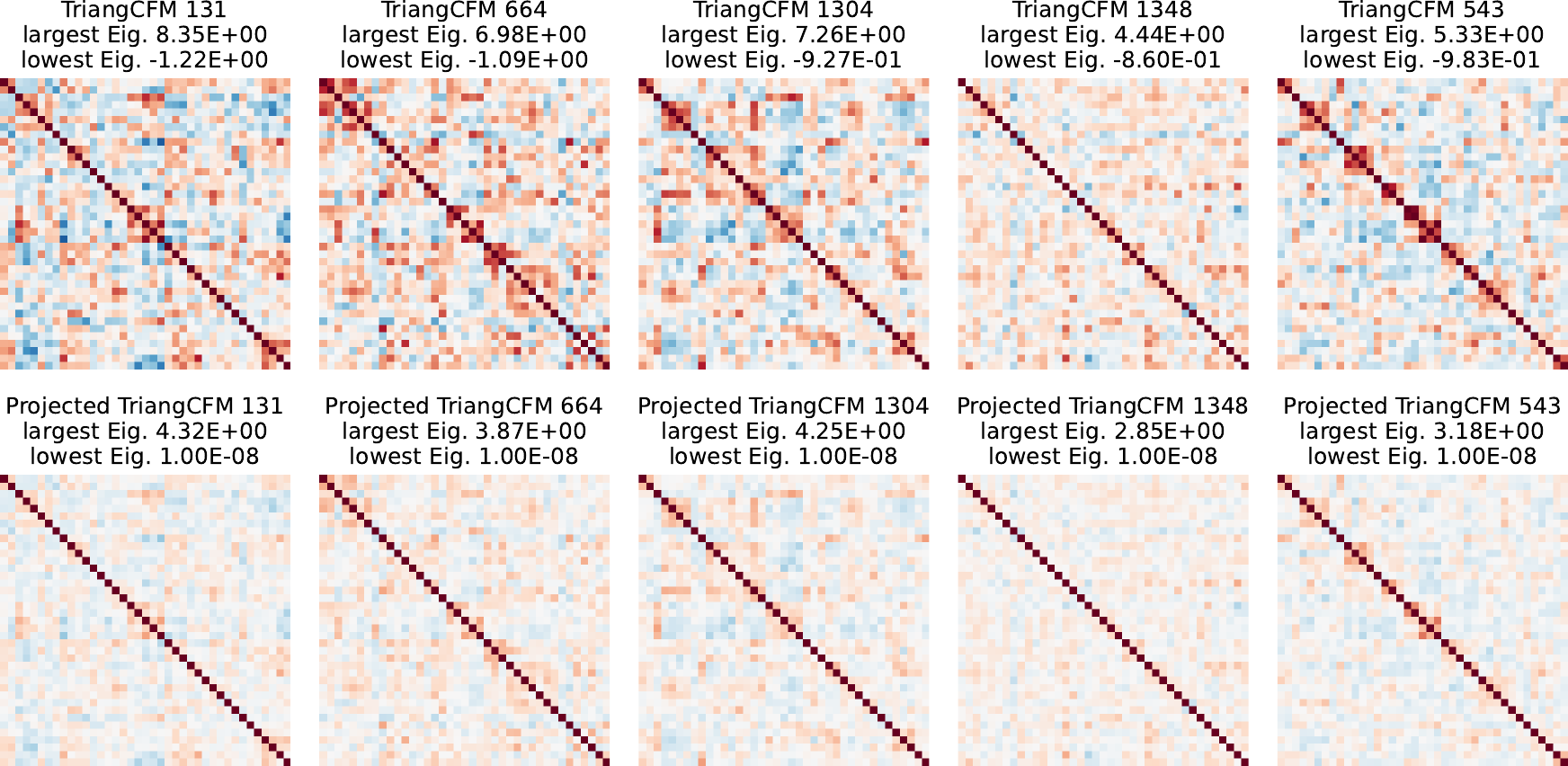} 
      \caption{\textbf{ADNI}}
  \end{subfigure}
  \begin{subfigure}[t]{\linewidth}
      \centering
      \setlength{\tabcolsep}{1pt}
         \includegraphics[width=\linewidth]{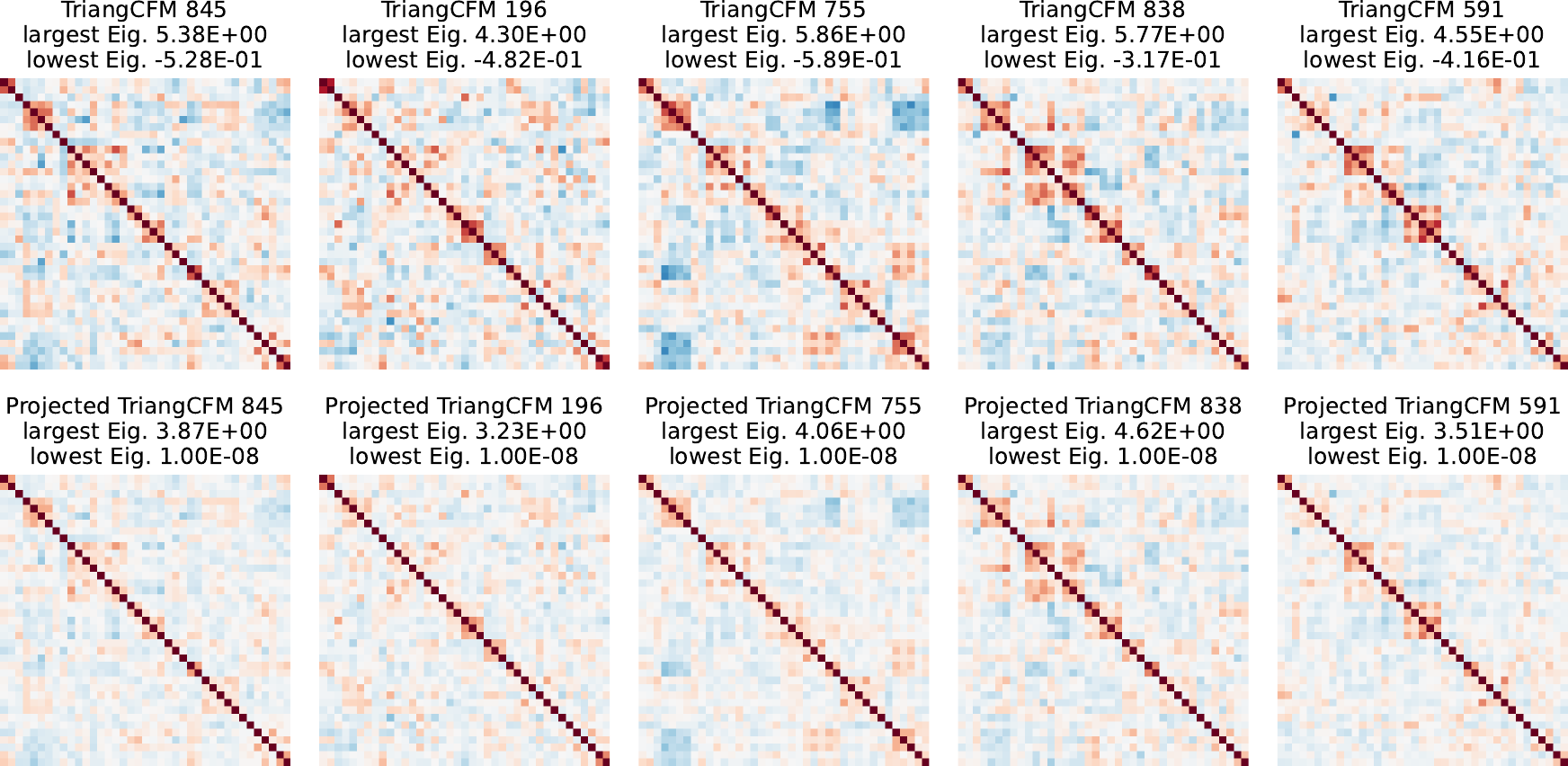} 
      \caption{\textbf{ABIDE}}
  \end{subfigure}
  \begin{subfigure}[t]{\linewidth}
      \centering
      \setlength{\tabcolsep}{1pt}
         \includegraphics[width=\linewidth]{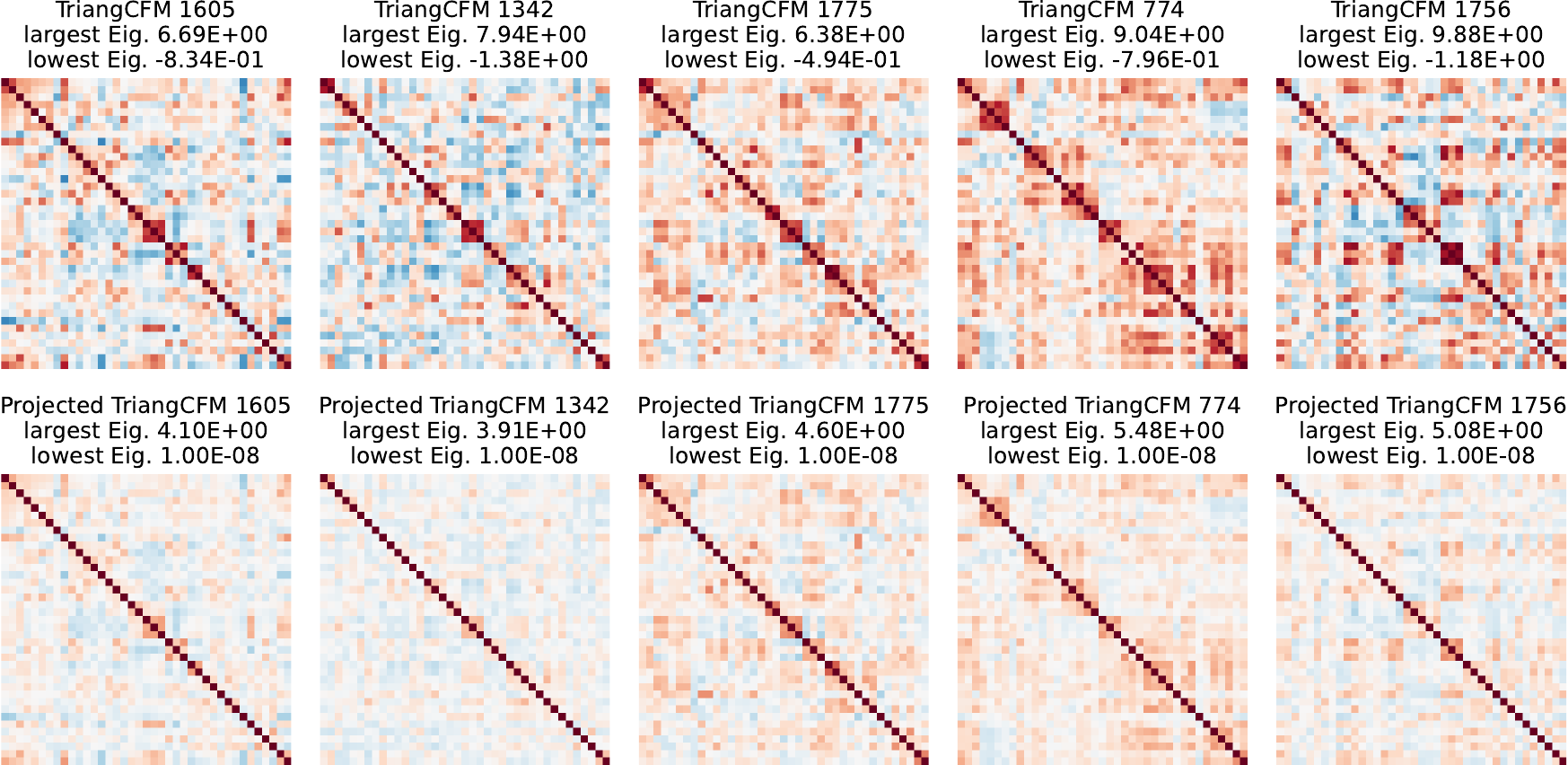} 
      \caption{\textbf{OASIS3}}
  \end{subfigure}
  \caption{Comparison of fMRI matrices before and after projection}
  \label{fig:fMRI_projection}
\end{figure}

\begin{figure}
  \centering
  \begin{subfigure}[t]{\linewidth}
      \centering
      \setlength{\tabcolsep}{1pt}
         \includegraphics[width=\linewidth]{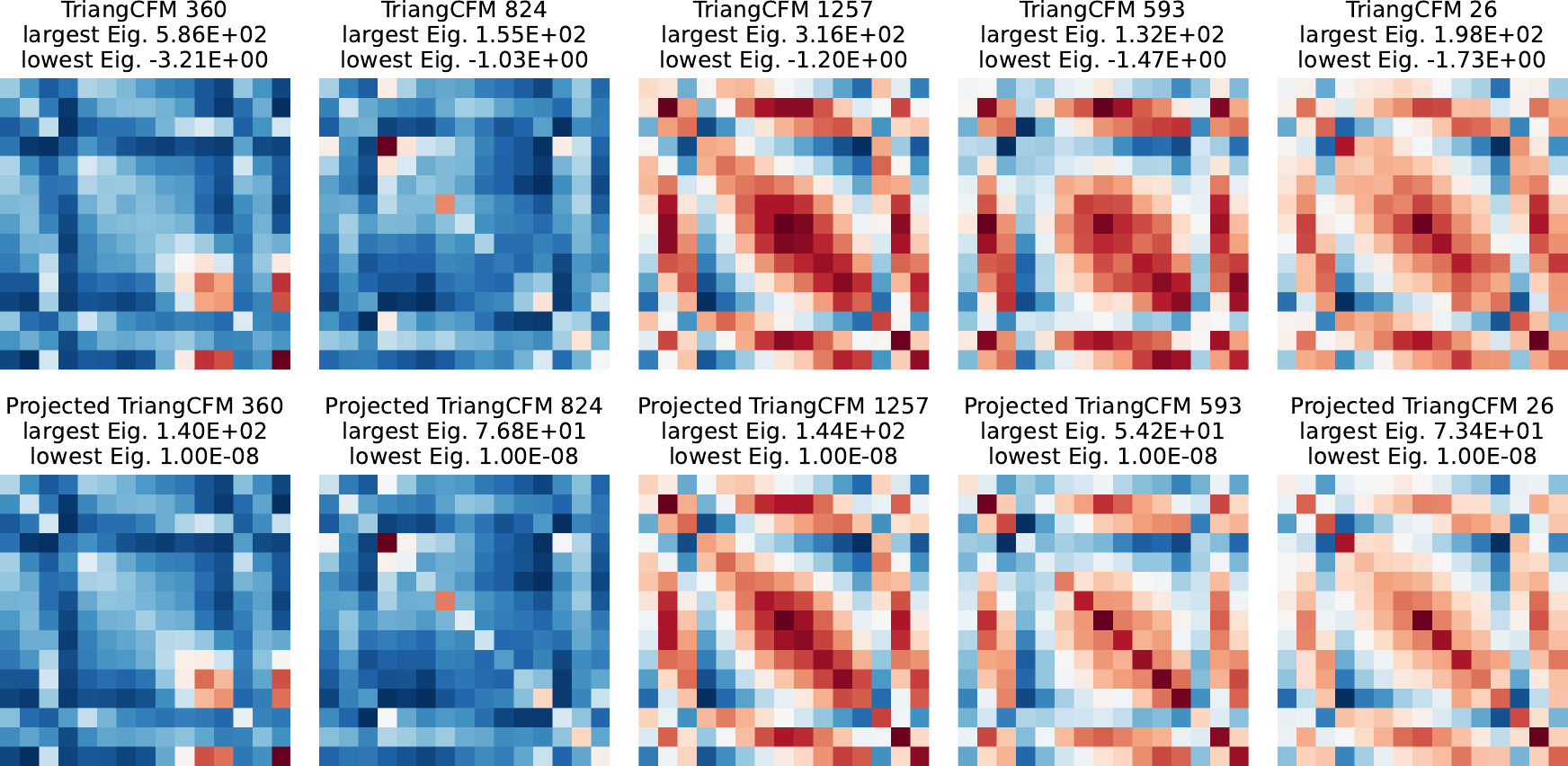} 
      \caption{\textbf{BNCI2014-002}}
  \end{subfigure}
  \begin{subfigure}[t]{\linewidth}
      \centering
      \setlength{\tabcolsep}{1pt}
         \includegraphics[width=\linewidth]{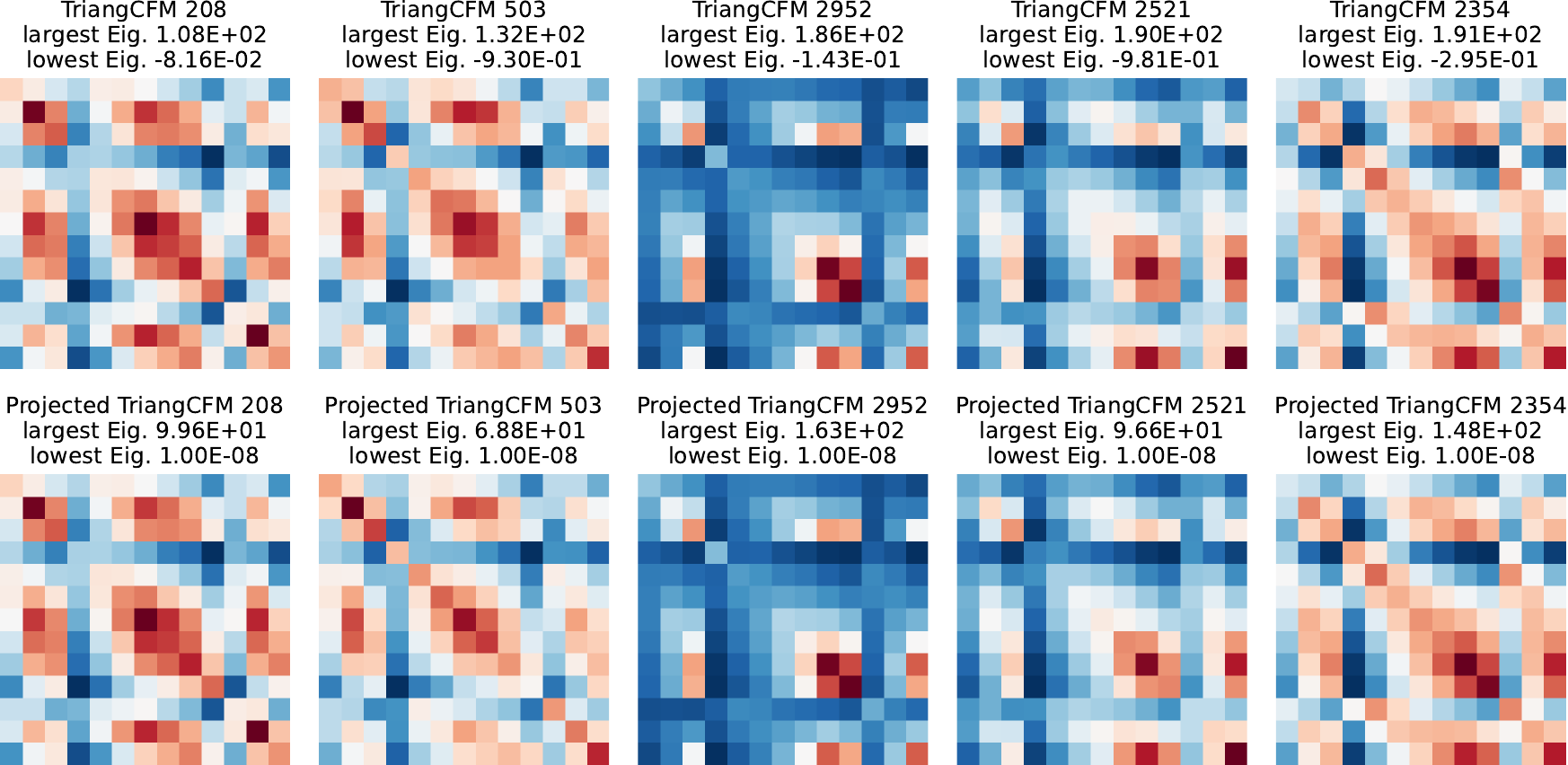} 
      \caption{\textbf{BNCI2015-001}}
  \end{subfigure}
  \caption{Comparison of EEG matrices before and after projection}
  \label{fig:EEG_projection}
\end{figure}

\section{\proposed on EEGs}
\label{app:EEG}

\subsection{Filter's Topographic Map}


\subsection*{Common Spatial Pattern} The Common Spatial Pattern (CSP) filter is a supervised spatial filtering technique widely used in EEG-based signal analysis, particularly for motor imagery classification. It enhances class-discriminative information by projecting multi-channel EEG data onto a low-dimensional space that maximizes variance differences between two classes~\cite{blankertz2007optimizing}. 
Formally, for a signal $x(t) \in \mathbb{R}^{N_C}$, the CSP algorithm seeks a projection matrix $W\in \mathbb{R}^{N_C \times N_C}$ such that the transformed signal $x_{CSP}(t) = W^Tx(t)$ yields maximally discriminative variance patterns across classes, where $N_C$ is the number of electrode channels. Each column vector $w_j \in \mathbb{R}^{N_C}$ $(j = 1,\cdots,N_C)$ of matrix $W$ is called the filter of CSP.  

Generally, CSP is a spatial filtering method that effectively enhances the discrimination of mental states characterized by event-related de-synchronization/synchronization (ERD/ERS), which commonly occurs in the alpha (8–12 Hz) and beta (13–30 Hz) bands during motor imagery tasks~\cite{pfurtscheller1999event}.

\subsection*{Experimental Settings}
In this experiment, we aim to demonstrate whether the proposed model, \proposed, can learn similar spatial filter patterns $\{w_j\}_{j=1}^{N_C}$ in the alpha and beta frequency bands during motor imagery, as reflected in their corresponding topographies.

Specifically, we conduct this analysis only on the BNCI2015-001 dataset, as it provides the predefined order of 13 EEG channels, allowing us to correctly map each channel to its corresponding position in the topographic montage. In contrast, the BNCI2014-002 dataset does not offer such channel ordering information. 

All filters were computed using data from session 1 of each subject individually, in order to remain consistent with the cross-session experimental setting. The filters for real data were computed directly using the CSP algorithm on session 1 data from each subject, focusing on the 8-12 Hz (alpha) and 13-30 Hz (beta) frequency bands during the first 2 seconds following the cue. For the generated data, filters were obtained by applying the same CSP procedure on generated datasets produced by \proposed, with the same number of samples as in session 1. In both cases, only the spatial filter corresponding to the largest eigenvalue from CSP's generalized eigenvalue decomposition was retained for analysis.

\subsection{Results and Analysis}

\figref{fig:EEG_sub1-6} shows the topographic map using the first CSP filter for each subject in the BNCI2015-001 dataset (Subjects 1-6), while \figref{fig:EEG_sub7-12} displays the topographic map using the first CSP's filter for each subject in the same dataset (Subjects 7-12). We can observe that, except for Subjects 5, 6, 10, and 12, the topographies for the remaining subjects appear highly similar, with a noticeable increase in signal amplitude in the C3 region. Moreover, the average topographic map of these subjects, shown in \figref{fig:eeg-csp}, also reveals a very similar filter pattern.

This results show that the model, \proposed, used for generating the generated data has successfully captured the key characteristics and patterns of the real EEGs. This similarity in CSP's filters indicates that \textbf{the generated data closely resembles the real data in terms of spatial patterns of brain activity}, which is crucial for validating the effectiveness of the model in simulating realistic neural processes, particularly for motor imagery tasks. Additionally, it may imply that \textbf{the model has learned to preserve the underlying structure and discriminative features that are typically seen in real EEG data during motor imagery}.

\begin{figure}[ht]
  \centering
  \includegraphics[width=\linewidth]{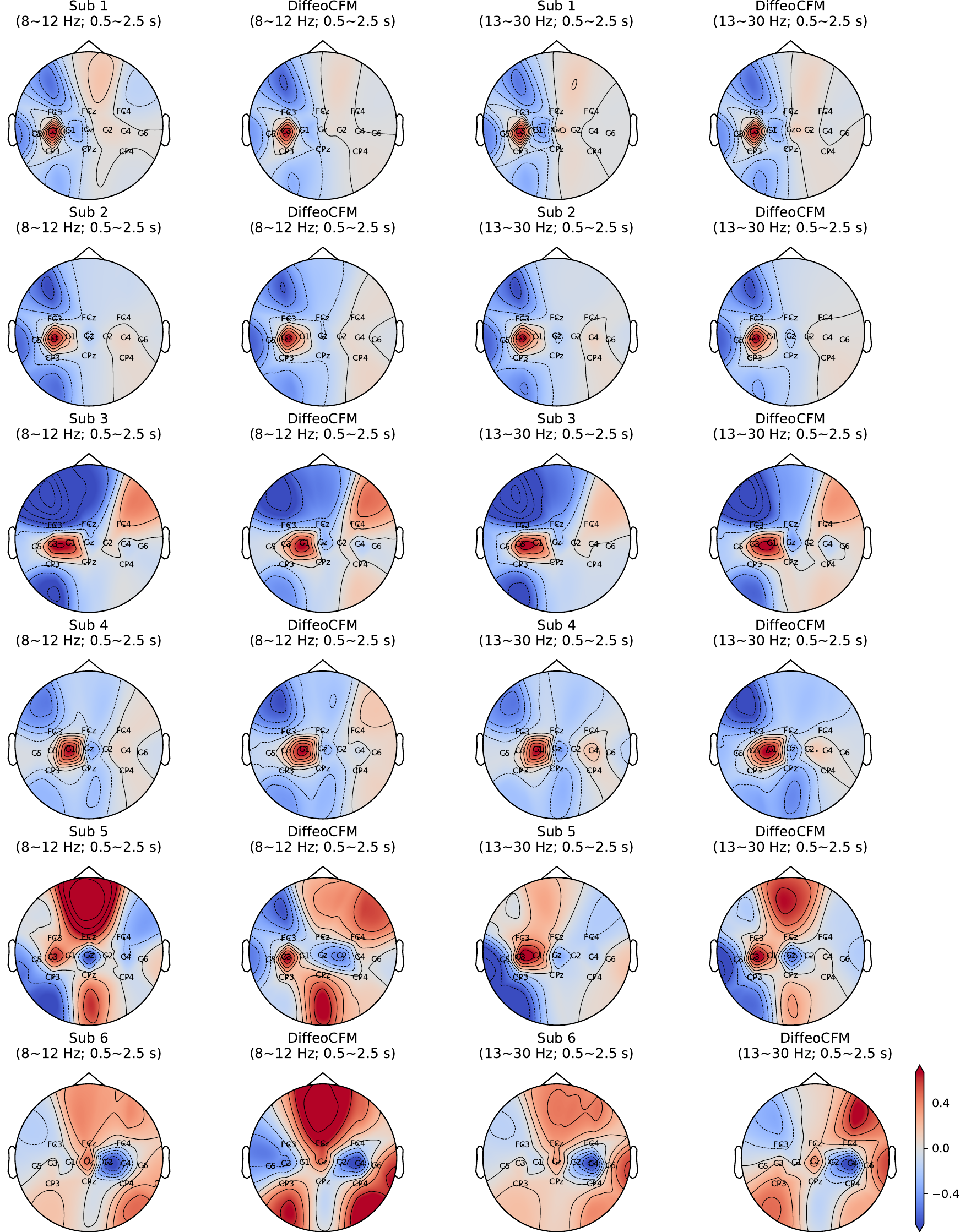}
  \caption{Subject-level topographic map using the first CSP's filter derived from real EEGs (BNCI2015-001, Sub 1-6) and generated data by \proposed}
  \label{fig:EEG_sub1-6}
\end{figure}

\begin{figure}[ht]
  \centering
  \includegraphics[width=\linewidth]{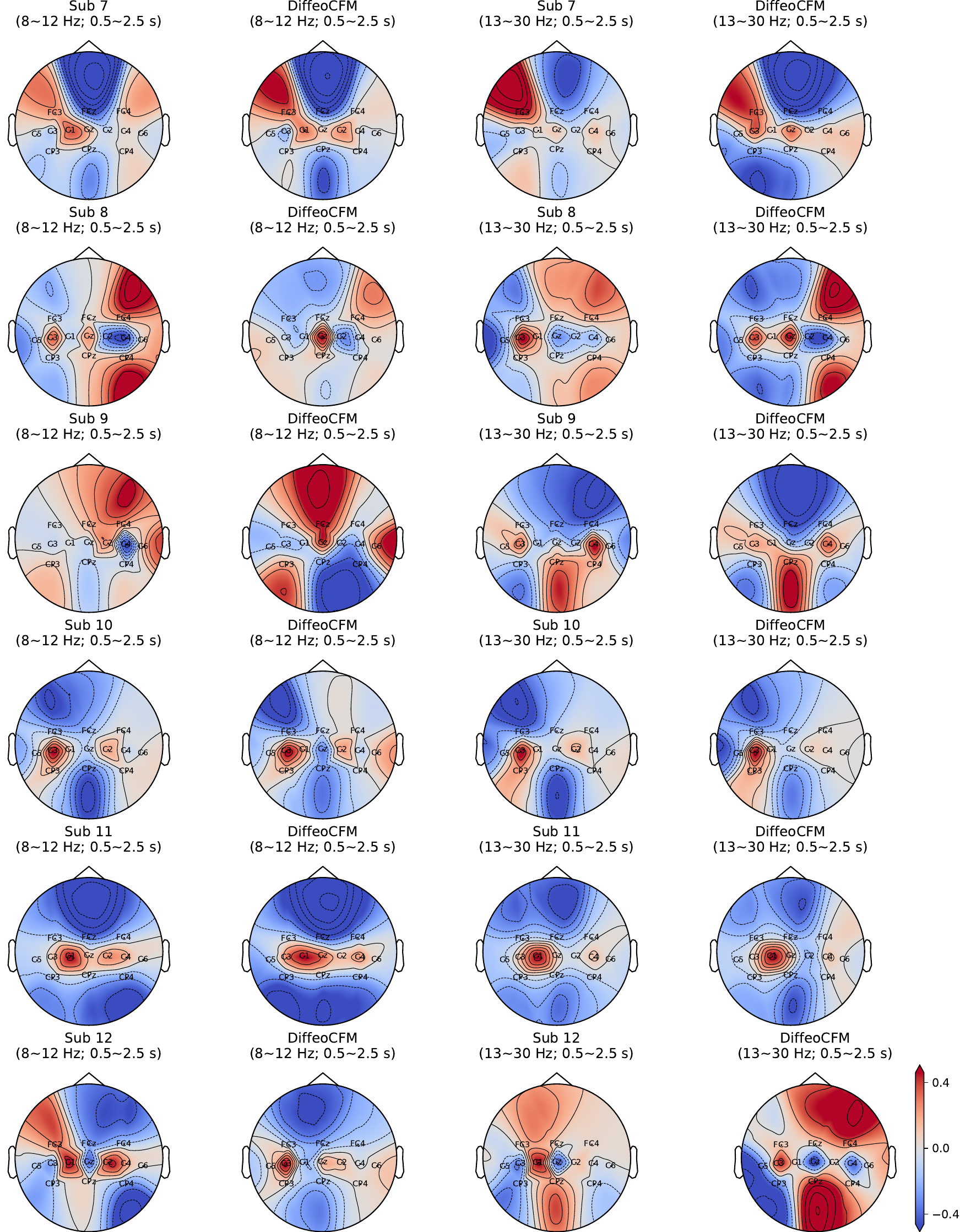}
  \caption{Subject-level topographic map using the first CSP's filter derived from real EEGs (BNCI2015-001, Sub 7-12) and generated data by \proposed}
  \label{fig:EEG_sub7-12}
\end{figure}


\newpage \, \newpage \, \newpage

\section{\proposed on fMRI}
\label{app:fMRI}


\subsection{fMRI Connectome Plotting}

\subsection*{Experimental Settings}

We use Nilearn~\cite{nilearn} for neuroimaging analysis and visualization based on the MSDL brain atlas. The atlas is retrieved using the fetch$\_$atlas$\_$msdl function, which provides probabilistic maps of functional brain networks in the form of a 4D NIfTI image. The maps attribute extracts this image, while find$\_$probabilistic$\_$atlas$\_$cut$\_$coords computes representative 3D coordinates for each network, typically corresponding to the spatial center of the activation, to support anatomical localization. In addition, predefined region labels are extracted to annotate the corresponding brain areas. These components together enable the visualization of brain networks, including plotting the regions with spatial coordinates and labels.

The connectome plots are grouped into two categories: "CN" and "non-CN". The adjacency$\_$matrix used in nilearn.plotting.plot$\_$connectome is computed as the Fréchet mean of the corresponding group of connectivity matrices in all the 5-fold experiments, and all edge thresholds in the plots are set to 90\%.

\subsection*{Results and Analysis}

\figref{fig:fmri_connectome_ABIDE} and \figref{fig:fmri_connectome_OASIS3} visualize group-averaged functional connectomes derived from two distinct neuroimaging datasets using the Fréchet mean. By thresholding edges to retain only the strongest 10\% of connections (90\% edge threshold), these plots emphasize dominant, statistically reliable interactions between brain regions while filtering out noise. The use of the Fréchet mean ensures that the group-level adjacency matrices respect the intrinsic geometric structure of covariance data, avoiding distortions caused by arithmetic averaging. \textbf{This comparison highlights both the fidelity of data generation methods by aligning generated and real data connectomes using the Fréchet mean and potential differences in functional network organization between the two cohorts.} The workflow underscores the utility of geometric statistics and stringent thresholds for interpreting brain connectivity in heterogeneous populations.

\begin{figure}
  \centering
  \begin{subfigure}[t]{\linewidth}
      \centering
      \setlength{\tabcolsep}{1pt}
      \begin{tabular}{cc}
         \includegraphics[width=0.49\linewidth]{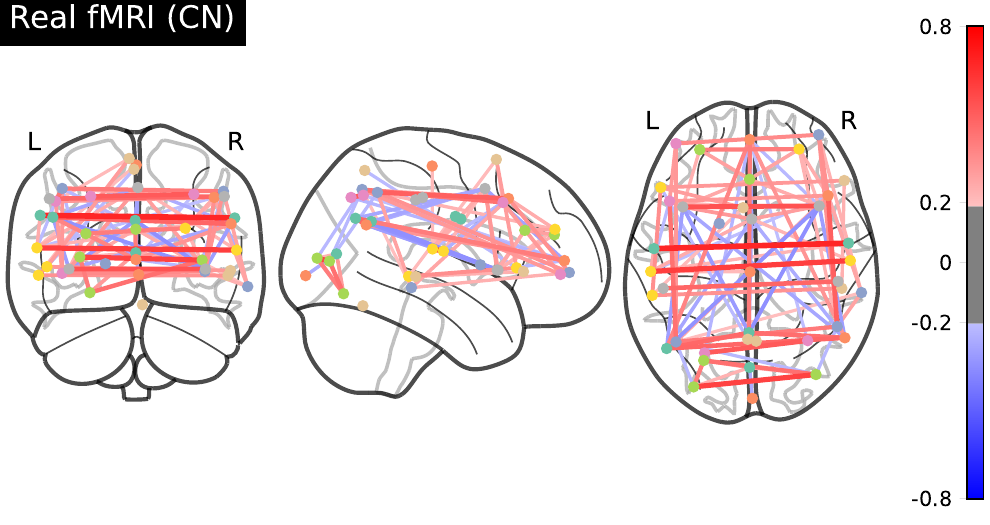} &
         \includegraphics[width=0.49\linewidth]{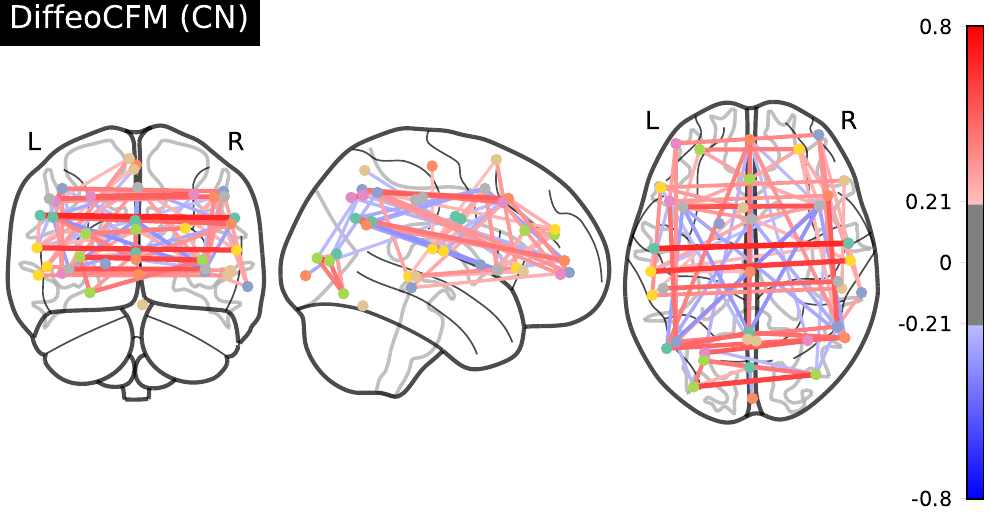} \\
         \includegraphics[width=0.49\linewidth]{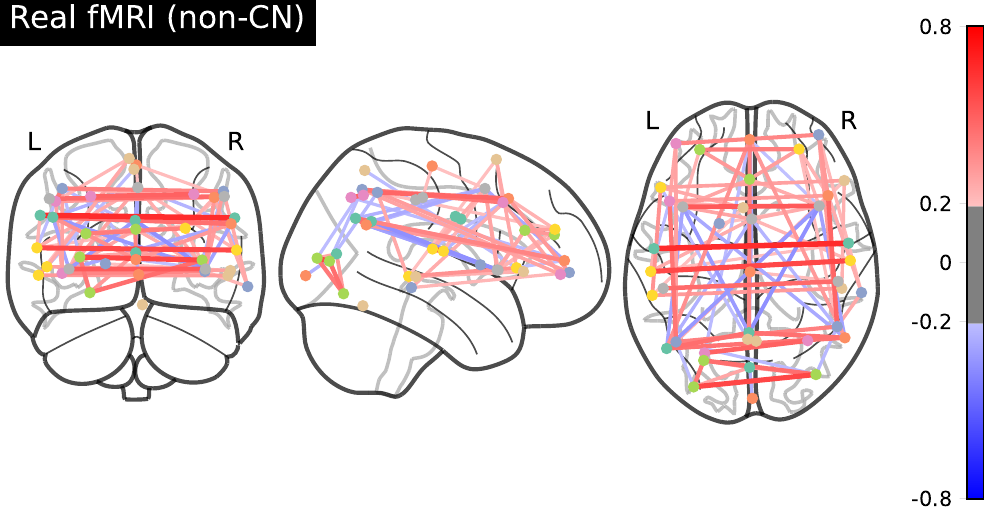} &
         \includegraphics[width=0.49\linewidth]{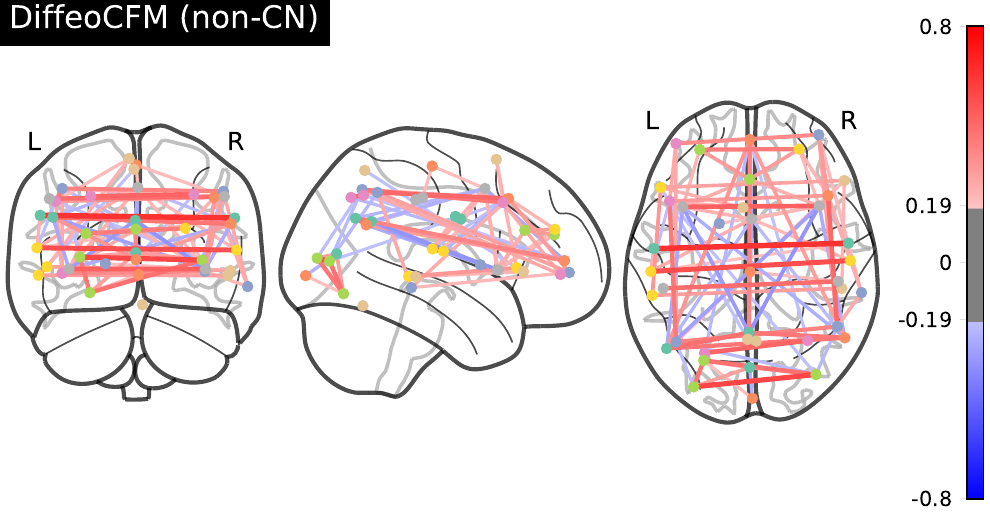} \\
      \end{tabular}
      \caption{
        \textbf{Class-conditional fMRI functional connectomes using the Fréchet mean (ABIDE).}
        }
      \label{fig:fmri_connectome_ABIDE}
  \end{subfigure}

  \vspace{1.2em}

  \begin{subfigure}[t]{\linewidth}
      \centering
      \setlength{\tabcolsep}{1pt}
      \begin{tabular}{cc}
         \includegraphics[width=0.49\linewidth]{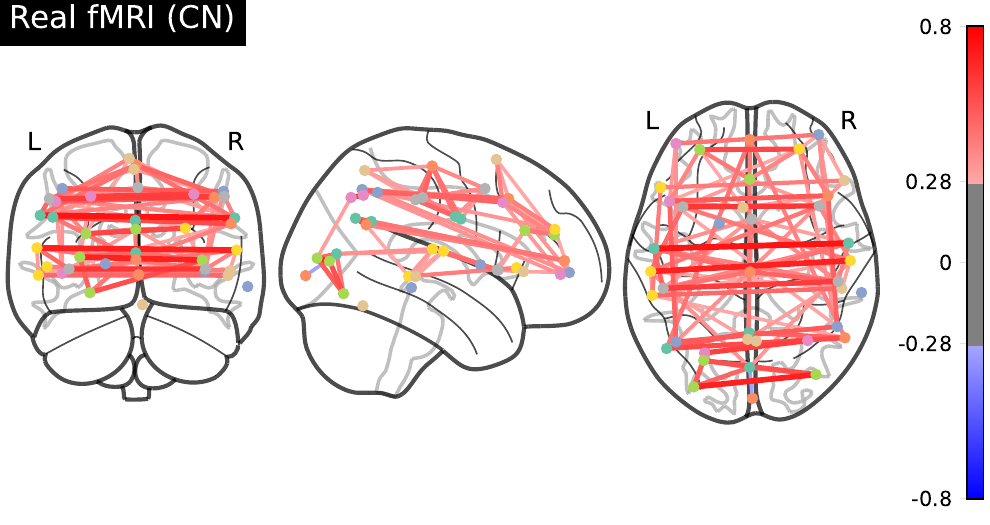} &
         \includegraphics[width=0.49\linewidth]{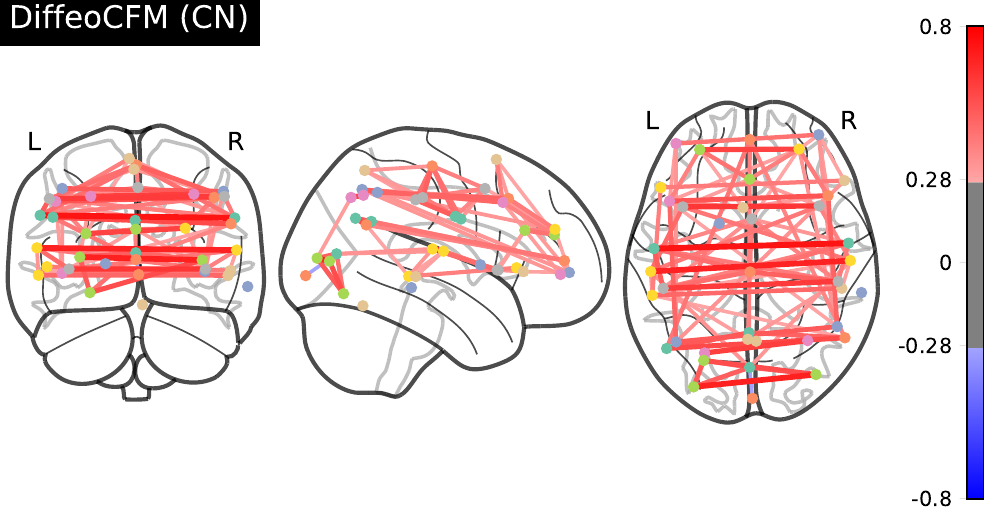} \\
         \includegraphics[width=0.49\linewidth]{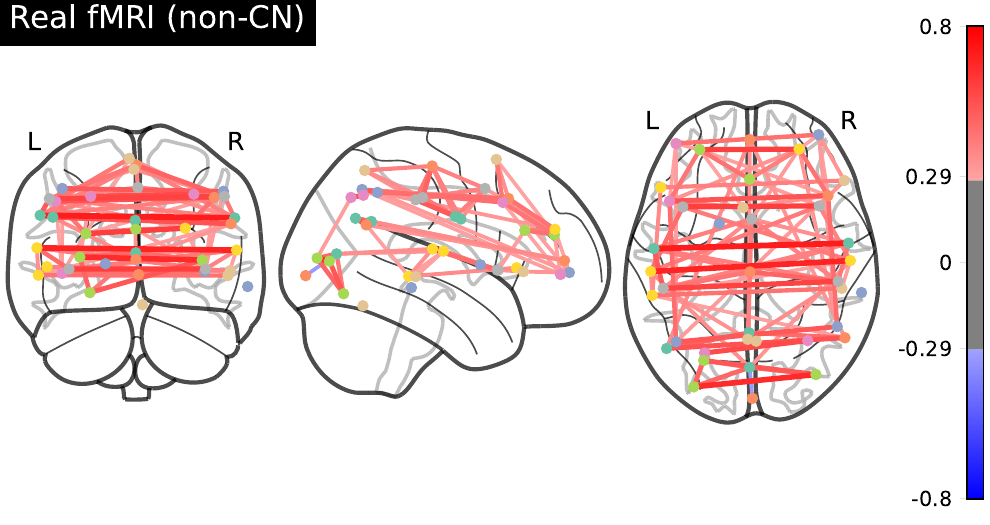} &
         \includegraphics[width=0.49\linewidth]{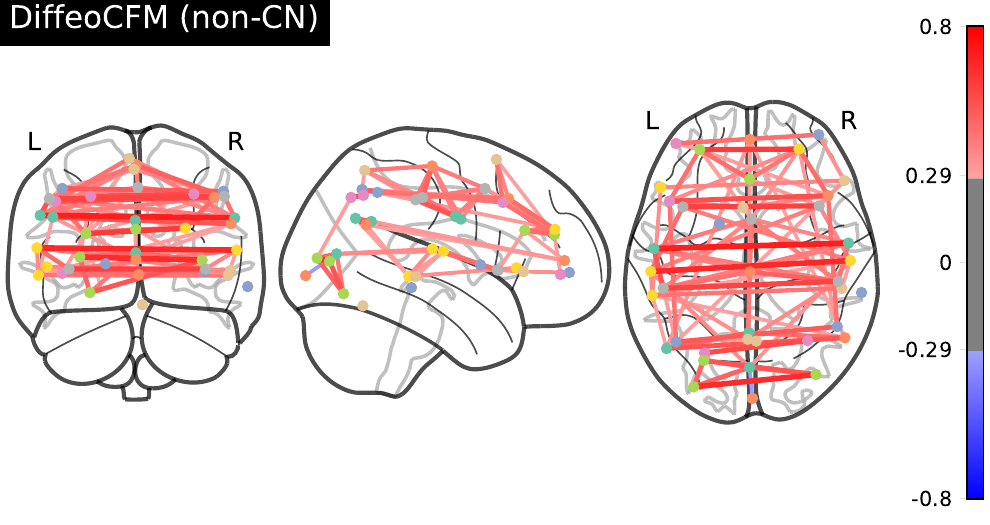} \\
      \end{tabular}
      \caption{
        \textbf{Class-conditional fMRI functional connectomes using the Fréchet mean (OASIS3).}
        }
      \label{fig:fmri_connectome_OASIS3}
  \end{subfigure}
  \caption{Group-level functional connectomes, computed as Fréchet means from two separate neuroimaging datasets, are visualized after applying a 90\% edge threshold to retain only the top 10\% of strongest connections. This thresholding highlights dominant and statistically robust interactions between brain regions, effectively reducing noise. Leveraging the Fréchet mean preserves the underlying geometric structure of the covariance matrices, avoiding potential distortions introduced by conventional Euclidean averaging.}
  \label{fig:fMRI_ABIDE_OASIS3}
\end{figure}

\newpage

\section{Plotting of Generated Samples in Real Data Neighborhoods}
\label{app:generated_samples_in_real_data_neighborhoods}

\figref{fig:G2R_ADNI_CN}-\ref{fig:G2R_OASIS3_nonCN} present the selected generated fMRI samples that are nearest to real data points in terms of Frobenius distance, categorized by "CN" and "non-CN" from the ADNI, ABIDE, and OASIS3 datasets..
\figref{fig:G2R_2014_hand}-\ref{fig:G2R_2015_feet} show the selected generated EEG samples that are nearest to real samples in Frobenius distance, separately for hand and feet motor imagery tasks from the BNCI2014-002 and BNCI2015-001 datasets.
Each real sample is annotated with its index in the original dataset. The first, second, and third columns correspond to samples generated by \method{DiffeoGauss}, \method{TriangCFM}, and \proposed, respectively. The numeric values on the generated samples indicate the Frobenius distance to the corresponding real sample. Each generated sample shown is the nearest one (in Frobenius distance) to the real sample within its class.
Due to its consistently highest recall across multiple datasets, \proposed is more likely to produce generated samples that are closer to real samples, i.e., with the smallest Frobenius distance, compared to those generated by other methods.
The matrices generated by \method{TriangCFM} shown in the figures are presented without applying the projection step. Before applying the projection, the outputs of \method{TriangCFM} already exhibit strong recall scores.

\begin{figure}[ht]
  \centering
  \includegraphics[width=\linewidth]{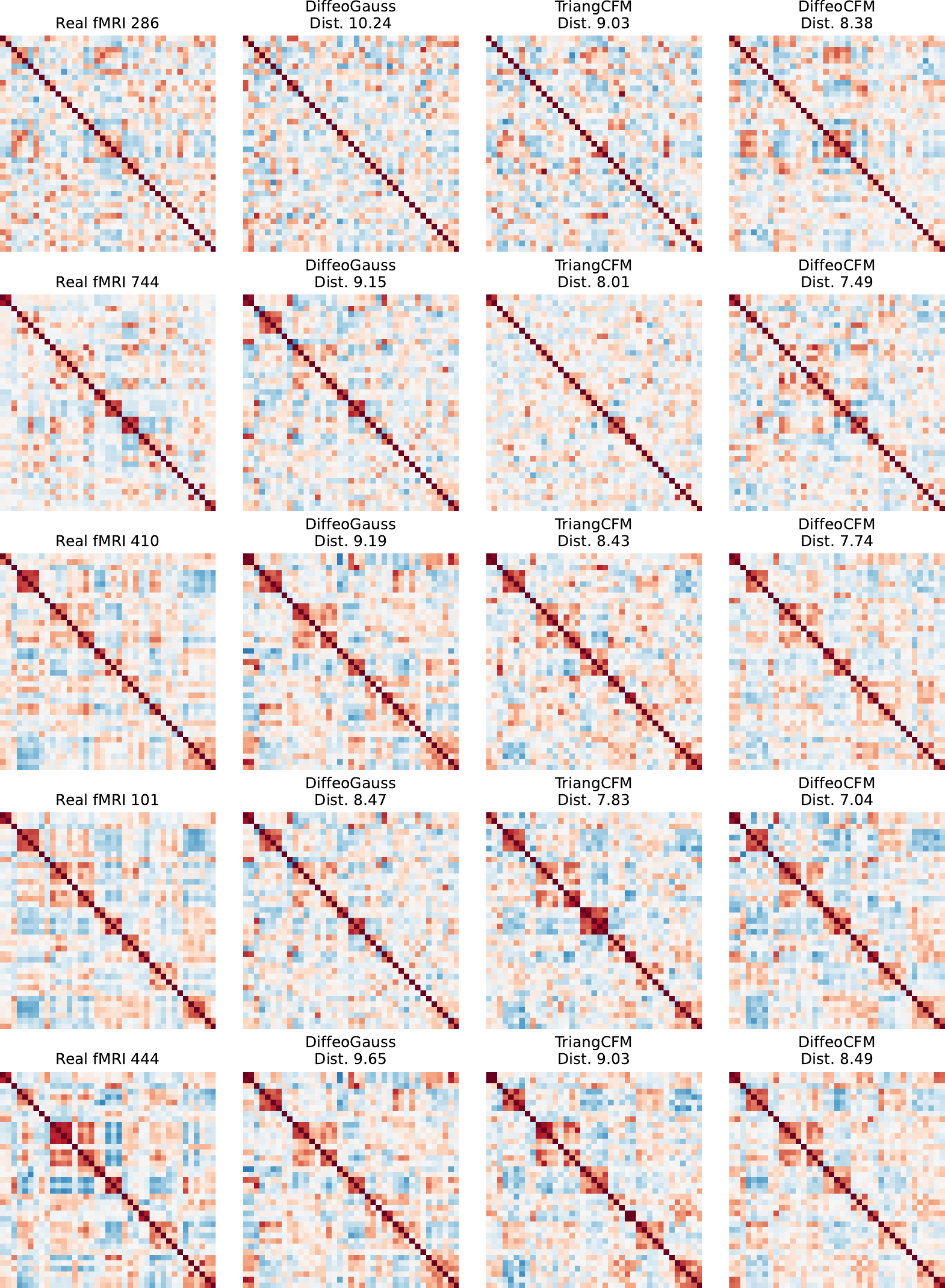}
  \caption{Nearest Generated Samples in Real Data Neighborhoods: ADNI-CN Cohort. The matrices generated by \method{TriangCFM} shown in the figures are presented without applying the projection step. }
  \label{fig:G2R_ADNI_CN}
\end{figure}

\begin{figure}[ht]
  \centering
  \includegraphics[width=\linewidth]{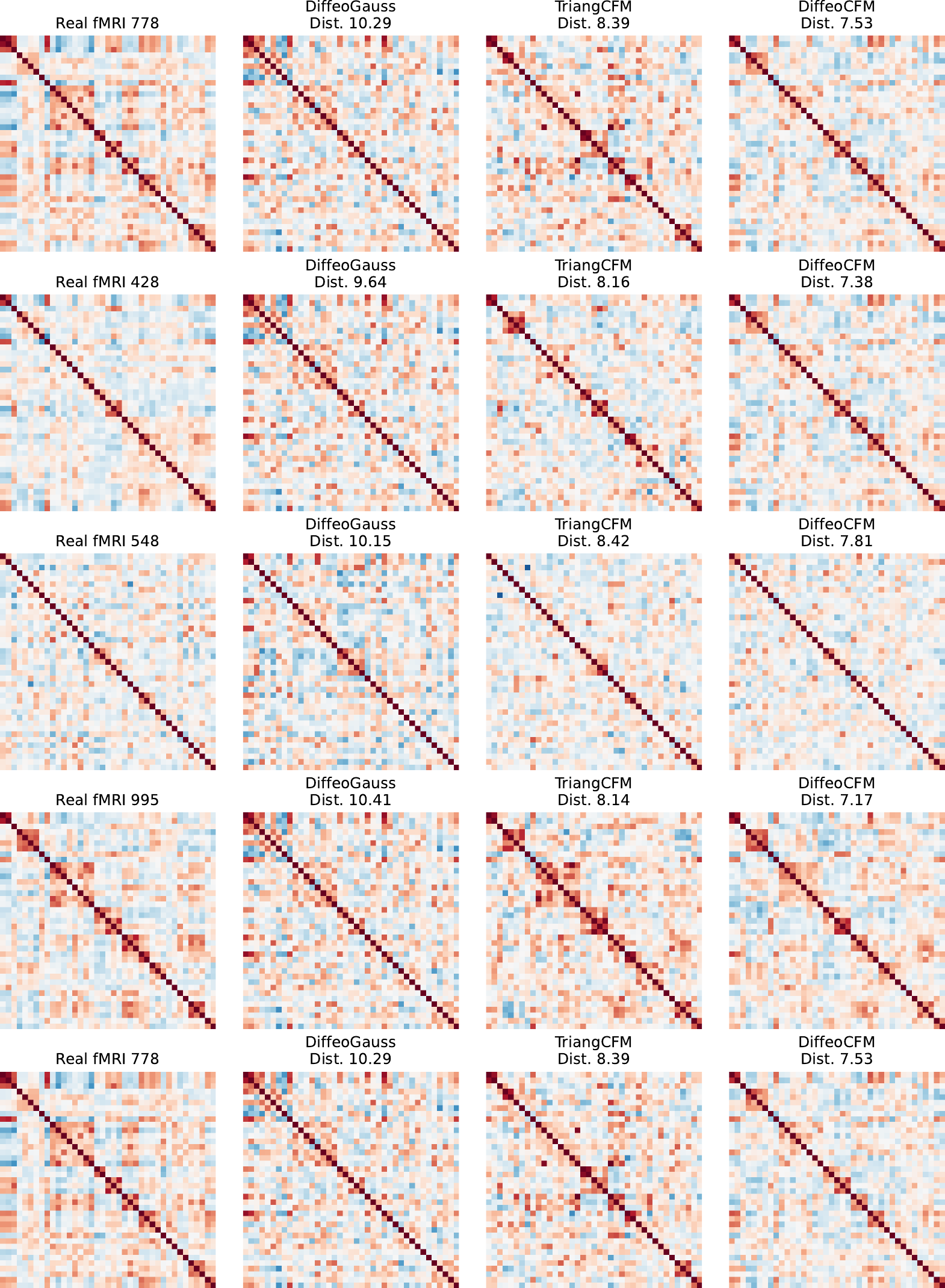}
  \caption{Nearest Generated Samples in Real Data Neighborhoods: ADNI-nonCN Cohort. The matrices generated by \method{TriangCFM} shown in the figures are presented without applying the projection step. }
  \label{fig:G2R_ADNI_nonCN}
\end{figure}

\begin{figure}[ht]
  \centering
  \includegraphics[width=\linewidth]{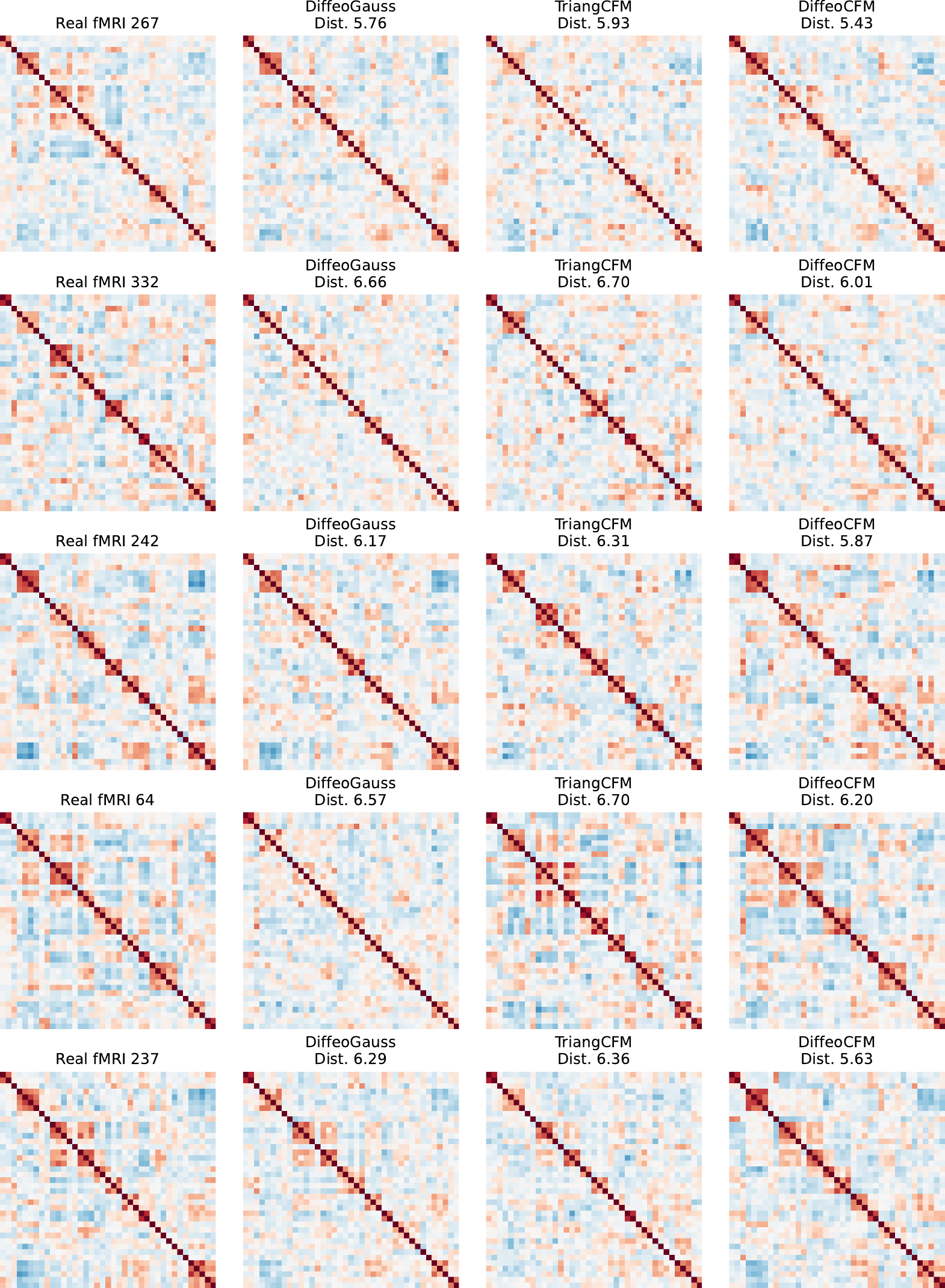}
  \caption{Nearest Generated Samples in Real Data Neighborhoods: ABIDE-CN Cohort. The matrices generated by \method{TriangCFM} shown in the figures are presented without applying the projection step. }
  \label{fig:G2R_ABIDE_CN}
\end{figure}

\begin{figure}[ht]
  \centering
  \includegraphics[width=\linewidth]{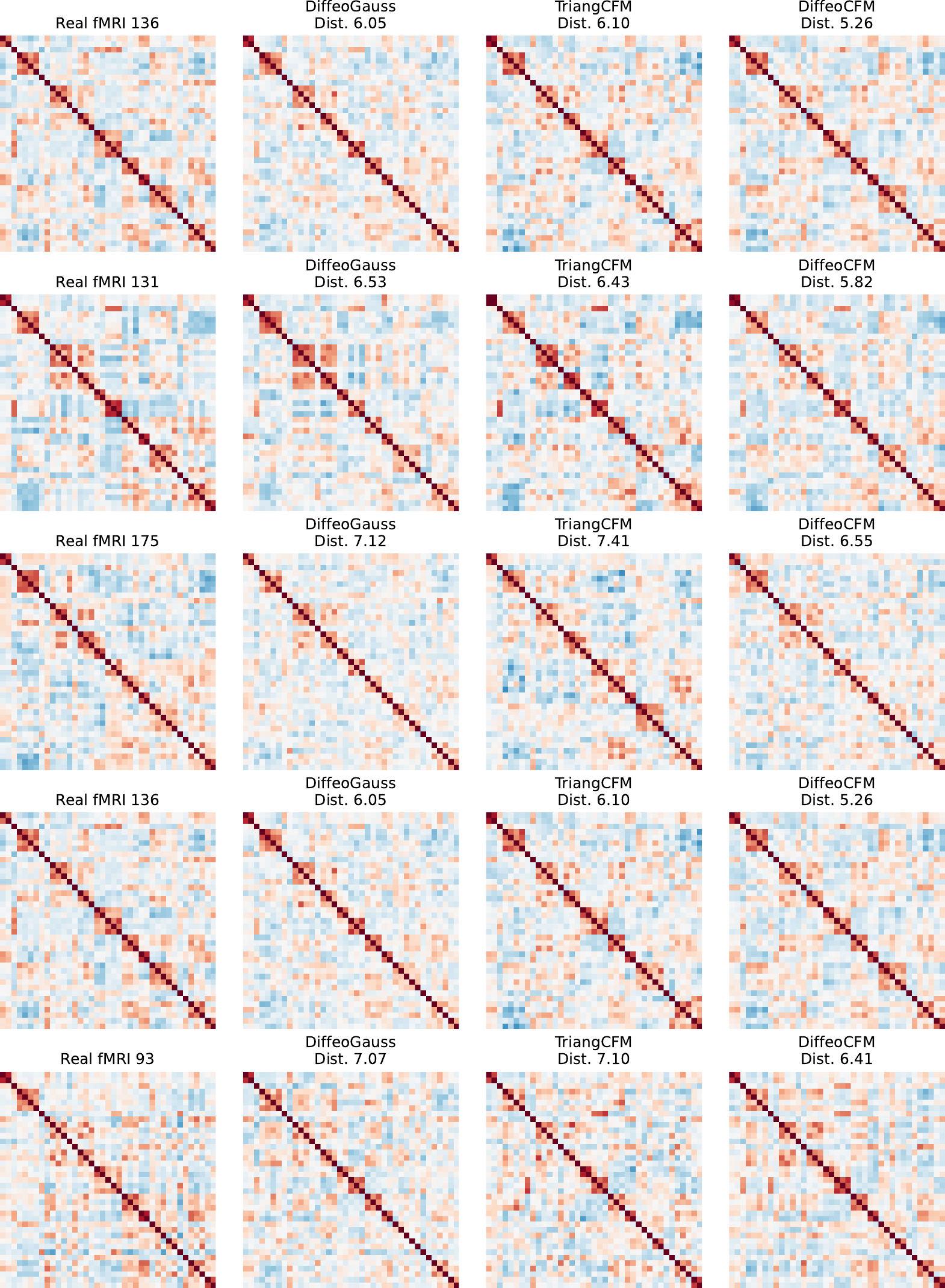}
  \caption{Nearest Generated Samples in Real Data Neighborhoods: ABIDE-nonCN Cohort. The matrices generated by \method{TriangCFM} shown in the figures are presented without applying the projection step. }
  \label{fig:G2R_ABIDE_nonCN}
\end{figure}

\begin{figure}[ht]
  \centering
  \includegraphics[width=\linewidth]{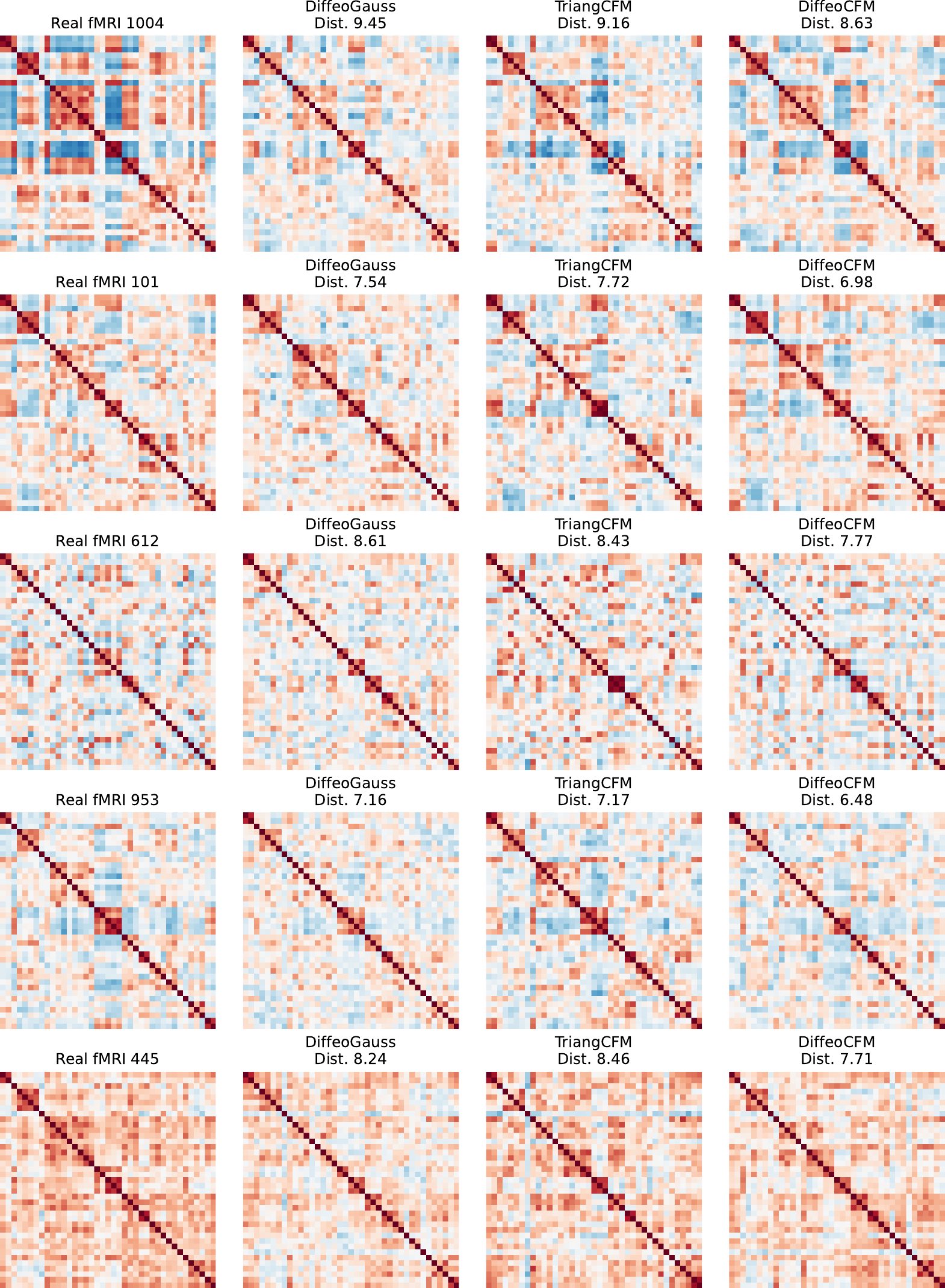}
  \caption{Nearest Generated Samples in Real Data Neighborhoods: OASIS3-CN Cohort. The matrices generated by \method{TriangCFM} shown in the figures are presented without applying the projection step. }
  \label{fig:G2R_OASIS3_CN}
\end{figure}

\begin{figure}[ht]
  \centering
  \includegraphics[width=\linewidth]{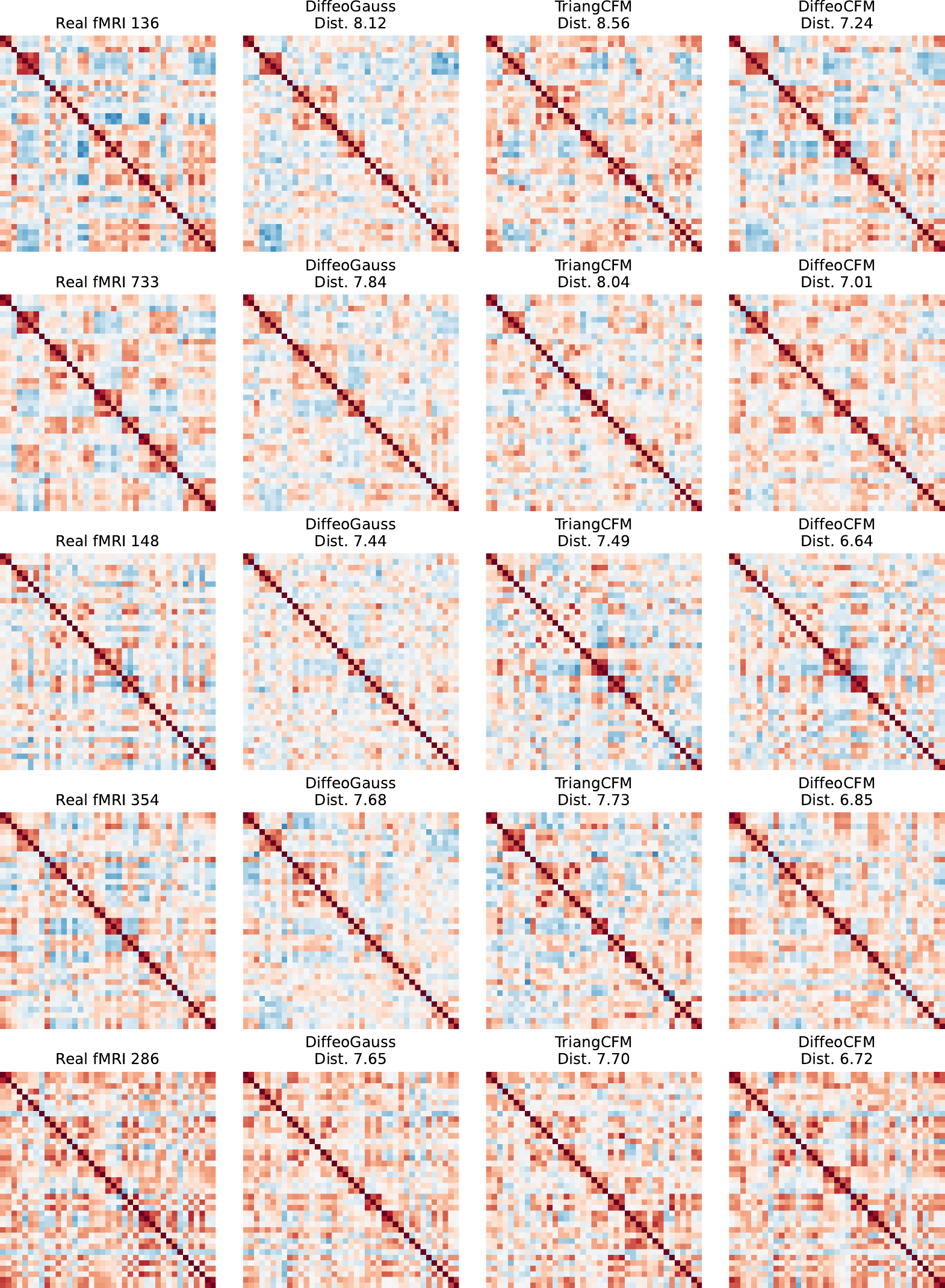}
  \caption{Nearest Generated Samples in Real Data Neighborhoods: OASIS3-nonCN Cohort. The matrices generated by \method{TriangCFM} shown in the figures are presented without applying the projection step. }
  \label{fig:G2R_OASIS3_nonCN}
\end{figure}

\begin{figure}[ht]
  \centering
  \includegraphics[width=\linewidth]{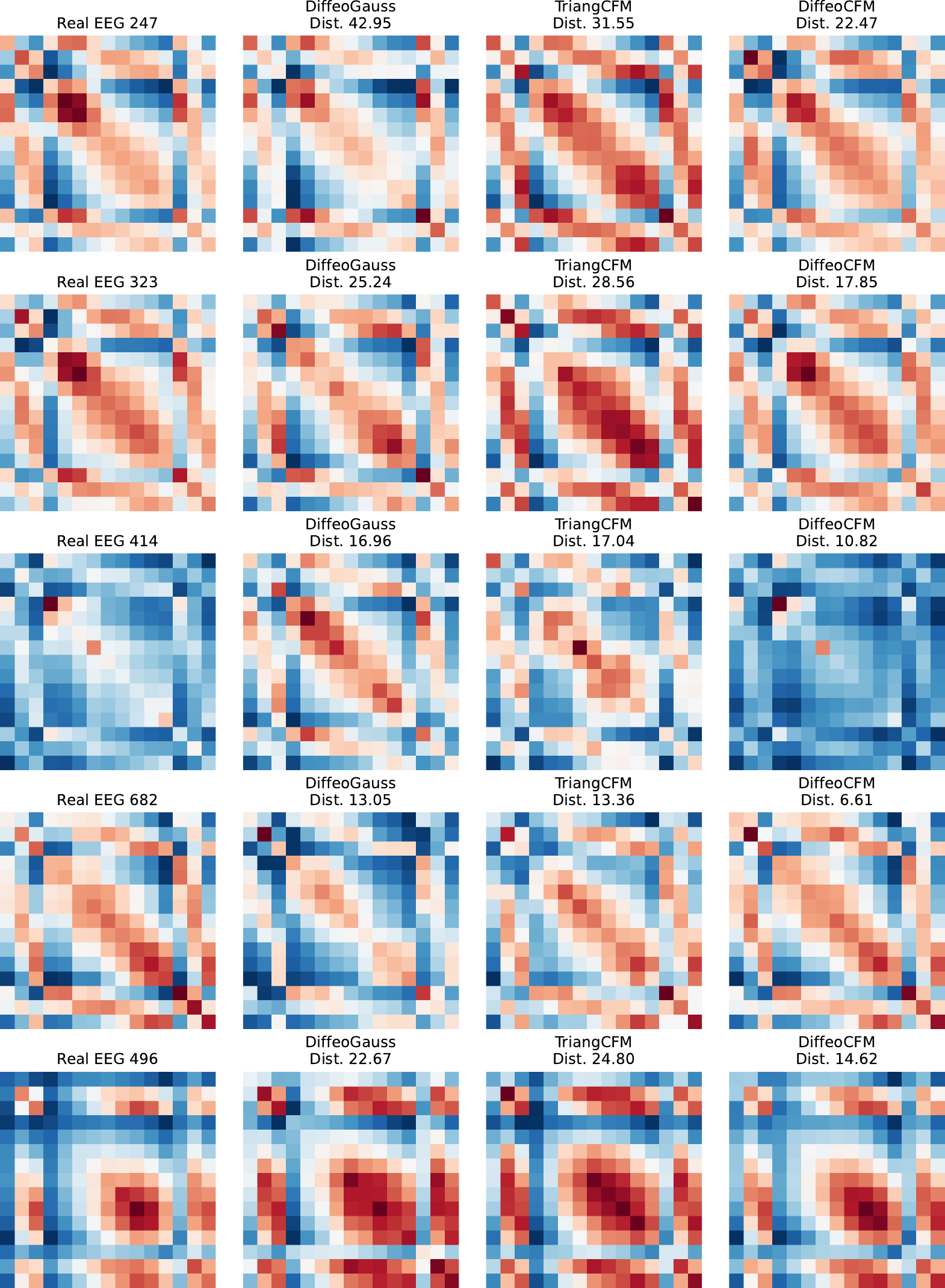}
  \caption{Nearest Generated Samples in Real Data Neighborhoods: BNCI2014-002-Right Hand.}
  \label{fig:G2R_2014_hand}
\end{figure}

\begin{figure}[ht]
  \centering
  \includegraphics[width=\linewidth]{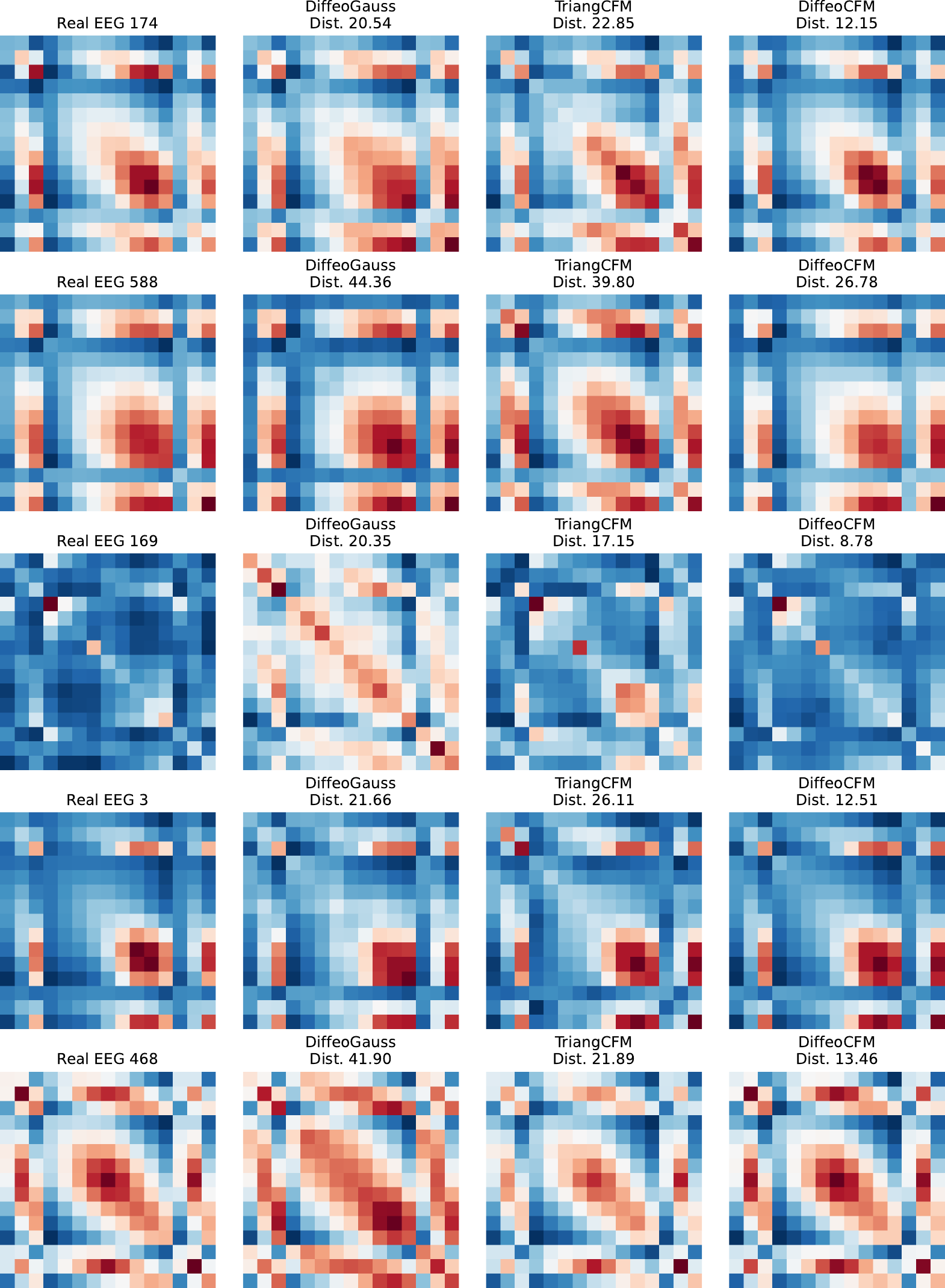}
  \caption{Nearest Generated Samples in Real Data Neighborhoods: BNCI2014-002-Feet.}
  \label{fig:G2R_2014_feet}
\end{figure}

\begin{figure}[ht]
  \centering
  \includegraphics[width=\linewidth]{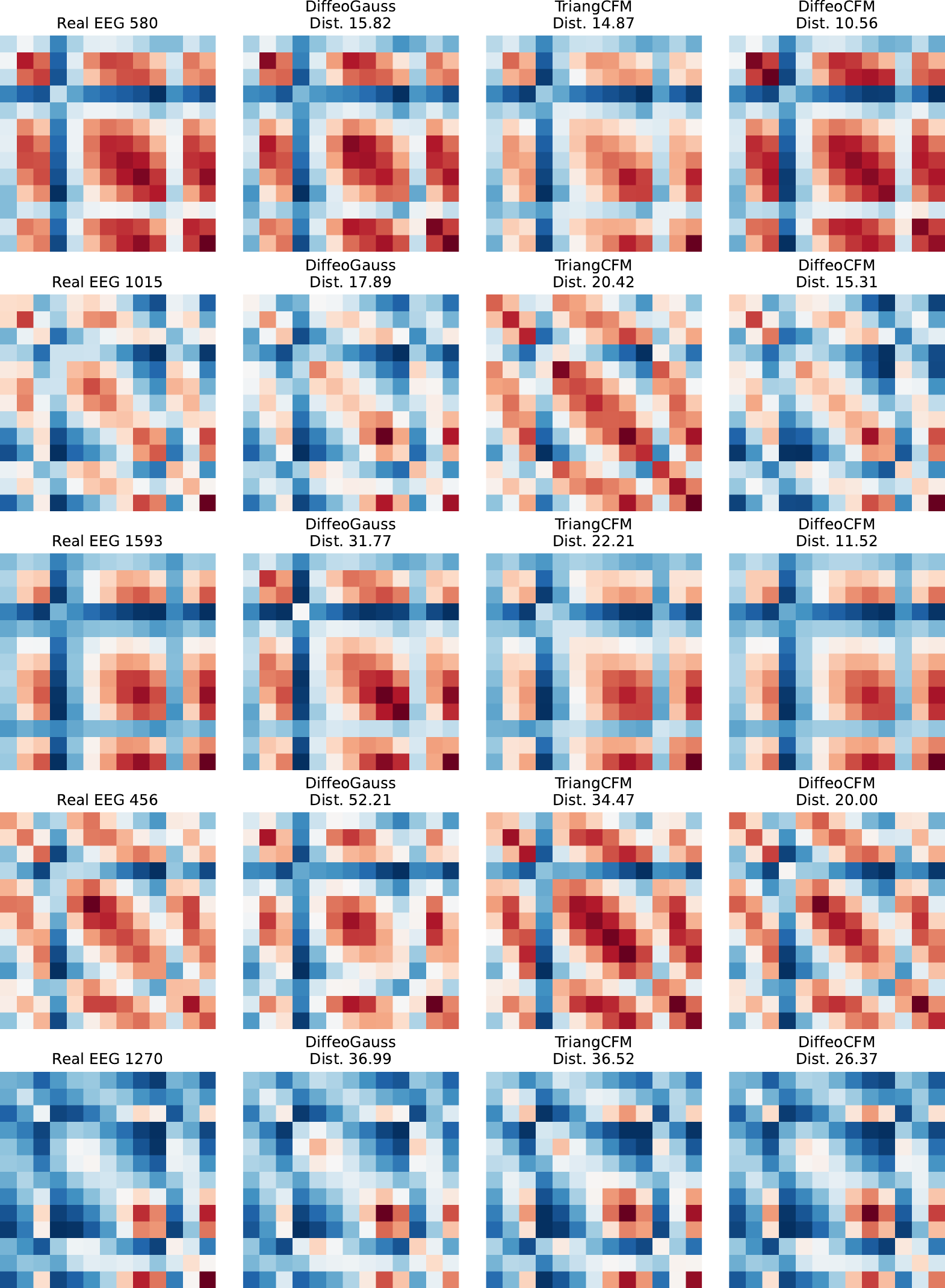}
  \caption{Nearest Generated Samples in Real Data Neighborhoods: BNCI2015-001-Right Hand.}
  \label{fig:G2R_2015_hand}
\end{figure}

\begin{figure}[ht]
  \centering
  \includegraphics[width=\linewidth]{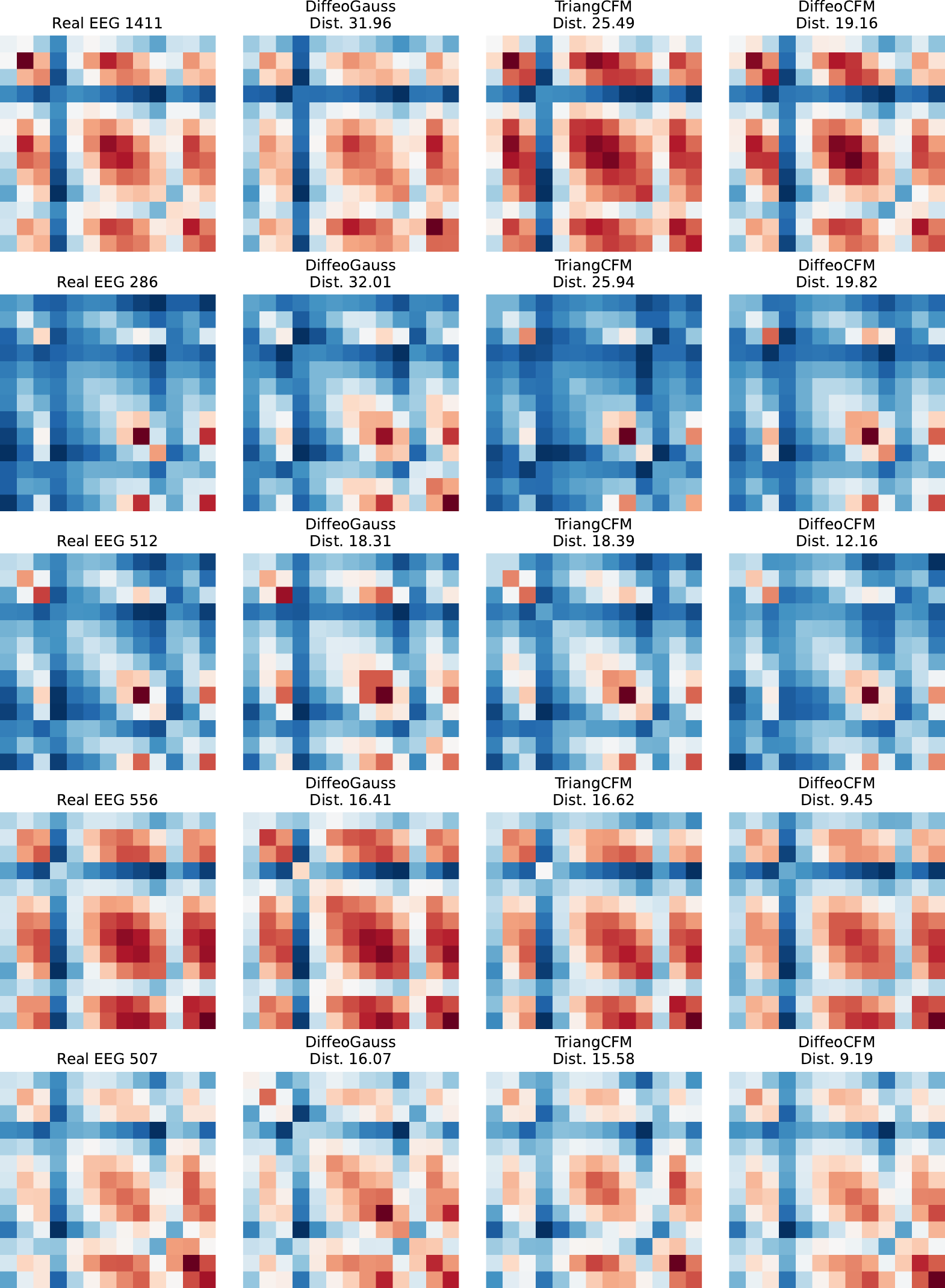}
  \caption{Nearest Generated Samples in Real Data Neighborhoods: BNCI2015-001-Feet.}
  \label{fig:G2R_2015_feet}
\end{figure}


\clearpage
\section*{NeurIPS Paper Checklist}

\begin{enumerate}

\item {\bf Claims}
    \item[] Question: Do the main claims made in the abstract and introduction accurately reflect the paper's contributions and scope?
    \item[] Answer: \answerYes{} 
    \item[] Justification: The abstract and \secref{sec:intro} clearly state the core contributions—namely, conditional flow matching on pullback manifolds, its instantiation for SPD and correlation matrices, and a large-scale neuroimaging benchmark. These claims are substantiated theoretically in \secref{sec:diffeocfm} and empirically in \secref{sec:experimental_setup} and \secref{sec:results}.
    \item[] Guidelines:
    \begin{itemize}
        \item The answer NA means that the abstract and introduction do not include the claims made in the paper.
        \item The abstract and/or introduction should clearly state the claims made, including the contributions made in the paper and important assumptions and limitations. A No or NA answer to this question will not be perceived well by the reviewers. 
        \item The claims made should match theoretical and experimental results, and reflect how much the results can be expected to generalize to other settings. 
        \item It is fine to include aspirational goals as motivation as long as it is clear that these goals are not attained by the paper. 
    \end{itemize}

\item {\bf Limitations}
    \item[] Question: Does the paper discuss the limitations of the work performed by the authors?
    \item[] Answer: \answerYes{} 
    \item[] Justification: A discussion of limitations is included in \secref{sec:conclusion}.
    \item[] Guidelines:
    \begin{itemize}
        \item The answer NA means that the paper has no limitation while the answer No means that the paper has limitations, but those are not discussed in the paper. 
        \item The authors are encouraged to create a separate "Limitations" section in their paper.
        \item The paper should point out any strong assumptions and how robust the results are to violations of these assumptions (e.g., independence assumptions, noiseless settings, model well-specification, asymptotic approximations only holding locally). The authors should reflect on how these assumptions might be violated in practice and what the implications would be.
        \item The authors should reflect on the scope of the claims made, e.g., if the approach was only tested on a few datasets or with a few runs. In general, empirical results often depend on implicit assumptions, which should be articulated.
        \item The authors should reflect on the factors that influence the performance of the approach. For example, a facial recognition algorithm may perform poorly when image resolution is low or images are taken in low lighting. Or a speech-to-text system might not be used reliably to provide closed captions for online lectures because it fails to handle technical jargon.
        \item The authors should discuss the computational efficiency of the proposed algorithms and how they scale with dataset size.
        \item If applicable, the authors should discuss possible limitations of their approach to address problems of privacy and fairness.
        \item While the authors might fear that complete honesty about limitations might be used by reviewers as grounds for rejection, a worse outcome might be that reviewers discover limitations that aren't acknowledged in the paper. The authors should use their best judgment and recognize that individual actions in favor of transparency play an important role in developing norms that preserve the integrity of the community. Reviewers will be specifically instructed to not penalize honesty concerning limitations.
    \end{itemize}

\item {\bf Theory assumptions and proofs}
    \item[] Question: For each theoretical result, does the paper provide the full set of assumptions and a complete (and correct) proof?
    \item[] Answer: \answerYes{} 
    \item[] Justification: All propositions (i.e., Propositions~\ref{thm:equiv_losses}, \ref{thm:ode_equiv}, \ref{prop:rk_equiv}) are clearly stated in \secref{sec:diffeocfm}. Each is accompanied by assumptions, and full proofs are deferred to the Appendix as recommended.
    \item[] Guidelines:
    \begin{itemize}
        \item The answer NA means that the paper does not include theoretical results. 
        \item All the theorems, formulas, and proofs in the paper should be numbered and cross-referenced.
        \item All assumptions should be clearly stated or referenced in the statement of any theorems.
        \item The proofs can either appear in the main paper or the supplemental material, but if they appear in the supplemental material, the authors are encouraged to provide a short proof sketch to provide intuition. 
        \item Inversely, any informal proof provided in the core of the paper should be complemented by formal proofs provided in appendix or supplemental material.
        \item Theorems and Lemmas that the proof relies upon should be properly referenced. 
    \end{itemize}

    \item {\bf Experimental result reproducibility}
    \item[] Question: Does the paper fully disclose all the information needed to reproduce the main experimental results of the paper to the extent that it affects the main claims and/or conclusions of the paper (regardless of whether the code and data are provided or not)?
    \item[] Answer: \answerYes{} 
    \item[] Justification: The datasets, preprocessing, model architecture, training procedure, and evaluation metrics are all detailed in \secref{sec:experimental_setup}, with full implementation and evaluation protocols described in the Appendix.
    \item[] Guidelines:
    \begin{itemize}
        \item The answer NA means that the paper does not include experiments.
        \item If the paper includes experiments, a No answer to this question will not be perceived well by the reviewers: Making the paper reproducible is important, regardless of whether the code and data are provided or not.
        \item If the contribution is a dataset and/or model, the authors should describe the steps taken to make their results reproducible or verifiable. 
        \item Depending on the contribution, reproducibility can be accomplished in various ways. For example, if the contribution is a novel architecture, describing the architecture fully might suffice, or if the contribution is a specific model and empirical evaluation, it may be necessary to either make it possible for others to replicate the model with the same dataset, or provide access to the model. In general. releasing code and data is often one good way to accomplish this, but reproducibility can also be provided via detailed instructions for how to replicate the results, access to a hosted model (e.g., in the case of a large language model), releasing of a model checkpoint, or other means that are appropriate to the research performed.
        \item While NeurIPS does not require releasing code, the conference does require all submissions to provide some reasonable avenue for reproducibility, which may depend on the nature of the contribution. For example
        \begin{enumerate}
            \item If the contribution is primarily a new algorithm, the paper should make it clear how to reproduce that algorithm.
            \item If the contribution is primarily a new model architecture, the paper should describe the architecture clearly and fully.
            \item If the contribution is a new model (e.g., a large language model), then there should either be a way to access this model for reproducing the results or a way to reproduce the model (e.g., with an open-source dataset or instructions for how to construct the dataset).
            \item We recognize that reproducibility may be tricky in some cases, in which case authors are welcome to describe the particular way they provide for reproducibility. In the case of closed-source models, it may be that access to the model is limited in some way (e.g., to registered users), but it should be possible for other researchers to have some path to reproducing or verifying the results.
        \end{enumerate}
    \end{itemize}

\item {\bf Open access to data and code}
    \item[] Question: Does the paper provide open access to the data and code, with sufficient instructions to faithfully reproduce the main experimental results, as described in supplemental material?
    \item[] Answer: \answerYes{} 
    \item[] Justification: All datasets used in this study (ADNI, ABIDE, OASIS-3, BNCI2014-002, BNCI2015-001) are publicly available but fMRI ones (ADNI, ABIDE, OASIS-3) require registration or data use agreements.
    We do not redistribute these datasets, but we provide detailed instructions and references in the supplementary material to guide users on how to access and preprocess them using standard libraries (e.g., fMRIPrep, Nilearn).
    Anonymized code, including scripts for training, evaluation, and reproducing experimental results, is included in the supplementary material under a permissive open-source license.
    \item[] Guidelines:
    \begin{itemize}
        \item The answer NA means that paper does not include experiments requiring code.
        \item Please see the NeurIPS code and data submission guidelines (\url{https://nips.cc/public/guides/CodeSubmissionPolicy}) for more details.
        \item While we encourage the release of code and data, we understand that this might not be possible, so “No” is an acceptable answer. Papers cannot be rejected simply for not including code, unless this is central to the contribution (e.g., for a new open-source benchmark).
        \item The instructions should contain the exact command and environment needed to run to reproduce the results. See the NeurIPS code and data submission guidelines (\url{https://nips.cc/public/guides/CodeSubmissionPolicy}) for more details.
        \item The authors should provide instructions on data access and preparation, including how to access the raw data, preprocessed data, intermediate data, and generated data, etc.
        \item The authors should provide scripts to reproduce all experimental results for the new proposed method and baselines. If only a subset of experiments are reproducible, they should state which ones are omitted from the script and why.
        \item At submission time, to preserve anonymity, the authors should release anonymized versions (if applicable).
        \item Providing as much information as possible in supplemental material (appended to the paper) is recommended, but including URLs to data and code is permitted.
    \end{itemize}

\item {\bf Experimental setting/details}
    \item[] Question: Does the paper specify all the training and test details (e.g., data splits, hyperparameters, how they were chosen, type of optimizer, etc.) necessary to understand the results?
    \item[] Answer: \answerYes{} 
    \item[] Justification: All dataset splits, classifier architectures, hyperparameters, and evaluation protocols are specified in \secref{sec:datasets} and \secref{subsec:metrics}, with further detail provided in the Appendix.
    \item[] Guidelines:
    \begin{itemize}
        \item The answer NA means that the paper does not include experiments.
        \item The experimental setting should be presented in the core of the paper to a level of detail that is necessary to appreciate the results and make sense of them.
        \item The full details can be provided either with the code, in appendix, or as supplemental material.
    \end{itemize}

\item {\bf Experiment statistical significance}
    \item[] Question: Does the paper report error bars suitably and correctly defined or other appropriate information about the statistical significance of the experiments?
    \item[] Answer: \answerYes{} 
    \item[] Justification: Results in \tabref{tab:results} are reported as mean $\pm$ std over multiple seeds/splits. The experimental setting in \secref{sec:datasets} explains variability factors, and significance trends are analyzed in \secref{sec:results}.
    \item[] Guidelines:
    \begin{itemize}
        \item The answer NA means that the paper does not include experiments.
        \item The authors should answer "Yes" if the results are accompanied by error bars, confidence intervals, or statistical significance tests, at least for the experiments that support the main claims of the paper.
        \item The factors of variability that the error bars are capturing should be clearly stated (for example, train/test split, initialization, random drawing of some parameter, or overall run with given experimental conditions).
        \item The method for calculating the error bars should be explained (closed form formula, call to a library function, bootstrap, etc.)
        \item The assumptions made should be given (e.g., Normally distributed errors).
        \item It should be clear whether the error bar is the standard deviation or the standard error of the mean.
        \item It is OK to report 1-sigma error bars, but one should state it. The authors should preferably report a 2-sigma error bar than state that they have a 96\% CI, if the hypothesis of Normality of errors is not verified.
        \item For asymmetric distributions, the authors should be careful not to show in tables or figures symmetric error bars that would yield results that are out of range (e.g. negative error rates).
        \item If error bars are reported in tables or plots, The authors should explain in the text how they were calculated and reference the corresponding figures or tables in the text.
    \end{itemize}

\item {\bf Experiments compute resources}
    \item[] Question: For each experiment, does the paper provide sufficient information on the computer resources (type of compute workers, memory, time of execution) needed to reproduce the experiments?
    \item[] Answer: \answerYes{} 
    \item[] Justification: The Appendix details the compute setup used (cpu, gpu and time of execution).
    \item[] Guidelines:
    \begin{itemize}
        \item The answer NA means that the paper does not include experiments.
        \item The paper should indicate the type of compute workers CPU or GPU, internal cluster, or cloud provider, including relevant memory and storage.
        \item The paper should provide the amount of compute required for each of the individual experimental runs as well as estimate the total compute. 
        \item The paper should disclose whether the full research project required more compute than the experiments reported in the paper (e.g., preliminary or failed experiments that didn't make it into the paper). 
    \end{itemize}
    
\item {\bf Code of ethics}
    \item[] Question: Does the research conducted in the paper conform, in every respect, with the NeurIPS Code of Ethics \url{https://neurips.cc/public/EthicsGuidelines}?
    \item[] Answer: \answerYes{} 
    \item[] Justification: All datasets used are publicly available and widely used in the neuroscience community. No personally identifiable information is used. Anonymity and licensing are respected.
    \item[] Guidelines:
    \begin{itemize}
        \item The answer NA means that the authors have not reviewed the NeurIPS Code of Ethics.
        \item If the authors answer No, they should explain the special circumstances that require a deviation from the Code of Ethics.
        \item The authors should make sure to preserve anonymity (e.g., if there is a special consideration due to laws or regulations in their jurisdiction).
    \end{itemize}

\item {\bf Broader impacts}
    \item[] Question: Does the paper discuss both potential positive societal impacts and negative societal impacts of the work performed?
    \item[] Answer: \answerYes{} 
    \item[] Justification: We include a discussion of broader impacts of this study in \secref{sec:conclusion}.
    \item[] Guidelines:
    \begin{itemize}
        \item The answer NA means that there is no societal impact of the work performed.
        \item If the authors answer NA or No, they should explain why their work has no societal impact or why the paper does not address societal impact.
        \item Examples of negative societal impacts include potential malicious or unintended uses (e.g., disinformation, generating fake profiles, surveillance), fairness considerations (e.g., deployment of technologies that could make decisions that unfairly impact specific groups), privacy considerations, and security considerations.
        \item The conference expects that many papers will be foundational research and not tied to particular applications, let alone deployments. However, if there is a direct path to any negative applications, the authors should point it out. For example, it is legitimate to point out that an improvement in the quality of generative models could be used to generate deepfakes for disinformation. On the other hand, it is not needed to point out that a generic algorithm for optimizing neural networks could enable people to train models that generate Deepfakes faster.
        \item The authors should consider possible harms that could arise when the technology is being used as intended and functioning correctly, harms that could arise when the technology is being used as intended but gives incorrect results, and harms following from (intentional or unintentional) misuse of the technology.
        \item If there are negative societal impacts, the authors could also discuss possible mitigation strategies (e.g., gated release of models, providing defenses in addition to attacks, mechanisms for monitoring misuse, mechanisms to monitor how a system learns from feedback over time, improving the efficiency and accessibility of ML).
    \end{itemize}
    
\item {\bf Safeguards}
    \item[] Question: Does the paper describe safeguards that have been put in place for responsible release of data or models that have a high risk for misuse (e.g., pretrained language models, image generators, or scraped datasets)?
    \item[] Answer:  \answerNA{} 
    \item[] Justification: The models and data used in this work pose no identifiable risks for misuse.
    The generative models are restricted to structured brain connectivity matrices derived from neuroimaging datasets and are intended solely for research use.
    They do not produce personally identifiable data, and the underlying datasets (ADNI, ABIDE, OASIS-3, BNCI) are publicly released under controlled access and ethical review.
    \item[] Guidelines:
    \begin{itemize}
        \item The answer NA means that the paper poses no such risks.
        \item Released models that have a high risk for misuse or dual-use should be released with necessary safeguards to allow for controlled use of the model, for example by requiring that users adhere to usage guidelines or restrictions to access the model or implementing safety filters. 
        \item Datasets that have been scraped from the Internet could pose safety risks. The authors should describe how they avoided releasing unsafe images.
        \item We recognize that providing effective safeguards is challenging, and many papers do not require this, but we encourage authors to take this into account and make a best faith effort.
    \end{itemize}

\item {\bf Licenses for existing assets}
    \item[] Question: Are the creators or original owners of assets (e.g., code, data, models), used in the paper, properly credited and are the license and terms of use explicitly mentioned and properly respected?
    \item[] Answer: \answerYes{} 
    \item[] Justification: All datasets and baselines used are cited with appropriate references and comply with their respective licenses.
    \item[] Guidelines:
    \begin{itemize}
        \item The answer NA means that the paper does not use existing assets.
        \item The authors should cite the original paper that produced the code package or dataset.
        \item The authors should state which version of the asset is used and, if possible, include a URL.
        \item The name of the license (e.g., CC-BY 4.0) should be included for each asset.
        \item For scraped data from a particular source (e.g., website), the copyright and terms of service of that source should be provided.
        \item If assets are released, the license, copyright information, and terms of use in the package should be provided. For popular datasets, \url{paperswithcode.com/datasets} has curated licenses for some datasets. Their licensing guide can help determine the license of a dataset.
        \item For existing datasets that are re-packaged, both the original license and the license of the derived asset (if it has changed) should be provided.
        \item If this information is not available online, the authors are encouraged to reach out to the asset's creators.
    \end{itemize}

\item {\bf New assets}
    \item[] Question: Are new assets introduced in the paper well documented and is the documentation provided alongside the assets?
    \item[] Answer: \answerNA{} 
    \item[] Justification: No new datasets or pretrained models are introduced in the paper.
    \item[] Guidelines:
    \begin{itemize}
        \item The answer NA means that the paper does not release new assets.
        \item Researchers should communicate the details of the dataset/code/model as part of their submissions via structured templates. This includes details about training, license, limitations, etc. 
        \item The paper should discuss whether and how consent was obtained from people whose asset is used.
        \item At submission time, remember to anonymize your assets (if applicable). You can either create an anonymized URL or include an anonymized zip file.
    \end{itemize}

\item {\bf Crowdsourcing and research with human subjects}
    \item[] Question: For crowdsourcing experiments and research with human subjects, does the paper include the full text of instructions given to participants and screenshots, if applicable, as well as details about compensation (if any)? 
    \item[] Answer: \answerNA{} 
    \item[] Justification: No research with human subjects or crowdsourcing was conducted by the authors. All data come from existing, publicly released studies.
    \item[] Guidelines:
    \begin{itemize}
        \item The answer NA means that the paper does not involve crowdsourcing nor research with human subjects.
        \item Including this information in the supplemental material is fine, but if the main contribution of the paper involves human subjects, then as much detail as possible should be included in the main paper. 
        \item According to the NeurIPS Code of Ethics, workers involved in data collection, curation, or other labor should be paid at least the minimum wage in the country of the data collector. 
    \end{itemize}

\item {\bf Institutional review board (IRB) approvals or equivalent for research with human subjects}
    \item[] Question: Does the paper describe potential risks incurred by study participants, whether such risks were disclosed to the subjects, and whether Institutional Review Board (IRB) approvals (or an equivalent approval/review based on the requirements of your country or institution) were obtained?
    \item[] Answer: \answerNA{} 
    \item[] Justification: No new human subject data were collected. All datasets used had IRB approvals from their respective institutions.
    \item[] Guidelines:
    \begin{itemize}
        \item The answer NA means that the paper does not involve crowdsourcing nor research with human subjects.
        \item Depending on the country in which research is conducted, IRB approval (or equivalent) may be required for any human subjects research. If you obtained IRB approval, you should clearly state this in the paper. 
        \item We recognize that the procedures for this may vary significantly between institutions and locations, and we expect authors to adhere to the NeurIPS Code of Ethics and the guidelines for their institution. 
        \item For initial submissions, do not include any information that would break anonymity (if applicable), such as the institution conducting the review.
    \end{itemize}

\item {\bf Declaration of LLM usage}
    \item[] Question: Does the paper describe the usage of LLMs if it is an important, original, or non-standard component of the core methods in this research? Note that if the LLM is used only for writing, editing, or formatting purposes and does not impact the core methodology, scientific rigorousness, or originality of the research, declaration is not required.
    \item[] Answer: \answerNA{} 
    \item[] Justification: No LLMs were used in the development or core methods of this paper.
    \item[] Guidelines:
    \begin{itemize}
        \item The answer NA means that the core method development in this research does not involve LLMs as any important, original, or non-standard components.
        \item Please refer to our LLM policy (\url{https://neurips.cc/Conferences/2025/LLM}) for what should or should not be described.
    \end{itemize}

\end{enumerate}

\end{document}